\def\eqref#1{equation~\ref{#1}}
\def\1{\bm{1}}
\DeclareMathAlphabet{\mathsfit}{\encodingdefault}{\sfdefault}{m}{sl}
\SetMathAlphabet{\mathsfit}{bold}{\encodingdefault}{\sfdefault}{bx}{n}
\setlist[itemize]{noitemsep, topsep=0pt}
\theoremstyle{plain}
\newtheorem{theorem}{Theorem}[section]
\newtheorem{proposition}[theorem]{Proposition}
\theoremstyle{definition}
\theoremstyle{remark}
\DeclareMathOperator{\sgn}{sgn}
\DeclareMathOperator{\lne}{line}
\title{The Shape of Attraction in UMAP: Exploring the Embedding Forces in Dimensionality Reduction}
\author{\name Mohammad Tariqul Islam \email mhdtariq@mit.edu \\
      \addr Media Lab, MIT \\
      Electrical and Computer Engineering, Princeton University
      \AND
      \name Jason W. Fleischer \email jasonf@princeton.edu \\
      \addr Electrical and Computer Engineering, Princeton University
      }
\begin{document}

\maketitle

\begin{abstract}
Uniform manifold approximation and projection (UMAP) is among the most popular neighbor embedding methods. The method samples pairs of point indices according to similarities in the high-dimensional space, and applies attractive and repulsive forces to their coordinates in the low-dimensional embedding. In this paper, we analyze the forces to reveal their effects on cluster formations and visualization, and compare UMAP to its contemporaries. Repulsion emphasizes differences, controlling cluster boundaries and inter-cluster distance. Attraction is more subtle, as attractive tension between points can manifest simultaneously as attraction and repulsion in the lower-dimensional mapping. This explains the need for learning rate annealing and motivates the different treatments between attractive and repulsive terms. Moreover, by modifying attraction, we improve the consistency of cluster formation under random initialization. Overall, our analysis provides a mechanistic understanding of UMAP and related embedding methods.
\end{abstract}

\section{Introduction}
Modern applications routinely generate high-dimensional data.
Dimensionality reduction (DR) techniques have emerged as tools for exploratory analysis of such data by visualizing the underlying structure. 
The most popular methods, $t$-distributed stochastic neighbor embedding~\citep{maaten2008visualizing} and uniform manifold approximation and projection (UMAP)~\citep{mcinnes2018umap} are grounded in the attraction-repulsion dynamics that bring similar data points closer while pushing dissimilar ones apart. 
As unsupervised algorithms, these do not rely on labeled data; instead, they identify and preserve the intrinsic structure of high-dimensional data by leveraging local (attractive) and global (repulsive) relationships (forces). This makes these algorithms particularly well-suited for tasks such as clustering~\citep{becht2019dimensionality}, exploratory data analysis~\citep{fleischer2020late}, anomaly detection in semiconductor manufacturing~\citep{fan2021data}, visual search~\citep{gonzalez2023landscape}, time series analysis~\citep{altin2024exploring}, studying representation convergence~\citep{huh2024platonic}, and outlier image detection~\citep{islam2024outlier}, where visualizing hidden patterns in unlabeled data is critical and meaningful. By learning the embeddings in a data-driven, label-free manner, DR exemplifies the power of unsupervised methods to distill complex data into easily interpretable forms.

Building upon the attraction-repulsion principle, newer methods have emerged~\citep{amid2019trimap,agrawal2021minimum,wang2021understanding,narayan2021assessing,yang2022interpretable,wang2024dimension,kury2025dreams}, each designed to emphasize specific aspects of the data. 
Despite their relevance in diverse applications, these methods often rely on heuristic practices that may fail to give meaningful interpretations. Moreover, DR introduces distortions that are unavoidable~\citep{chari2023specious}. Thus, it is imperative to have a deeper understanding of the algorithms so that practitioners can provide better interpretations of the embeddings, avoid spurious structures, and optimize performance. In practice, these algorithms achieve compact clusters using a variety of techniques, including specific initialization, learning rate schedule, and kernel function tuning. However, the underlying dynamics of the attractive and repulsive forces, responsible for cluster formation, have not been thoroughly investigated. Furthermore, the essential tunable parameters are concealed within abstract functional forms, making it harder to explain the algorithms.

In this paper, we decompose the forces into their constituent parts and extract the functional shapes of attraction and repulsion. 
We find that the necessity of learning rate annealing, the challenge of providing consistent output under random initialization, and the origin of cluster formation rely on attraction. 
Repulsive forces primarily govern inter-cluster distances. 
Our specific contributions are:
\begin{enumerate}[nolistsep]
    \item We formulate attraction and repulsion shapes from the attractive and repulsive forces, establish the conditions for contraction and expansion of distance, and provide a fresh perspective for these algorithms (Section~\ref{sec:arshape}).
    \item We show that the attraction shape of UMAP causes the counterintuitive concept of both contraction and expansion of distance. Comparing attraction shapes of different algorithms, we discuss how attraction influences the learning rate annealing scheme (Section~\ref{sec:umap_anal}).
    \item We compare attraction and repulsion shapes of UMAP, Parametric UMAP, and NEG-$t$-SNE,  unveil the similarities and distinctions among them, and characterize the stability of the algorithms (Section~\ref{sec:neg}).
    \item We modify the attraction shape to improve the consistency of embedding under random initialization. This indicates the encoding of a unique structure (Section~\ref{sec:randomness}).
    \item Analyzing repulsion shapes, we provide a deeper understanding of cluster formation and regulating inter-cluster distance (Section~\ref{sec:rep_anal}).  
\end{enumerate}
We center the main text on UMAP, now a de facto standard in many fields, and bring in other algorithms as needed for comparison and context. We organize the paper as follows. Section~\ref{sec:related_works} reviews related work. Section~\ref{sec:umapdef} introduces the dimensionality reduction algorithm and fixes notation. In Section~\ref{sec:arshape}, we develop the tools needed to characterize attraction and repulsion shapes. Section~\ref{sec:umap_anal} then analyzes UMAP through the lens of these shapes. Finally, Section~\ref{sec:discussion} concludes with a discussion and directions for future work. 

The datasets used in the paper are described in Appendix~\ref{sec:datasets}. The metrics used to quantify embedding quality are described in Appendix~\ref{sec:metrics_all}. For a detailed discussion of algorithms other than UMAP, see Appendix~\ref{sec:alternate_algorithms}. The codes used in the research are available at \href{https://github.com/tariqul-islam/explaining_neighbor_embedding}{https://github.com/tariqul-islam/explaining\_neighbor\_embedding}.

\section{Related Works}\label{sec:related_works}
The origin of modern iterative graph-based neighbor embedding algorithms can be traced to stochastic neighbor embedding (SNE)~\citep{hinton2002stochastic} and its extension using the t-distribution ($t$-SNE)~\citep{maaten2008visualizing}.
Both methods use a dense graph in which each point in a dataset has a pairwise relation with all the others, regardless of whether they are similar to each other or not.
Moreover, the weights of the graphs are normalized to give a notion of probability distribution.
Other concurrent methods, including locally linear embedding~\citep{roweis2000nonlinear} and Laplacian Eigenmaps (LE)~\citep{belkin2002laplacian}, used a $k$-nearest neighbor ($k$-NN) graph of pairwise interaction.
Known as spectral methods, these algorithms rely on Eigenvalue decomposition.
Subsequent work by Tang et al.~\citep{tang2016visualizing} incorporated the $k$-NN graph in the iterative approach and removed normalization in the lower dimension.
This approach was further extended by~\cite{mcinnes2018umap} in UMAP, where the normalization step was removed altogether (both in high and low dimension) and the embedding was obtained using pairwise interactions alone.
The optimization steps use an explicit attractive force to preserve the local neighborhood and a repulsive force to keep dissimilar points apart. Building on these foundations, methods such as PaCMAP~\citep{wang2021understanding} and NEG-$t$-SNE~\citep{damrich2022t} have been proposed. For a recent survey of methods, see~\cite{de2025low}.

There has been considerable progress in understanding and explaining the relationship among these algorithms. 
An early analysis of SNE found that if the data is well-clustered in the original space, then they are well-clustered in the embedding space~\citep{shaham2017stochastic}. 
A similar analysis for $t$-SNE by~\cite{linderman2019clustering} showed that the number of clusters in the embedding space is a lower bound on the number of clusters in the original space.  This was followed up in further characterization~\citep{arora2018analysis,cai2021theoretical,linderman2022dimensionality}. 
Since $t$-SNE and UMAP originate from the same underlying framework (but with drastically different visualizations), a major undertaking in the literature has been to find the connection between them~\citep{bohm2020unifying,damrich2022t,draganov2023actup}.
\cite{bohm2020unifying} theorized that methods like Laplacian Eigenmap, ForceAtlas2~\citep{jacomy2014forceatlas2}, UMAP, and $t$-SNE are all samples from the same underlying spectrum. 
Indeed, LE and $t$-SNE's are connected by the early exaggeration phase~\citep{cai2021theoretical}. The connection between UMAP and $t$-SNE can be related through contrastive estimation~\citep{damrich2022t}. 
Around the same time,~\cite{hu2022your} independently discovered the relation of contrastive learning and SNE. Recent approaches offer a probabilistic perspective~\citep{ravuri2023dimensionality,ravuri2024towards}, employ kernel techniques~\citep{draganov2023unexplainable}, and utilize information geometry~\citep{kolpakov2023information} to explain dimensionality reduction.

\section{Uniform Manifold Approximation and Projection}\label{sec:umapdef}

UMAP constructs a high-dimensional graph of the dataset $X=\{x_i\in R^n|i=1,\dots,N\}$ by using a pairwise relation: $p_{i,j}=f_h(d(x_i,x_j))$, typically, $\in[0,1]$, where, $f_h$ is the high dimensional affinity function and $d(\cdot,\cdot)$ is a distance metric (for details, see Appendix~\ref{supp:umap_graph}).

The graph of the low-dimensional representation $Y=\{y_i\in\mathbb{R}^d|i=1,\dots,N\}$ is given by a differentiable function
\begin{align}
    q_{ij} = 
    \frac{1}{1+a(||y_i-y_j||_2^2)^b}, \label{eq:low_dim_umap}
\end{align}
where the parameters $a$ and $b$ determine the density of the mapping and are chosen by fitting $q_{ij}$ to 
\begin{align}
    \Psi(||y_i-y_j||_2) = 
    \begin{cases}
    1 & \text{if } ||y_i-y_j||_2<m_d \\
    \exp(-(||y_i-y_j||_2 - m_d)) & \text{otherwise}
    \end{cases}, \label{eq:mdeq}
\end{align}
where $m_d$ regulates the distance between the two nearest low-dimensional points.\\
UMAP aims to minimize the following cross-entropy loss function:
\begin{align}
    \mathcal{L} = 
    \sum_{i,j} (- p_{ij} \log\left( {q_{ij}} \right) -
    \left(1-p_{ij}\right) \log\left( {1-q_{ij}} \right) ). \label{eq:umap_loss_fn}
\end{align}\\
The first term provides an attractive force and the second term provides a repulsive force. Instead of optimizing every point in each iteration, UMAP takes the negative sampling approach~\citep{mikolov2013distributed,tang2016visualizing}. For each edge with $p_{ij}>0$, named a positive edge, several edges are sampled randomly, named negative edges. 
The attractive force is applied on the positive edge:
\begin{align}
    y_i^{t+1} &= y_i^{t} + \lambda \nabla_{y_i^t} \log(q_{ij}), \label{eq:grad1}\\
    y_j^{t+1} &= y_j^{t} + \lambda \nabla_{y_j^t} \log(q_{ij}), \label{eq:grad2}
\end{align}
and the repulsive force is applied on the negative edges:
\begin{align}
    y_i^{t+1} = y_i^{t} + \lambda \nabla_{y_i^t} \log(1-q_{ij}), \label{eq:grad3}
\end{align}
where $\lambda (>0)$ is the learning rate and $t$ is the step number. Note that $y_j$ is not updated for negative edges. For a detailed analysis of UMAP's loss, see~\citep{damrich2021umap}. Other algorithms follow a similar optimization scheme with different loss functions.

\begin{figure}[t]
    \centering
    \includegraphics[width=0.6\linewidth]{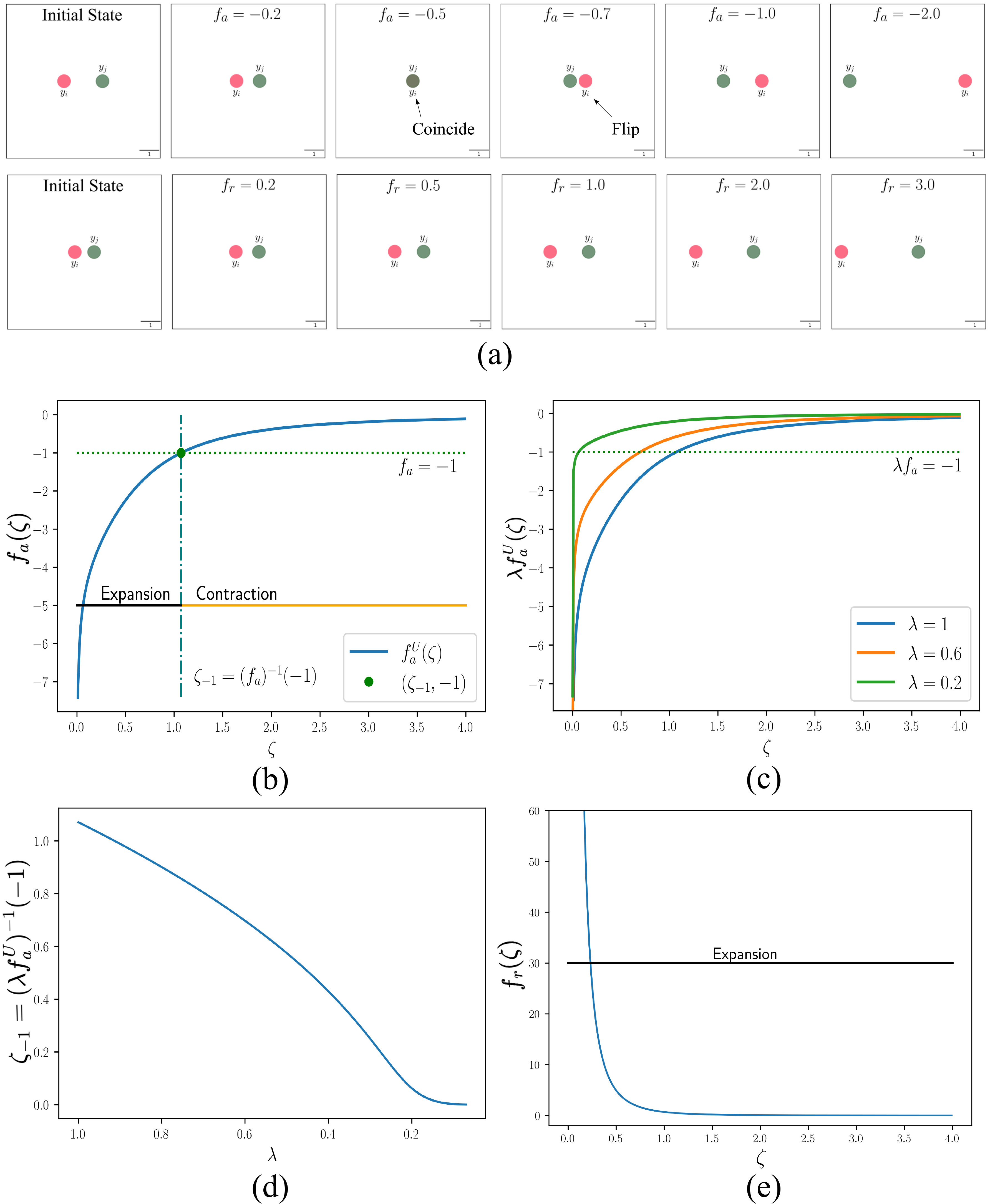}
    \caption{Attraction and repulsion shapes in UMAP. (a) Effect of different values of $f_a$ (top) and $f_r$ (bottom) on a pair. (b) Attraction shape of UMAP. (c) Attraction term scaled by the learning rate, $\lambda f_a$, for various learning rates $\lambda$. (d) Minimum distance for contraction ($\zeta_{-1}$) as $\lambda$ decreases. (e) Repulsion shape of UMAP. Default UMAP parameters: $a=1.58$ and $b=0.89$.}
    \label{fig:atr_repl_shape}
\end{figure}

\section{Attraction and Repulsion Shapes}\label{sec:arshape}

The action of the updates (\ref{eq:grad1}-\ref{eq:grad3}) can be simplified by decomposing the gradients $\nabla_{y_i^t}\log(q_{ij})$ ($\nabla_{y_i^{t}} \log(1-q_{ij})$) into a scalar coefficient dependent on the distance $||y_i-y_j||_2$ acting on the vector $(y_i-y_j)$.  We call this scalar coefficient the attraction (repulsion) shape. While we use UMAP as a specific example, this formalism applies to any method that relies on attraction and repulsion.

By writing $\nabla_{y_i^{t}} \log(q_{ij})=f_a(\zeta^{t}) (y_i^{t}-y_j^{t})$, where $\zeta=||y_i-y_j||_2$, we can update the equations of a positive edge as
\begin{align}
    y_i^{t+1} &= y_i^{t} + \lambda f_a(\zeta^{t}) (y_i^{t}-y_j^{t}), ~\label{eq:umap_attr_dynamics1} \\
    y_j^{t+1} &= y_j^{t} - \lambda f_a(\zeta^{t}) (y_i^{t}-y_j^{t}). ~\label{eq:umap_attr_dynamics2}
\end{align}
Here, $f_a:\mathbb{R}_{\geq 0}\to\mathbb{R}_{\leq 0}$ is the attraction shape, and we use the fact that for Euclidean metric, $\nabla_{y_i^t}\log(q_{ij})=-\nabla_{y_j^t}\log(q_{ij})$. Similarly, we reformulate the update equation of a negative edge as
\begin{align}
    y_i^{t+1} &= y_i^{t} + \lambda f_r(\zeta^{t}) (y_i^{t}-y_{j}^{t}), ~\label{eq:umap_rep_dynamics1} \\
    y_{j}^{t+1} &= y_{j}^{t}, ~\label{eq:umap_rep_dynamics2}
\end{align}
where $f_r:\mathbb{R}_{\geq 0}\to\mathbb{R}_{\geq 0}$ is the repulsion shape.

Such decomposition has appeared previously, e.g., in~\citep{mcinnes2018umap,agrawal2021minimum,draganov2023unexplainable}, but their formulations and utilization vary. 
The original UMAP paper~\citep{mcinnes2018umap} used it as a computational trick for fast processing, while~\cite{draganov2023unexplainable} used it for comparing different algorithms from the kernel perspective. In both cases, the primary focus was computing the derivative without any further analysis of the decomposition.
A similar approach was used earlier by~\cite{agrawal2021minimum}, but they expressed the decomposition in terms of a scalar coefficient and a unit vector ($(y_i-y_j)/||y_i-y_j||_2$) to emphasize the magnitude of the forces. Here, we treat the decompositions as independent functions that can take various forms. 
The discussion below shows that our shape decomposition is more illuminating than the magnitude alone.

\subsection{Conditions for Attraction and Repulsion}
In this section, we establish the conditions of attraction and repulsion from the update equations (\ref{eq:umap_attr_dynamics1})-(\ref{eq:umap_rep_dynamics2}). The following proposition characterizes the contraction of distance between the pair $y_i^{t}$ and $y_j^{t}$ of a positive edge:
\begin{proposition}\label{thm:attr}
    The update Eqs.~(\ref{eq:umap_attr_dynamics1}) and~(\ref{eq:umap_attr_dynamics2}) provide a contraction of distance ($||y_i^{t+1}-y_j^{t+1}|| < ||y_i^{t}-y_j^{t}||$) if $-1<\lambda f_a<0$.
\end{proposition}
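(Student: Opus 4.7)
The plan is to reduce the two-body update to a single linear recursion on the displacement vector $y_i^{t}-y_j^{t}$ and then read off the contraction condition from a scalar multiplier.

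First I would subtract Eq.~(\ref{eq:umap_attr_dynamics2}) from Eq.~(\ref{eq:umap_attr_dynamics1}). Because the two updates apply equal and opposite increments proportional to $y_i^{t}-y_j^{t}$, the displacement after one step satisfies
\begin{equation*}
y_i^{t+1} - y_j^{t+1} = \bigl(1 + 2\lambda f_a(\zeta^{t})\bigr)\,(y_i^{t} - y_j^{t}).
\end{equation*}
Taking Euclidean norms of both sides, and using homogeneity of the norm, gives
\begin{equation*}
\|y_i^{t+1} - y_j^{t+1}\| = \bigl|\,1 + 2\lambda f_a(\zeta^{t})\,\bigr|\cdot \|y_i^{t} - y_j^{t}\|.
\end{equation*}

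Second, I would translate the contraction requirement $\|y_i^{t+1}-y_j^{t+1}\|<\|y_i^{t}-y_j^{t}\|$ (assuming the points are distinct so the norm factor is nonzero) into the scalar inequality $|1+2\lambda f_a(\zeta^{t})|<1$. Solving this two-sided inequality yields $-2 < 2\lambda f_a(\zeta^{t}) < 0$, i.e.\ $-1 < \lambda f_a(\zeta^{t}) < 0$, which is exactly the hypothesis of the proposition. Conversely, the same chain shows the condition is tight, so the stated bound is both necessary and sufficient (though the proposition only asserts sufficiency).

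There is no real obstacle here: the whole argument is a one-line algebraic identity plus a scalar inequality. The only subtlety worth flagging in the writeup is that the symmetry $\nabla_{y_i}\log q_{ij}=-\nabla_{y_j}\log q_{ij}$ for the Euclidean metric (already noted in the text just before Eq.~(\ref{eq:umap_attr_dynamics1})) is what produces the clean factor of $2$ in front of $\lambda f_a$; without that symmetry one would instead get two distinct scalars and the condition would not reduce to a single interval. I would also briefly note that the degenerate case $y_i^{t}=y_j^{t}$ renders the statement vacuous, so the strict inequality in the conclusion is meaningful precisely when the starting pair is distinct.
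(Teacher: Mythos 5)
Your proposal is correct and follows essentially the same route as the paper's proof: subtract the two updates to get the displacement recursion $y_i^{t+1}-y_j^{t+1}=(1+2\lambda f_a)(y_i^{t}-y_j^{t})$, take norms, and read off $|1+2\lambda f_a|<1$, which is equivalent to $-1<\lambda f_a<0$. Your additional remarks on the role of the gradient antisymmetry and the degenerate coincident-point case are sensible but not needed for the argument.
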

Here, $\lambda f_a$ works as the effective attraction shape, since the contraction condition depends on the scaled quantity $\lambda f_a$. In particular, scaling by $\lambda$ can move the update across the critical threshold $-1$, changing attraction from contractive to expansive. In practice, once $f_a$ is specified, $\lambda$ can be chosen to place $\lambda f_a$ in a well-behaved regime.

For a negative edge, the following proposition characterizes the expansion of the distance between the pair $y_i^{t}$ and $y_j^{t}$:
\begin{proposition}\label{thm:repl}
    The update Eqs.~(\ref{eq:umap_rep_dynamics1}) and~(\ref{eq:umap_rep_dynamics2}) provide an expansion of distance ($||y_i^{t+1}-y_j^{t+1}|| > ||y_i^{t}-y_j^{t}||$) if $f_r>0$.
\end{proposition}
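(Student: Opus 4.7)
The plan is to follow the same simple algebraic route that works for Proposition~\ref{thm:attr}, but exploit the asymmetry of the negative-edge update: only $y_i$ moves while $y_j$ is held fixed. Because both new positions are affine combinations of $y_i^t$ and $y_j^t$ with a common factor multiplying the vector $(y_i^t-y_j^t)$, the new difference vector is parallel to the old one, and the entire problem collapses to comparing a single scalar to $1$.

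Concretely, I would first subtract Eq.~(\ref{eq:umap_rep_dynamics2}) from Eq.~(\ref{eq:umap_rep_dynamics1}) to obtain
\begin{equation*}
    y_i^{t+1}-y_j^{t+1} = \bigl(1+\lambda f_r(\zeta^t)\bigr)\,(y_i^t-y_j^t),
\end{equation*}
and then take Euclidean norms on both sides to get
\begin{equation*}
    \zeta^{t+1} \;=\; \bigl|1+\lambda f_r(\zeta^t)\bigr|\,\zeta^t.
\end{equation*}
Expansion is therefore equivalent to $|1+\lambda f_r(\zeta^t)|>1$.

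Next I would invoke the hypotheses $\lambda>0$ (stated after Eq.~(\ref{eq:grad3})) and $f_r>0$. Together these give $\lambda f_r(\zeta^t)>0$, so $1+\lambda f_r(\zeta^t)>1>0$; in particular the absolute value is superfluous and the scaling factor is strictly greater than one. Multiplying through by $\zeta^t\geq 0$ (and noting the statement is vacuous at $\zeta^t=0$) yields $\zeta^{t+1}>\zeta^t$, which is the desired expansion.

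I do not expect any real obstacle here. The only point worth flagging, and the reason this proposition looks cleaner than Proposition~\ref{thm:attr}, is that there is no sign-flip regime to worry about: in the attractive case the difference vector can overshoot and reverse orientation, forcing the two-sided bound $-1<\lambda f_a<0$, whereas a positive $f_r$ can only lengthen the already-aligned difference vector. Thus, unlike the attractive case, there is no upper bound on $\lambda f_r$ required for expansion, although of course very large values would produce runaway dynamics.
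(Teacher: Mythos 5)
Your proof is correct and follows essentially the same route as the paper's: subtract the two update equations, take norms to obtain the factor $|1+\lambda f_r|$, and conclude expansion from $\lambda>0$ and $f_r>0$ (the paper phrases the last step as the disjunction $\lambda f_r<-2$ or $f_r>0$, with the latter holding by definition). Your extra remarks about the vacuous $\zeta^t=0$ case and the absence of a sign-flip regime are sound but not needed.
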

Note that the inclusion of a symmetric term $-\lambda f_r(\zeta) (y_i^{t}-y_{j}^{t})$ in Eq.~(\ref{eq:umap_rep_dynamics2}) does not alter this conclusion. Unlike attraction, repulsion does not require introducing an effective shape: for any positive learning rate $\lambda>0$, multiplying $f_r$ by $\lambda$ changes only the magnitude of the repulsive update, not its sign. Thus, the qualitative condition for per-iteration expansion is determined by $f_r>0$, and we treat $f_r$ itself as the repulsion shape. Proofs of these propositions are provided in Appendix~\ref{supp:derivation_atr}.

Overall, these conditions give a per-iteration certificate for guaranteed contraction/expansion, and one should not confuse them with the learning rate tuning/decay mechanism, as we can design the shapes in such a way that the contraction and expansion do not rely on learning rate decay.

Figure~\ref{fig:atr_repl_shape}~(a) shows the effect of different values of $f_a<0$ and $f_r>0$ on two points. The latter shape is straightforward, as any positive value increases the distance. The former is much more subtle, as $f_a$ encodes both attractive and repulsive dynamics. For $f_a\in(-1.0,0)$, the distance decreases, with a sign flip at the value $f_a=-0.5$ of maximum attraction (coincident points). 
Any value lower than $-1.0$ causes the distance to increase. Although these forces act locally, they collectively shape the global structure.

\section{Analysis of UMAP in terms of Attraction and Repulsion Shape}\label{sec:umap_anal}

Using the gradient decomposition and the distance form (\ref{eq:low_dim_umap}), the attraction and repulsion shapes are given by 
\begin{align}
    f_a^U(\zeta)=-\frac{2ab\zeta^{2(b-1)}}{1+a\zeta^{2b}} \label{eq:attr_shape}
\end{align} 
and
\begin{align}
    f_r^U(\zeta)=\frac{2b}{\zeta^{2b} (1 + a \zeta^{2b})}, \label{eq:repl_shape}
\end{align}
respectively. This section focuses on the default shapes of UMAP, controlled primarily by the parameters $a$ and $b$, and discusses the insights learned by perturbing them. The discussion below is generally valid for $b\leq1$, where the attraction shape is strictly increasing and thus invertible (for derivation, see Appendix~\ref{supp:derivation_atr}; for a discussion of $b>1$, see Appendix~\ref{sec:b_geq_1}).

\begin{table}[t]
    \caption{Effect of $\zeta_{-1}$ in terms of mean$\pm$std Euclidean distances among the k-NN pairs in the UMAP embedding space. With annealing, $\zeta_{-1}\to0$; without annealing (constant learning rate), $\zeta_{-1}=1.07$, leading to larger $k$-NN distances. Results for embedding dimensions scaled to unit variance are shown in parentheses.}
    \label{tab:tab_zeta_1}
    \centering
    \begin{tabular}{c|c|c}
         \toprule
         Dataset & With Annealing & Without Annealing  \\
         \hline
         MNIST~\citep{lecun2010mnist} & $0.56\pm1.70$ ($0.10\pm0.31$) & $1.71\pm1.80$ ($0.29\pm0.30$) \\
         FMNIST~\citep{xiao2017_online} & $0.39\pm0.94$ ($0.06\pm0.16$) & $1.52\pm1.24$ ($0.23\pm0.18$) \\
         Transcriptomes~\citep{macosko2015highly} & $0.48\pm1.16$ ($0.09\pm0.22$) & $1.67\pm1.54$ ($0.26\pm0.23$) \\
         \bottomrule
    \end{tabular}
    
\end{table}

\begin{figure}[t]
    \centering
    \includegraphics[width=1.0\linewidth]{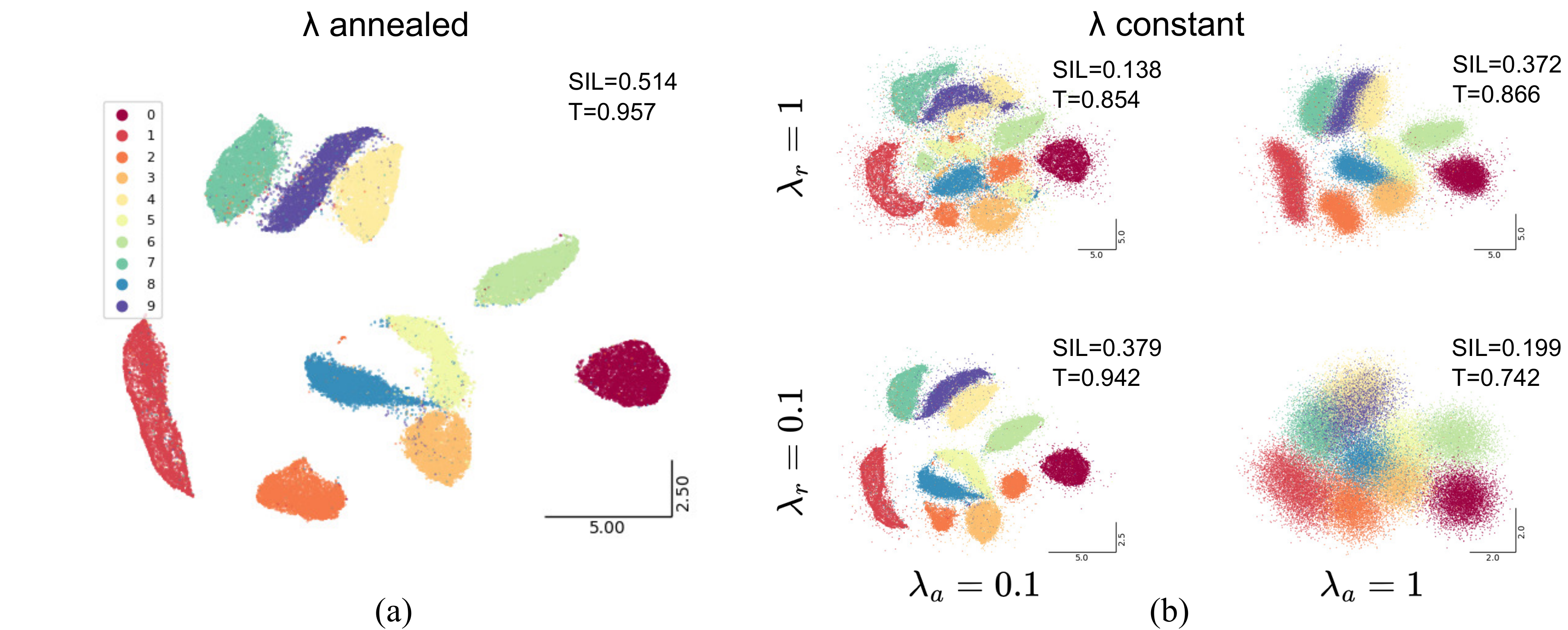}
    \caption{Embedding of the MNIST dataset using UMAP's default attraction and repulsion. (a) Embedding when the learning rate is annealed. (b) Embeddings when the learning rate is constant. We varied the constant learning rate for attraction ($\lambda_a$) and repulsion ($\lambda_r$). A smaller learning rate gives a sharper cluster boundary. SIL: silhouette scores~\citep{rousseeuw1987silhouettes}, a measure of cluster separation, and T: trustworthiness ~\citep{venna2001neighborhood}, a measure of structure preservation.}
    \label{fig:umaps_init_fa_fr}
\end{figure}

Figure~\ref{fig:atr_repl_shape}~(b) shows the default attraction shape of UMAP ($a=1.58,b=0.89$). 
It becomes unbounded (approaches $-\infty$) as $\zeta\to0$. 
As predicted by Proposition~\ref{thm:attr}, the transition from contraction to expansion occurs when $\lambda f_a^U$ crosses $-1$ as $\zeta$ approaches $0$. 
Since the attraction shape is invertible, we can identify the distance at this transition, $\zeta_{-1}=(\lambda f_a^U)^{-1}(-1)$, as the minimum distance for contraction due to attractive updates. Effectively, $\zeta>\zeta_{-1}$ causes contraction, and $0<\zeta<\zeta_{-1}$ causes expansion, contradicting the intuition that attractive updates consistently bring points closer together. 

If $\zeta_{-1}$ is high, neighboring points oscillate between contraction and expansion, and the clusters appear fuzzy. 
For the sharpest boundaries, then, the goal of optimization can be recast as one of achieving the limit $\zeta_{-1}\to0$. The default shape oscillates around $\zeta=1.07$, which is large compared to the expectation (i.e., the limit $\zeta_{-1}\to0$).
As a result, UMAP's learning rate schedule requires annealing to zero (Figs.~\ref{fig:atr_repl_shape}~(c,d)). The effect is evident when we quantify the distances among the k-NN pairs in the learned embeddings (Table~\ref{tab:tab_zeta_1}). Without annealing, the average distance among the k-NN pairs increases by $\approx1.16$, consistent with the predicted scale set by $\zeta_{-1}$. This increase persists even when the embedding dimensions are scaled to unit variance (by $\approx 0.18$).

\begin{figure}[t]
    \centering
    \includegraphics[width=1.0\linewidth]{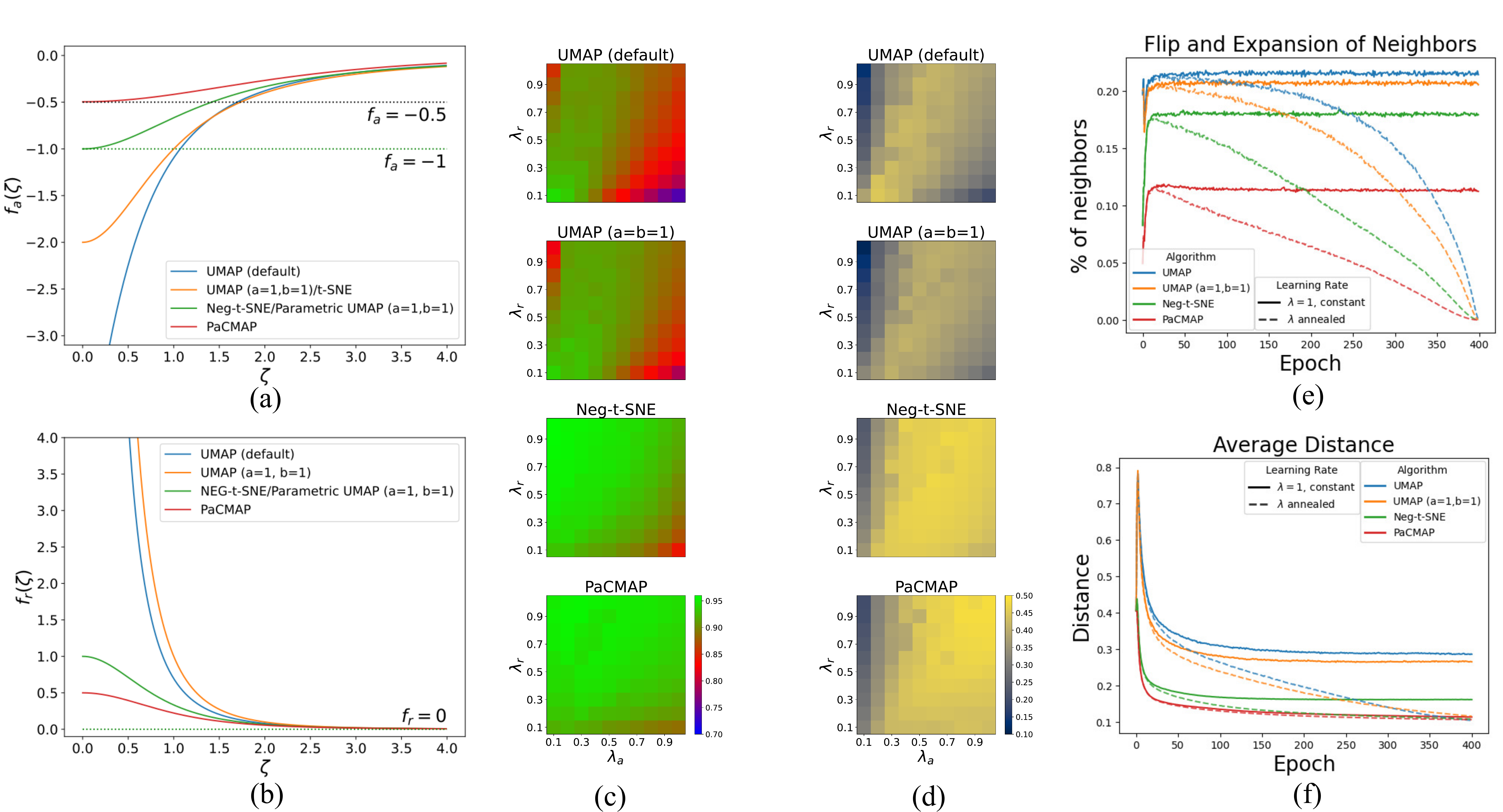}
    \caption{Comparison of different algorithms. (a) Attraction and (b) repulsion shapes of various embedding methods. (c) Trustworthiness and (d) Silhouette Score of various methods as the constant learning rate is varied for attraction ($\lambda_a$) and repulsion ($\lambda_r$) independently. (e) Portions of neighbors that flip and expand, and (f) average distance between neighbors for various methods due to attraction and repulsion dynamics in each epoch.}
    \label{fig:atr_repl_shape_2}
\end{figure}

On the other hand, the repulsion shape (Eq.~\ref{eq:repl_shape}) is always positive and satisfies Proposition~\ref{thm:repl}.
$f_r^U$ approaches $0$ as $\zeta\to\infty$ and approaches $\infty$ as $\zeta\to0$ (Fig.~\ref{fig:atr_repl_shape}~(e)).

To illustrate their effect, Fig.~\ref{fig:umaps_init_fa_fr} shows UMAP embeddings of MNIST under different optimization settings. With learning rate annealing, UMAP produces the familiar embedding with well-formed clusters of individual digits (Fig.~\ref{fig:umaps_init_fa_fr}~(a)). In contrast, keeping the learning rate constant can lead to drastically different embeddings (Fig.~\ref{fig:umaps_init_fa_fr}~(b)). For clarity, we separately fix the attraction and repulsion learning rates, denoted by $\lambda_a$ and $\lambda_r$. Clearer cluster boundaries emerge when $\lambda_a$ is small (e.g., $0.1$), which also keeps $\zeta_{-1}$ small. When $\lambda_a$ is large (e.g., $1.0$), the clusters become more diffuse. The $\lambda_r$ mainly affects local jitter; a larger $\lambda_r$ ($1.0$) introduces noticeable fluctuations around the clusters, whereas a smaller $\lambda_r$ ($0.1$) yields a cleaner embedding (for $\lambda_a=0.1$). We also observe changes in the overall scale of the embedding, reflecting the balance between attraction and repulsion. As this balance shifts, the equilibrium spacing between clusters changes, causing the embedding to contract or expand (we analyze the scaling further in Section~\ref{sec:rep_anal}).

To connect Fig.~\ref{fig:umaps_init_fa_fr} to the underlying mechanics, note that $(\lambda_a,\lambda_r)$ do not merely “speed up” optimization, rather they change the effective forces through $\lambda_a f_a$ and $\lambda_r f_r$. In particular, whenever $\lambda_a f_a$ crosses $-1$ (equivalently $\zeta_{-1}>0$), nearest-neighbor updates can oscillate between contraction and expansion around $\zeta_{-1}$, producing fuzzy clusters unless $\lambda_a$ is reduced (thus, reducing the value of $\zeta_{-1}$). This motivates comparing the attraction/repulsion shapes across methods. Attraction shapes for different algorithms show that only UMAP and t-SNE deal with the issue of having $f_a<-1$ (and consequently $\zeta_{-1}>0$ at $\lambda=1$, Fig.~\ref{fig:atr_repl_shape_2}(a)). 
$t$-SNE solves it by weighting the updates with corresponding $p_{ij}$ values, while UMAP relies on learning rate annealing. 
Methods like Neg-$t$-SNE and PaCMAP have $f_a$ naturally within $[-1,0]$ (and thus, satisfies Proposition~\ref{thm:attr} for $\lambda\in[0,1]$ with $\zeta_{-1}=0$). Furthermore, PaCMAP's weighted $f_a$ ($<0.5$ always) even prevents any flips during attraction. On the other hand, the repulsion shapes for different algorithms show that only UMAP deals with large values for small distances (Fig.~\ref{fig:atr_repl_shape_2}(b); the shape is unbounded, but during optimization, the values are often clipped). 

We show the effect of these attraction/repulsion choices for MNIST embedding by applying separate constant learning for attraction ($\lambda_a$) and repulsion ($\lambda_r$), and varying their values within (0, 1] (Figs.~\ref{fig:atr_repl_shape_2}~(c,d)). 
For UMAP, the lower value of $\lambda_a$ ($<0.5$ and consequently $\zeta_{-1}\simeq0$) is preferred for better embedding (trustworthiness (T)in Fig.~\ref{fig:atr_repl_shape_2}~(c)) and clustering (silhouette scores (SIL) in Fig.~\ref{fig:atr_repl_shape_2}~(d)), whereas the whole range of $\lambda_r$ ($\in[0,1]$) could be used (e.g., for a fixed $\lambda_a=0.3$). 
For NEG-$t$-SNE and PaCMAP, for which $\zeta_{-1}=0$, the whole range of parameters is effective. 
This confirms that UMAP relies on making $\zeta_{-1}$ close to 0 for satisfactory embedding and clustering, which it achieves in practice by learning rate annealing, whereas  $\zeta_{-1}=0$ is the default in newer algorithms.

To further probe the inner workings of attraction, we can quantify the number of flips and expansions of nearest neighbors. Note that these values account for all interactions during each epoch (while Propositions~\ref{thm:attr} and~\ref{thm:repl} account for local interactions in isolation). We can intuitively predict from the attraction shapes that UMAP will experience the highest number of flips and expansions, while the other methods, UMAP with $a=1$ and $b=1$, Neg-t-SNE, and PaCMAP, will gradually go downward. Fig.~\ref{fig:atr_repl_shape_2}~(e) shows the percentage of neighbors that experience this during optimization and perfectly matches with our intuition (for details of how we estimate these values, see Appendix~\ref{sec:details_flips_avg_distance}). When the learning rate is constant, the number of flips is stable. In UMAP, roughly $21.55\%$ (average of last 200 epochs) of the neighbors flip and expand. Whereas, it becomes lower as we drive the attraction shape toward $-1$ and then to $-0.5$. For PaCMAP, where there is no flip from the attraction shape, $11.34\%$ of the neighbors (half of that of UMAP) flip and expand. When the learning rate is annealed, the number of points that flip and expand reduces toward $0$.

We can make a similar observation regarding the distance between the neighbors (Fig.~\ref{fig:atr_repl_shape_2}~(f); for details, see Appendix~\ref{sec:details_flips_avg_distance}). Without learning rate annealing, the average distance in UMAP is highest and then drops significantly for Neg-t-SNE and PaCMAP. When the learning rate is annealed, all the methods reach practically the same value. Notably, PaCMAP, which allows no flip during attractive update, essentially traces the same curve with or without learning rate annealing.

Future sections and appendices provide additional details. In Section~\ref{sec:neg}, we focus on the case of $a=1$ and $b=1$ and compare UMAP to NEG-t-SNE. Appendix~\ref{sec:const_lr} provides additional discussion regarding constant learning rate in UMAP. Appendix~\ref{sec:alternate_algorithms} provides details of different dimensionality reduction algorithms (TriMAp, PaCMAP, LocalMAP, t-SNE, SNE and multidimensional scaling) from the perspective of attraction and repulsion shape. Appendix~\ref{sec:controlled_flips} discusses the case when flips are deliberately injected during the optimization.
Appendix~\ref{sec:possible_path} studies a synthetic scenario in which an attractive pair is initialized with many repelling points between them.
Appendix~\ref{supp:more_on_la_lb} gives details of the embeddings used in this section as well as extended results for two other datasets.

\subsection{Comparison to NEG-$t$-SNE}\label{sec:neg}

In the previous section, we focused on the default UMAP parameters, which causes both the attraction and repulsion shapes to be unbounded. 
However, setting $a$ and $b$ to $1$ is common for various dimensionality reduction algorithms. This particular setting makes the gradients in many algorithms stable (or bounded) and, in turn, makes the optimization easier.
Recently,~\cite{damrich2022t} explored this for UMAP and proposed Neg-$t$-SNE as a solution, which, in addition to having a stable gradient, provides a more compact clustering even for a constant learning rate of $1$. 
On the other hand, Parametric UMAP~\citep{sainburg2020parametric} initially used the UMAP loss formulation, but later it\footnote{\href{https://github.com/lmcinnes/umap/pull/856}{https://github.com/lmcinnes/umap/pull/856}, merged on April 26, 2022} adopted a numerically stable modified cross-entropy loss function~\citep{shi2023deep} with a logsigmoid kernel. 
Analysis of both reveals that the approaches arrive at the same formulation from different starting points, which leads to the following observation:
\begin{proposition}[\cite{damrich2022t}]\label{thm:equivalence_neg_pumap}
    Neg-t-SNE is Parametric UMAP with $a=1$ and $b=1$.
\end{proposition}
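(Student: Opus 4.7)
The plan is to show that both algorithms optimize the same cross-entropy objective built on the Cauchy kernel $q_{ij}=1/(1+\zeta^2)$ under identical negative sampling schemes. Because the proposition asserts an equivalence of two algorithms rather than a universal fact, the proof is primarily a bookkeeping exercise that aligns the two objectives after an appropriate substitution.

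First, I would rewrite UMAP's low-dimensional similarity using the sigmoid identity $q_{ij} = 1/(1 + a\zeta^{2b}) = \sigma(-\log(a\zeta^{2b}))$ with $\sigma$ the logistic sigmoid. This exposes the logsigmoid kernel used by Parametric UMAP as a numerically stable but mathematically identical reformulation of~(\ref{eq:umap_loss_fn}): the attractive term becomes $-p_{ij}\log\sigma(-\log(a\zeta^{2b}))$ and the repulsive term becomes $-(1-p_{ij})\log\sigma(\log(a\zeta^{2b}))$. Specializing to $a=b=1$ collapses the kernel to $q_{ij}=\sigma(-\log\zeta^2)=1/(1+\zeta^2)$, so Parametric UMAP at this parameter choice minimizes the cross-entropy of the Cauchy kernel against the high-dimensional affinities $p_{ij}$, optimized via the positive/negative edge updates of (\ref{eq:grad1})--(\ref{eq:grad3}).

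Next, I would recall the Neg-$t$-SNE loss of~\cite{damrich2022t}: applying noise-contrastive estimation to the $t$-SNE similarity $1/(1+\zeta^2)$ with uniform edge noise yields exactly $-\sum p_{ij}\log q_{ij}-\sum(1-p_{ij})\log(1-q_{ij})$ with $q_{ij}=1/(1+\zeta^2)$, optimized via negative sampling with the same treatment of positive and negative edges. This matches the specialized Parametric UMAP loss term by term. As a consistency check using the framework of Section~\ref{sec:arshape}, substituting $a=b=1$ into~(\ref{eq:attr_shape}) and~(\ref{eq:repl_shape}) gives $f_a=-2/(1+\zeta^2)$ and $f_r=2/(\zeta^2(1+\zeta^2))$, which agree with the shapes obtained directly from Neg-$t$-SNE; by Propositions~\ref{thm:attr} and~\ref{thm:repl} the two algorithms then produce identical per-iteration dynamics on every pair.

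The principal obstacle is notational rather than mathematical. I must verify that the noise distribution, the sampling frequency of negative edges, and the parameterization of the embedding map (encoder) in Neg-$t$-SNE coincide with those of Parametric UMAP, and that the modified cross-entropy of~\cite{shi2023deep} adopted by Parametric UMAP is genuinely the logsigmoid rewrite above rather than a substantive change of objective. Once these implementation conventions are reconciled, the equivalence reduces to the algebraic identity $\sigma(-\log\zeta^2)=1/(1+\zeta^2)$ and the observation that the two algorithms share the same negative-sampling stochastic gradient scheme.
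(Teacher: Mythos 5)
There is a genuine gap, and it sits exactly at the point you flagged as a mere implementation detail: whether the logsigmoid-based loss adopted by Parametric UMAP is ``the logsigmoid rewrite above rather than a substantive change of objective.'' It is a substantive change, and that change is the entire content of the proposition. Your sigmoid identity $1/(1+a\zeta^{2b})=\sigma(-\log(a\zeta^{2b}))$ is correct algebra, but it is not what Parametric UMAP computes. The modified loss applies $\mathrm{logsigmoid}$ to $q_{ij}^P=-\log(1+a\zeta^{2b})$, and
\begin{align}
-\mathrm{logsigmoid}(q_{ij}^P)=-\log\!\left(\frac{1}{1+\exp(\log(1+a\zeta^{2b}))}\right)=-\log\!\left(\frac{1}{2+a\zeta^{2b}}\right),
\end{align}
so the effective low-dimensional affinity is $1/(2+a\zeta^{2b})$, not $1/(1+a\zeta^{2b})$. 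The extra $+1$ contributed by the sigmoid's denominator is precisely what converts UMAP's kernel into NEG-$t$-SNE's affinity $q_{ij}^N=1/(2+\zeta^2)$ (the paper's Section~\ref{sec:neg} derives this same $1/(2+\zeta^2)$ from the contrastive-estimation side). Your reading instead makes Parametric UMAP's objective identical to plain UMAP's cross-entropy, which would turn the proposition into ``NEG-$t$-SNE is UMAP with $a=b=1$'' --- a claim the paper explicitly refutes: at $a=b=1$ UMAP has $f_a^U=-2/(1+\zeta^2)\in[-2,0]$ and unbounded $f_r^U$, whereas NEG-$t$-SNE has $f_a^N=-2/(2+\zeta^2)\in[-1,0]$ and bounded $f_r^N=2/((1+\zeta^2)(2+\zeta^2))$. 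Your ``consistency check'' equating the two shape pairs is therefore false, and it should have been the red flag that the rewrite is not an identity.

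The paper's actual proof is short and direct: compute $-\mathrm{logsigmoid}(q_{ij}^P)=-\log q_{ij}^N$ for the attractive term and $\mathrm{logsigmoid}(q_{ij}^P)-q_{ij}^P=\log(1-q_{ij}^N)$ for the repulsive term, with $q_{ij}^N=1/(2+a\zeta^{2b})$, then set $a=b=1$ to recover NEG-$t$-SNE's kernel. To repair your argument you would replace the claimed identity with this computation; the rest of your scaffolding (matching the negative-sampling scheme and noise distribution) is fine but secondary.
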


With $a=1$ and $b=1$, the attraction and repulsion shapes of UMAP are given by
    $f_a^U = -2/(1+\zeta^2)$ and
    $f_r^U = 2/(\zeta^2(1+\zeta^2))$,
respectively. The attraction shape becomes bounded within $[-2,0]$, with $f_a^U(0)=-2$ (Fig.~\ref{fig:atr_repl_shape_2}~(a)), while the repulsion shape remains essentially unchanged (i.e., unbounded as $\zeta\to0$, Fig.~\ref{fig:atr_repl_shape_2}~(b)). Since $f_a^U<-1$ as $\zeta\to0$, according to Proposition~\ref{thm:attr} and the discussion provided in Section~\ref{sec:umap_anal}, this unity case still requires learning rate annealing. 

\cite{damrich2022t} compared UMAP's negative sampling loss function from the perspective of contrastive embedding (CE) and concluded that the effective kernel of UMAP is $1/\zeta^2$. 
Under the CE framework, the authors introduced NEG-$t$-SNE by changing the kernel to $1/(1+\zeta^2)$. Reverting to UMAP formalism, this results in the low-dimensional affinity function $q_{ij}^N=1/(2+\zeta^2)$. Consequently, the attraction and repulsion shapes are
  $f_a^N = -2/(2+\zeta^2)$, and $f_r^N=2/((1+\zeta^2)(2+\zeta^2))$,
respectively (Figs.~\ref{fig:atr_repl_shape_2}~(g,h)). 
The attraction shape is bounded within $[-1,0]$ and satisfies Proposition~\ref{thm:attr}.
Any $\lambda\in[0,1]$ would cause contraction and avoid oscillation of expansion and contraction.
Thus, NEG-$t$-SNE is less sensitive to learning rate annealing, and the clusters appear less fuzzy even for constant $\lambda=1.0$.
Moreover, the repulsion shape of NEG-$t$-SNE is also bounded within $[0,1]$ and does not approach infinity as $\zeta\to0$, which causes fewer points to leave the clusters when sampled randomly. 
While~\cite{damrich2022t} used only the bounded repulsive forces of NEG-$t$-SNE and unbounded ones of UMAP to explain this disparity, our analysis, in Fig.~\ref{fig:atr_repl_shape_2}, shows that the attraction shape is equally responsible for stability and compactness of the clusters. When $a$ and $b$ vary in NEG-t-SNE, it faces the same numerical challenges of UMAP.

\begin{figure*}[t]
    \centering
    \includegraphics[width=\linewidth]{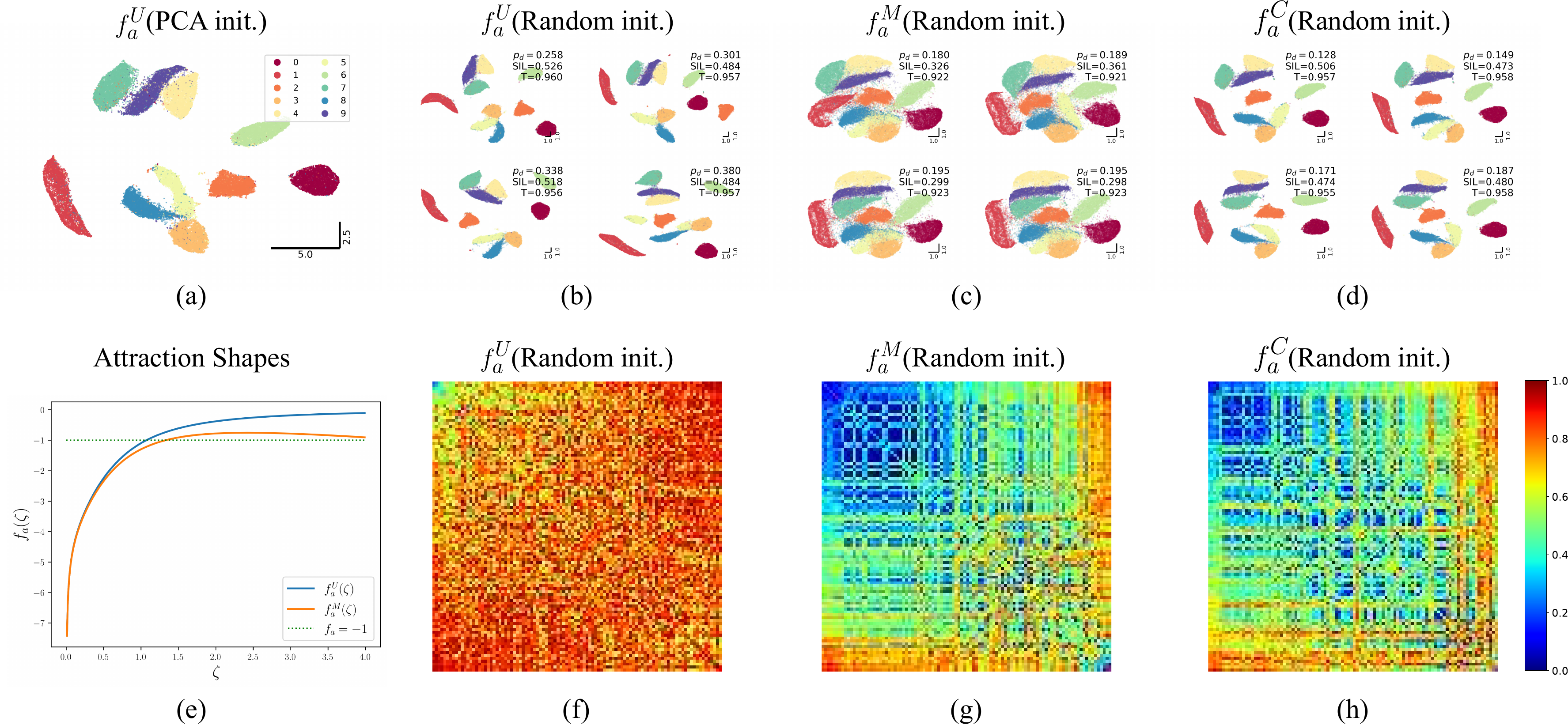}
     \caption{Effect of random initialization on different attraction shapes for the MNIST dataset. (a) Mapping using PCA. (b-d) Four mappings with the lowest Procrustes distance ($p_d$) from the embedding in (a) for (b) UMAP, (c) modified, and (d) composite attraction shapes. (e) Default UMAP and modified attraction shapes. (f-h) Procrustes matrices from 100 runs of (f) UMAP ($0.78\pm0.13$), (g) modified ($0.49\pm0.21)$, and (h) composite attraction ($0.50\pm0.20$) shapes. The diagonal $(i,i)$ entries of the Procrustes matrix are sorted by Procrustes distance ($p_d$) from (a), and the off-diagonal, $(i,j)$, entries correspond to $p_d$ between $i^{th}$ and $j^{th}$ mapping. The matrices and (mean $p_d\pm$std) values show that UMAP's embeddings are not self-similar, while the modified and composite attraction shapes encourage initialization-invariant structure.}
    \label{fig:randomness}
\end{figure*}

For additional discussion on NEG-$t$-SNE with illustration, see Appendix~\ref{sec:paraUMAP}. 

\subsection{UMAP's Consistency under Random Initialization and Probing Its Optimization Landscape}\label{sec:randomness}

The consistency of UMAP embeddings depends on proper initialization~\citep{kobak2021initialization,wang2021understanding}. 
Typically, principal component analysis (PCA) of the data or spectral decomposition of the high-dimensional graph initializes the embedding, producing consistent mappings despite various sources of stochasticity. 
If randomly initialized, clusters often fail to form or form in a random orientation each time the algorithm executes.
If the initial distance between two points (nearest neighbors in high dimension) is large, the attractive forces become too low to bring them closer. 
Known as near-sightedness~\citep{wang2021understanding}, this phenomenon is evident in the attraction shape, where $|f_a^U|$ diminishes towards zero as the distance increases (since, $|f_a^U|=o(1/\zeta)$, and thus, $\lim_{\zeta\to\infty} |f_a^U(\zeta)|\zeta=0$). If we can alleviate this near-sightedness, we can probe whether an informed initialization (such as PCA) leads to a preferred low-dimensional arrangement, or whether multiple such arrangements are equally accessible under the objective.

One can induce ``far-sightedness'' in the mapping by increasing attraction for large distances, facilitating faraway neighbors to come closer. To test this hypothesis, we modify the attraction shape of UMAP to increase the attractive force:
\begin{align}
    f_a^M = f_a^U - \beta \zeta, \label{eq:fa_modified}
\end{align}
where $\beta$ is a parameter that regulates the strength of the added term (we used $\beta=0.2$, Fig.~\ref{fig:randomness}~(e)). This addition in the attraction shape translates to adding a regularizer in the attractive term of the loss function (i.e., $\mathcal{L}^M=\mathcal{L}+\sum_{i,j}\beta/3~||y_i-y_j||_2^3$).
In addition to attracting pairs at faraway distances, this technique enables intermixing of points that help convergence under random initialization (akin to early exaggeration in $t$-SNE). 
In (\ref{eq:fa_modified}), we chose the simplest linear correction; other functions, such as $\log{\zeta}$ or $\zeta^p$ ($p\in\mathbb{R}_{\geq 0}$), may also be suitable.

\begin{table*}[t]
    \caption{Effect of random initialization on different datasets using different attraction shapes and comparison to PCA initialization. The metrics (mean$\pm$std) are reported based on 100 runs of each. Datasets: MNIST~\citep{lecun2010mnist}, FMNIST~\citep{xiao2017_online}, Transcriptomes- \citep{macosko2015highly}, \citep{shekhar2016comprehensive}, and 20NewsGroup (20NG)~\citep{twenty_newsgroups_113}.}
    \label{tab:table1}
    \centering
    \resizebox{\textwidth}{!}{%
    \begin{tabular}{c|c|c|c|c|c|c}
        \toprule
        & & \multicolumn{3}{|c}{Embedding Quality} & \multicolumn{2}{|c}{Run to Run Consistency} \\
        \cmidrule{3-7} 
        Dataset & Shape (initialization) & Trustworthiness & Silhouette Score & Spearman & Spearman &  Procrustes Distance \\
        \midrule 
        \multirow{4}{*}{\rotatebox[origin=c]{90}{MNIST}} & \textit{Default (PCA, baseline)} & $\it0.957\pm0.001$ & $\it0.51\pm0.01$ & $\it0.31\pm0.01$ & $\it0.95\pm0.06$ & $\it0.13\pm0.05$ \\
         & Default (Rand) &   $\bf0.958\pm0.001$ & $0.47\pm0.04$ &    $0.24\pm0.06$ &     $0.44\pm0.12$ &     $0.78\pm0.13$\\
         & Modified (Rand) &  $0.923\pm0.003$ &    $0.33\pm0.03$ &    $\bf 0.33\pm0.03$ & $\bf 0.71\pm0.12$ & $\bf 0.49\pm0.21$\\
         & Composite (Rand) & $0.956\pm0.001$ &    $\bf0.48\pm0.02$ & $0.29\pm0.04$ &     $0.70\pm0.11$ &     $0.50\pm0.21$ \\
         \midrule
        \multirow{4}{*}{\rotatebox[origin=c]{90}{FMNIST}} & \textit{Default (PCA, baseline)} & $\it0.975\pm0.001$ & $\it0.18\pm0.00$ & $\it0.60\pm0.00$ & $\it0.99\pm0.00$ & $\it0.02\pm0.00$ \\
         & Default (Rand) &   $\bf 0.976\pm0.001$ & $0.11\pm0.05$  &     $0.42\pm0.07$ &     $0.54\pm0.13$ &     $0.72\pm0.15$ \\
         & Modified (Rand) &  $0.959\pm0.004$ &     $0.16\pm0.03$ &      $\bf 0.57\pm0.04$ & $\bf 0.82\pm0.11$ & $\bf 0.38\pm0.19$ \\
         & Composite (Rand) & $0.975\pm0.001$ &     $\bf 0.18\pm0.03$  & $0.53\pm0.05$ &     $0.78\pm0.10$ &     $0.43\pm0.18$ \\
         \midrule
         \multirow{4}{*}{\rotatebox[origin=c]{90}{Macosko}} & \textit{Default (PCA, baseline)} & $\it0.950\pm0.001$ & $\it0.42\pm0.04$ & $\it0.77\pm0.01$ & $\it0.95\pm0.02$ & $\it0.16\pm0.07$ \\
         & Default (Rand) &   $\bf 0.950\pm0.001$ & $0.23\pm0.06$ &     $\bf 0.75\pm0.02$ & $0.76\pm0.03$ &     $0.91\pm0.06$  \\
         & Modified (Rand) &  $0.935\pm0.001$ &     $0.15\pm0.05$ &     $0.71\pm0.02$ &     $0.82\pm0.05$ &     $\bf 0.61\pm0.13$ \\
         & Composite (Rand) & $0.949\pm0.001$ &     $\bf 0.28\pm0.05$ & $0.73\pm0.02$ &     $\bf 0.83\pm0.04$ & $0.64\pm0.15$\\
         \midrule
         \multirow{4}{*}{\rotatebox[origin=c]{90}{Shekhar}} & \textit{Default (PCA, baseline)} & $\it0.974\pm0.000$ & $\it0.52\pm0.01$ & $\it0.51\pm0.01$ & $\it0.94\pm0.03$ & $\it0.07\pm0.03$\\
         & Default (Rand) &   $\bf 0.974\pm0.000$ & $0.41\pm0.08$ &      $\bf 0.48\pm0.03$ & $0.52\pm0.09$ &      $0.70\pm0.16$  \\
         & Modified (Rand) &  $0.965\pm0.001$ &     $\bf 0.58\pm0.03$ &  $0.40\pm0.02$ &     $\bf 0.81\pm0.06$ &  $\bf 0.48\pm0.19$  \\
         & Composite (Rand) & $0.973\pm0.000$ &     $0.51\pm0.03$ &      $\bf 0.48\pm0.01$ & $0.73\pm0.08$ &      $0.51\pm0.19$ \\
         \midrule
         \multirow{4}{*}{\rotatebox[origin=c]{90}{20NG}} & \textit{Default (PCA, baseline)} & $\it0.819\pm0.001$ & $-0.17\pm0.00$ & $\it0.58\pm0.00$ & $\it0.96\pm0.01$ & $\it0.05\pm0.01$ \\
         & Default (Rand) &   $\bf 0.818\pm0.002$ & $-0.18\pm0.01$ &   $0.57\pm0.01$ &     $0.92\pm0.03$ &     $0.24\pm0.13$\\
         & Modified (Rand) &  $0.777\pm0.001$ &   $\bf -0.16\pm0.00$ & $\bf 0.59\pm0.00$ & $\bf 0.95\pm0.04$ & $0.20\pm0.13$\\
         & Composite (Rand) & $0.817\pm0.002$ &   $-0.18\pm0.01$ &     $0.58\pm0.01$ &     $0.94\pm0.04$ &     $\bf 0.19\pm0.15$\\
         \bottomrule
    \end{tabular}
    }
\end{table*}

We also consider a composite attraction shape:
\begin{align}
    f_a^C = \begin{cases}
        f_a^M, & \text{epoch}\leq 100 \\
        f_a^U, & \text{otherwise}
    \end{cases}.
\end{align}
The composite shape attempts to remove any distortions introduced by $f_a^M$ by reverting to the original UMAP. Below, we discuss the effects these modified and composite attraction shapes have on DR from random initialization.

\begin{figure*}[t]
    \centering
    \includegraphics[width=\linewidth]{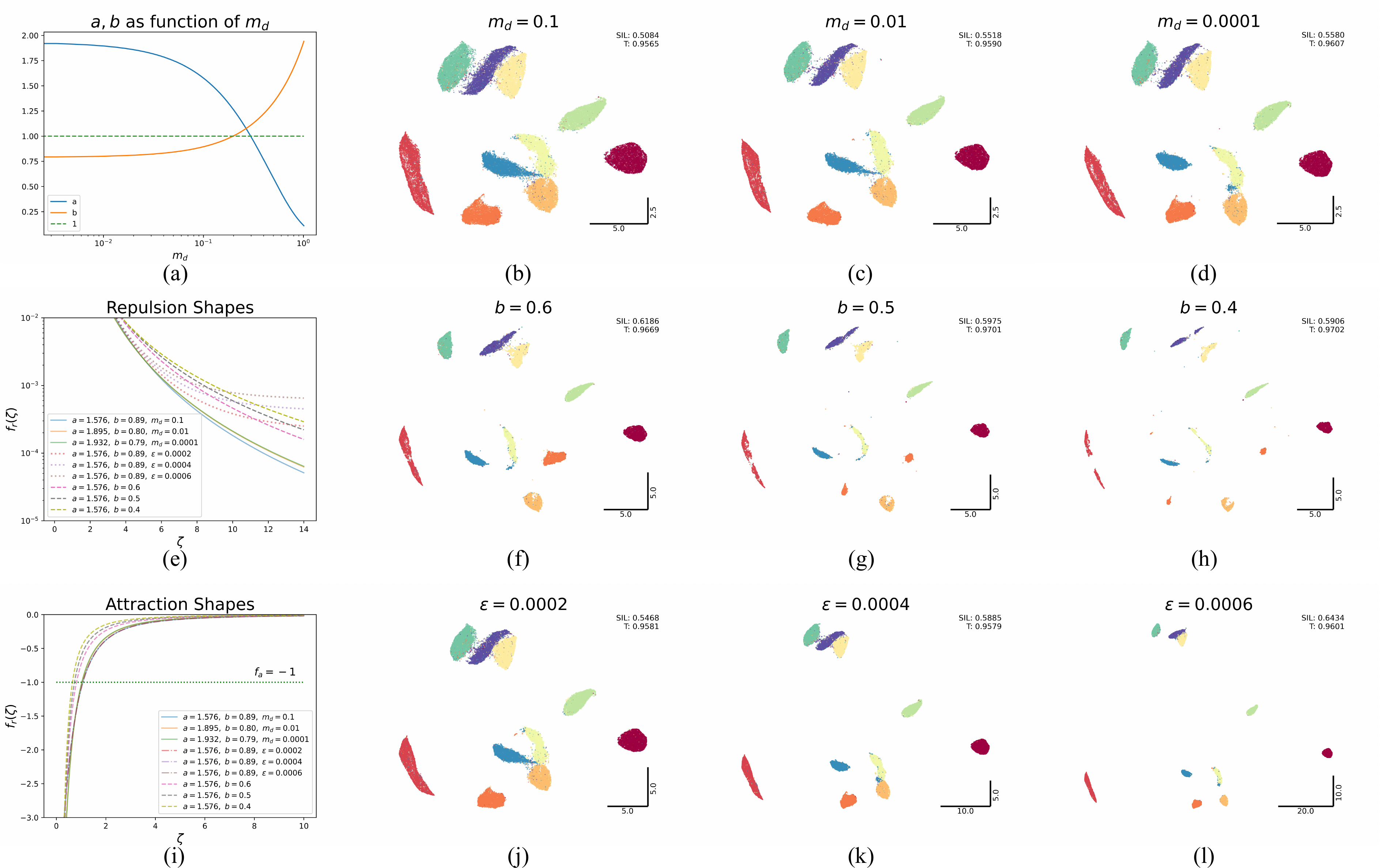}
     \caption{Control of inter-cluster distances on the MNIST dataset. (a) Computing $a,b$ by varying the low-dimensional distance $m_d$ restricts exploration.  (b-d) UMAP output by setting $m_d$ to $0.1$, $0.01$, and $0.0001$, respectively, shows little improvement in compactness of clusters. (e) Repulsion shapes for different parameters. (f-h) Increasing repulsion by explicitly varying $b$ results in more compact clusters and forms new ones that were absent otherwise. (i) Attraction shapes by varying parameters. (j-l) Increasing repulsion by adding a small positive value ($\varepsilon$) to the repulsion shape increases inter-cluster distance.}
    \label{fig:repulsion}
\end{figure*}

We first created a PCA-initialized embedding of the MNIST dataset Figure~\ref{fig:randomness}~(a). Then, we produced embeddings using random initialization (Gaussian) for each shape and repeated the experiment 100 times (to probe the optimization landscape starting from various initial position). 
To quantify the results, we use Procrustes analysis~\citep{gower1975generalized} that aligns two point clouds under scaling, translation, rotation, and reflection (for details, see Appendix~\ref{sec:procrustes_analysis}). 
Here, we align the randomly initialized embeddings to that of the PCA-initialized one and characterize their separation using the Procrustes distance ($p_d$).
Figure~\ref{fig:randomness}~(b) shows four embeddings with the lowest $p_d$. While the cluster shapes are consistent, their placements are not. 
Outputs from the modified and composite attraction shapes (Figs.~\ref{fig:randomness}~(c) and~(d), respectively) show improved consistency of cluster placements. 

To quantify the placements further, we consider the Procrustes matrix: the diagonal of the matrix is sorted by $p_d$ from the PCA-initialized mapping, and the off-diagonal values are $p_d$ between two randomly initialized mappings (for details, see Appendix~\ref{sec:procrustes_analysis}). This quantification is analogous to the similarity matrix~\citep{foote1999visualizing}.
The embeddings due to the default UMAP attraction shape are not similar to each other (Fig.~\ref{fig:randomness}~(f)), but the modified (Fig.~\ref{fig:randomness}~(g)) and composite (Fig.~\ref{fig:randomness}~(h)) shapes show strong similarity to each other. Additional results on other datasets, showing consistent behavior, are provided in Appendix~\ref{sec:additional_datasets}.

Table \ref{tab:table1} shows embedding quality and run-to-run variance of five different datasets when randomly initialized and compare it to the corresponding PCA initialized version (standard and the most consistent one). Overall, we can observe that the embedding quality is comparable among different shapes. Overall, modified and composite shapes improve consistency in terms of Spearman's rank correlation and Procrustes distance. 

This indicates that UMAP, regardless of the initialization, aims to encode a unique structure (in our experiments, PCA initialized embedding is an attractor). However, attaining that structure in the low dimension may fall short due to small attraction at longer distances.

\subsection{Cluster Formation and Compactness}\label{sec:rep_anal}

The primary controllable parameter influencing cluster formation in UMAP is the minimum distance parameter $m_d$ (through Eq.~\ref{eq:mdeq}). 
However, varying $m_d$ restricts the exploration of different values of $a$ and $b$ (Fig.~\ref{fig:repulsion}~(a)). 
Thus, reducing $m_d$ often results in embeddings that do not provide additional benefit (Figs.~\ref{fig:repulsion}~(b-d)). 
The key factor is the limited influence of varying $m_d$ on the repulsion shape (Fig.~\ref{fig:repulsion}~(e)). Alternatively, we can explicitly vary the values of $a$ and $b$. 
Decreasing $a$ increases repulsion, but it decreases attraction at a faster rate (causing a worse case of near-sightedness; we provide a discussion in Appendix~\ref{sec:varying_a}). 
On the other hand, decreasing $b$ gives a better control (Figs.~\ref{fig:repulsion}~(e,i)). Figures~\ref{fig:repulsion}~(f-h) show increasing inter-cluster distance and breaking up of previous clusters by varying $b$ to $0.6$, $0.5$, and $0.4$, respectively. 
This breaking up occurs due to the increasingly heavy-tailed nature of the kernel as $b$ decreases (heavy-tailed kernels result in smaller and distinct clusters~\citep{van2008visualizing,yang2009heavy,kobak2019heavy}. See \cite{kobak2019heavy} for how varying $b$ in UMAP modulates the tail and \cite{lu2025attraction} for varying $b$-like parameter in t-SNE). 

Although this approach separates all ten MNIST labels into distinct clusters, the relative contributions of attraction and repulsion are hard to disentangle; changing $b$ amplifies repulsion more than changing $m_d$, but it also reshapes the attraction profile.

\begin{figure}[t]
    \centering
    \includegraphics[width=0.8\linewidth]{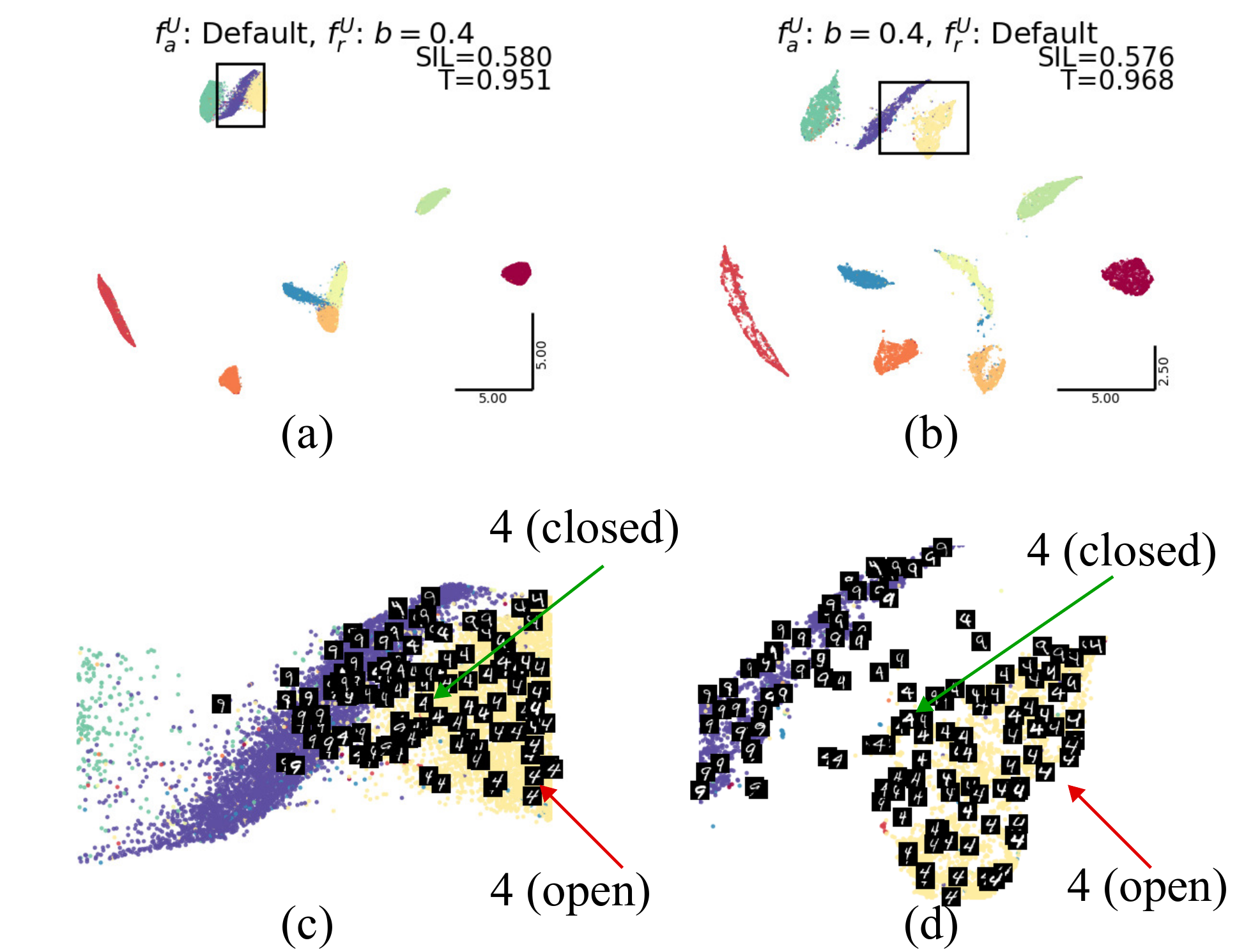}
    \caption{Embedding of the MNIST dataset with (a) default attraction shape but repulsion shape with $b=0.4$ and (b) default repulsion shape but attraction shape with $b=0.4$. The former shows the same clusters of default UMAP with increased compactness. The latter develops new structures within clusters and forms new clusters. (c) Clusters of 4 and 9 from (a). (d) The same clusters from (b). Both images show the same samples of 4s and 9s.}
    \label{fig:cross_shapes}
\end{figure}

To quantify the effect of repulsion independently, we can keep attraction fixed and vary the other. To achieve this, we modify the repulsion shape by adding a small positive value ($\varepsilon$):
\begin{align}
    f_r^M = f_r^U+\varepsilon, \label{eq:fr_modified}
\end{align}
while keeping the values of $a$ and $b$ constant, which effectively adds a regularizer in the repulsive term of the loss function (i.e, $\mathcal{L}^M=\mathcal{L}-\sum_{i,j}\varepsilon/2~||y_i-y_j||_2^2$). Using this, we can obtain stronger repulsion than previously (Fig.~\ref{fig:repulsion}(e)).
Figs.~\ref{fig:repulsion}~(j-l) show that as $\varepsilon$ increases, the inter-cluster distances also increase. 
However, the clustering properties show similarity to those obtained by varying $m_d$, and we get a loose separation of all the labels as $\epsilon$ increases.
Overall, the parameter $\varepsilon$ keeps the attraction shape unaffected, and varying $m_d$ effectively traces similar attraction shapes (Fig.~\ref{fig:repulsion}~(i)), suggesting cluster formation is governed predominantly by attraction. 

To explore further, we change either the attraction shape or the repulsion shape individually while leaving the other at the default by simply setting $b$ to $0.4$ (Fig.~\ref{fig:cross_shapes})\footnote{This manipulation is analogous to separating the roles of \emph{amplitude} and \emph{phase} in the Fourier transform of an image: changing one while holding the other fixed can produce qualitatively different percepts, even though both components contribute to the final reconstruction. See Fig. 3 of~\cite{oppenheim2005importance}}. 
The default attraction is unable to show new structures or clusters in the embedding, but the increased repulsion ($b=0.4$) gives smaller clusters than the original UMAP (Fig.~\ref{fig:cross_shapes}~(a)). 
On the other hand, when the attraction increases by setting $b=0.4$ with the default repulsion, the embedding shows additional structures within each cluster (Fig.~\ref{fig:cross_shapes}~(b)). 
Some of the older clusters even separate into smaller ones. 

For illustration, we zoom into the region where labels 4 and 9 meet (Figs.~\ref{fig:cross_shapes}(c,d)). With default attraction and stronger repulsion ($b=0.4$), the embedding mainly tightens existing groups—the 4–9 boundary remains relatively diffuse, and repulsion alone does not reveal finer substructure within the 4s (Fig.~\ref{fig:cross_shapes}(c)). In contrast, when we strengthen attraction ($b=0.4$) while keeping repulsion at its default value, the two labels separate more cleanly due to newer structure formation, and a clear internal organization emerges (Fig.~\ref{fig:cross_shapes}(d)). In particular, closed-top 4s lie between open-top 4s and 9s, which is consistent with their visual similarity to both classes. We examine a few more labels in Appendix~\ref{supp:cross_shapes}. While this has been enlightening, we do not claim that every additional cluster induced by heavier-tailed attraction is necessarily desirable; in an unsupervised embedding, such structure may reflect either meaningful heterogeneity or over-fragmentation.

This shows that attraction causes cluster formation, while repulsion makes the clusters more compact (we interpret compactness as follows: if the embeddings are rescaled to same scaling, repulsion shape does this by making smaller clusters, and otherwise it increases inter-cluster distance). By controlling attraction and repulsion behavior explicitly, we can create embeddings with various levels of granularity.

As an aside, one of the claims of LocalMAP is to separate the ten labels of MNIST into individual clusters. Here, we have achieved the same result in UMAP by simply manipulating the attraction and the repulsion shapes. In Appendix~\ref{sec:localmap}, we explore how LocalMAP embeddings exploit this interplay of attraction and repulsion to separate the clusters and relate it to the experiments of this section.

\section{Discussion and Conclusion}\label{sec:discussion}

In this work, we showed that attraction and repulsion play distinct roles in pairwise dimensionality reduction: attraction primarily governs whether clusters form, stabilize, and converge consistently, whereas repulsion mainly regulates compactness and inter-cluster spacing. By decomposing the update rules into attraction and repulsion shapes, we obtained a simple mechanistic language for comparing UMAP and related methods.

Our main finding is that UMAP’s default attraction is locally too strong at short distances: when $\lambda f_a<-1$, attractive updates become expansive rather than contractive. This explains why, under fixed learning rate, UMAP can produce fuzzy clusters, and why learning rate annealing is not merely a heuristic but a mechanism for driving the system toward the contractive regime. More generally, attraction shapes that remain within a contractive range lead to more stable optimization and easily reach cleaner cluster boundaries.

We further showed that near-sighted attraction contributes to UMAP’s sensitivity to random initialization. By increasing attraction at larger distances, we obtained embeddings that are substantially more consistent across random starts, suggesting that the objective favors a common low-dimensional structure but may fail to reach it when long-range attraction is too weak. In contrast, modifying repulsion primarily changes compactness and inter-cluster distance; it does not by itself induce the new internal structures that arise from changing attraction.

These insights into attraction–repulsion dynamics offer new tools for optimizing dimensionality reduction algorithms.  Overall, the practical message is simple: if an embedding method produces fuzzy clusters, unstable outputs, or poor random-initialization behavior, the first place to look is the attraction shape. Repulsion matters, but mainly for spacing. Beyond this, the close connection between dimensionality reduction and contrastive learning~\citep{damrich2022t,hu2022your} suggests that our approach can also enhance representation learning. Taken together, our work aims to make embeddings and their interpretations more principled, consistent, and reliable, and guide future research.

\section*{Acknowledgment}
This work was supported by AFOSR grant FA9550-21-1-0317 and the Schmidt DataX Fund at Princeton University, made possible through a major gift from the Schmidt Futures Foundation. Mohammad Tariqul Islam is supported by MIT-Novo Nordisk Artificial Intelligence Fellowship.

\section*{Software and Data}
All the data used in this paper are publicly available. \\
The MNIST dataset is available at \href{https://yann.lecun.com/exdb/mnist/}{https://yann.lecun.com/exdb/mnist/}. \\
Fashion-MNIST is available at \href{https://github.com/zalandoresearch/fashion-mnist}{https://github.com/zalandoresearch/fashion-mnist}. \\
Single-cell transcriptomes data is available at \href{https://github.com/biolab/tsne-embedding}{https://github.com/biolab/tsne-embedding}. \\
Shekhar's Transcriptomes are obtained from \href{https://www.ncbi.nlm.nih.gov/geo/query/acc.cgi?acc=GSE81904}{https://www.ncbi.nlm.nih.gov/geo/query/acc.cgi?acc=GSE81904}. \\
20NG dataset is obtained from \href{https://scikit-learn.org/stable/modules/generated/sklearn.datasets.fetch_20newsgroups_vectorized.html}{https://scikit-learn.org/20NG}. \\
Additional details are provided in the Implementation Details section in Appendix~\ref{sec:implementation_details}.




\bibliography{thesis} 
\bibliographystyle{tmlr}

\appendix
\section*{Appendix}
Appendix~\ref{sec:datasets} gives a brief description of the datasets used in this paper. In Appendix~\ref{supp:derivation_atr}, we provide necessary derivations regarding attraction and repulsion shapes of UMAP.  In Appendix~\ref{sec:more_lr} we explore using a constant learning rate following our discussion in the main text. 
We formally define Procrustes distance and matrix and discuss the metrics in Appendix~\ref{sec:metrics_all}. 
We explore additional datasets for the random initialization experiment in Appendix~\ref{sec:additional_datasets}. 
We discuss the UAMP embeddings for $b>1$ in Appendix~\ref{sec:b_geq_1} and for varying $a$ in Appendix~\ref{sec:varying_a}. We extend the discussion on Fig.~\ref{fig:cross_shapes} in Appendix~\ref{supp:cross_shapes}.
We provide extended discussion of NEG-$t$-SNE and UMAP in Appendix~\ref{sec:paraUMAP}. 
After that, in Appendix~\ref{sec:alternate_algorithms}, we explore alternate dimensionality reduction methods. Then, for completion, we discuss the construction of the high-dimensional graph in these methods in Appendix~\ref{supp:umap_graph}. 
In Appendix~\ref{sec:details_flips_avg_distance} we discuss the methods used to quantify flips and expansions, and mean distance of the neighbors. 
In Appendix~\ref{sec:controlled_flips}, we inject flips within UMAP optimization and explore its effects on the embeddings. 
Appendix~\ref{sec:possible_path} discusses a synthetic Separated-Neighbor dataset, a suspected pathological case where attraction and repulsion may break down. 
In Appendix~\ref{supp:more_on_la_lb}, we show the detailed result of varying $\lambda_a$ and $\lambda_r$ (from Fig.~\ref{fig:atr_repl_shape}) for UMAP, NEG-$t$-SNE, and PaCMAP on different datasets. 
Finally, we provide implementation details in Appendix~\ref{sec:implementation_details}. 

\section{Datasets}\label{sec:datasets}

Before analyzing the algorithms and visualizations, we briefly describe the datasets used throughout the paper. Our primary benchmark is MNIST~\citep{lecun2010mnist}, which contains 60,000 training (and 10,000 test) $28\times 28$ grayscale images of handwritten digits (0–9). 
When embedded into two dimensions, MNIST typically exhibits ten largely separated digit clusters with minor overlap, making it a canonical visualization dataset for evaluating dimensionality reduction methods~\citep{damrich2022t,wang2021understanding}. 
We extend most experiments to two additional datasets: Fashion-MNIST (FMNIST)~\citep{xiao2017_online}, a drop-in replacement for MNIST with the same size and image format but 10 classes of clothing items—several of which are substantially more entangled—and a single-cell transcriptomic dataset from~\cite{macosko2015highly} (“Transcriptomes”), consisting of 44,808 mouse retinal cells annotated into 12 broad classes. 
For analyses involving random initialization, we additionally use a second retinal single-cell dataset from~\cite{shekhar2016comprehensive} (27,499 cells; 19 classes) and the 20 Newsgroups corpus (20NG)~\citep{twenty_newsgroups_113}, which contains 18,846 Usenet posts across 20 discussion groups. 
MNIST and FMNIST are used in their standard form, while the transcriptomic datasets and 20NG are first reduced using PCA to 50 and 100 dimensions, respectively, before applying the embedding methods.

\section{Proofs and Derivations}\label{supp:derivation_atr}

\subsection{Proof of Proposition~\ref{thm:attr}}
\begin{proof}
We subtract Eq.~(\ref{eq:umap_attr_dynamics1}) from Eq.~(\ref{eq:umap_attr_dynamics2}) and take a norm:
\begin{align}
    ||y_i^{t+1}-y_j^{t+1}|| = |1+2\lambda f_a| ||y_i^{t}-y_j^{t}||.
\end{align}
This distance contracts as long as $|1+2\lambda f_a|<1$, i.e., provided
\begin{align}
    -1<\lambda f_a<0, \label{eq:gamma_u_range}
\end{align}
\end{proof}

\subsection{Proof of Proposition~\ref{thm:repl}}
\begin{proof}
    We subtract Eq.~(\ref{eq:umap_rep_dynamics1}) from Eq.~(\ref{eq:umap_rep_dynamics2}) and take a norm:
    \begin{align}
        ||y_i^{t+1}-y_j^{t+1}|| = |1+\lambda f_r| ||y_i^{t}-y_j^{t}||.
    \end{align}
    This distance increases when $|1+\lambda f_r|>1$, i.e., 
    \begin{align}
         \lambda f_r<-2  \text{~~or~~}  f_r>0.
    \end{align} 
    From the definition of $f_r$, the latter suffices.
\end{proof}

\subsection{Attraction Shape:}
We use a general form of the low-dimensional affinity function, i.e., $q_{ij}=(\gamma+a||y_i-y_j||_1^{2b})^{-1}$, to derive the attraction shape. It reduces to UMAP for $\gamma=1$ and to NEG-$t$-SNE for $\gamma=2$. The attractive force is given by
\begin{align}
    \nabla_{y_i} \log q_{ij} &= - \nabla_{y_i} \log{(\gamma+a (||y_i-y_j||_2^2)^b)}  \nonumber \\
    &= - \frac{1}{\gamma+a (||y_i-y_j||_2^2)^b} \nabla_{y_i} (\gamma+a (||y_i-y_j||_2^2)^b) \nonumber \\
    &= - \frac{1}{\gamma+a (||y_i-y_j||_2^2)^b} ab (||y_i-y_j||_2^2)^{b-1} \nabla_{y_i} ||y_i-y_j||_2^2 \nonumber \\
    &= - \frac{2ab (||y_i-y_j||_2^2)^{b-1}}{\gamma+a (||y_i-y_j||_2^2)^b} (y_i-y_j).
\end{align}

Defining $\zeta=||y_i-y_j||_2$, the first term gives the attraction shape as:
\begin{align}
    f_a(\zeta) = -\frac{2ab\zeta^{2(b-1)}} {\gamma+a \zeta^{2b}}
\end{align}

\subsection{Condition for strictly increasing $f_a^U$:} 
Its behavior with distance can be characterized by computing the derivative of $f_a^U$:
\begin{align}
    \frac{df_a^U(\zeta)}{d\zeta} = - \frac{2ab\zeta^{2b-3}}{\gamma+a\zeta^{2b}} \left( (b-1) - \frac{ab\zeta^{2b}}{\gamma+a\zeta^{2b}} \right).
\end{align}
\begin{figure}[t]
    \centering
    \includegraphics[width=0.8\linewidth]{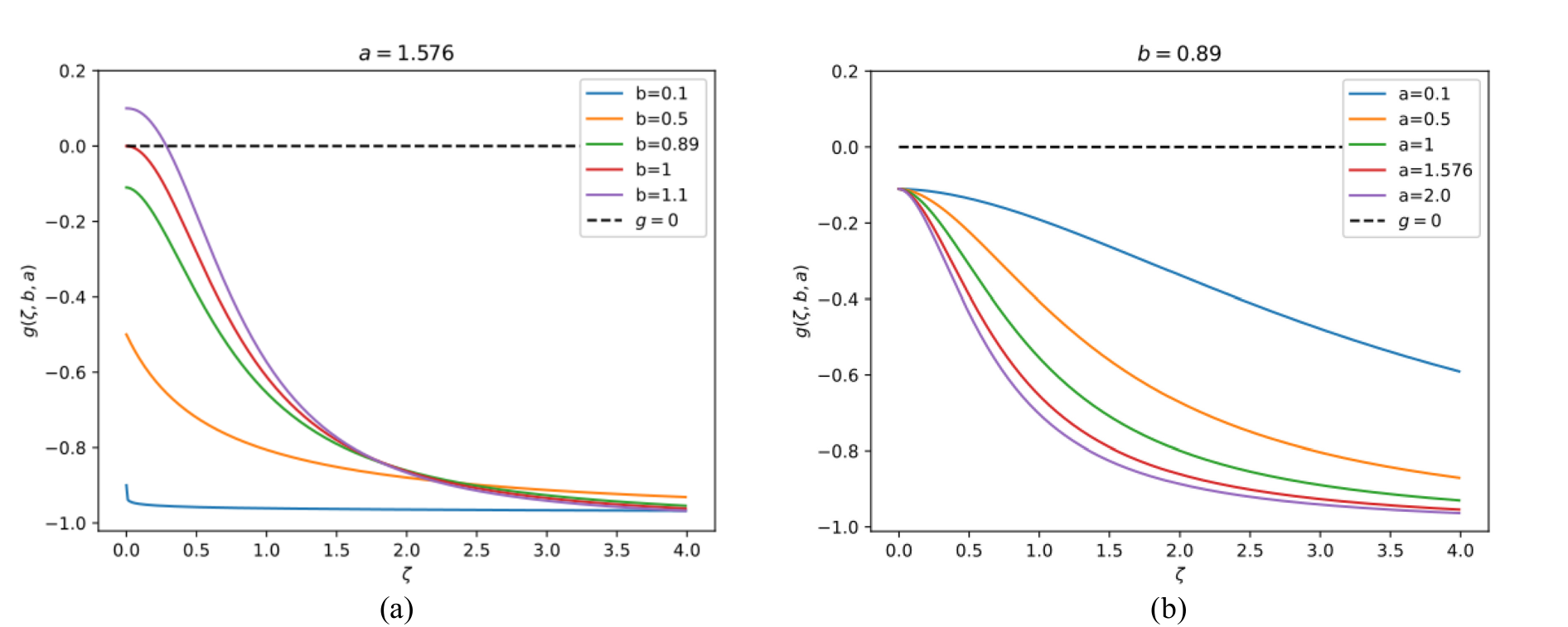}
    \caption{Values of $g(\zeta,b,a)$ for (a) $a$ fixed at $1.576$ and (b) $b$ fixed at $0.89$. }
    \label{fig:g_b_a_fig}
\end{figure}
This leads to a strictly increasing condition ($\frac{df_a^U}{d\zeta}> 0$),
\begin{align}
    g(\zeta,b,a) < 0, \label{eq:strict_increas_ineq}
\end{align}
where $g(\zeta,b,a) = b-1 - \frac{ab\zeta^{2b}}{\gamma+a\zeta^{2b}}$. This inequality is valid as long as $0<b\leq1$ (using the derivative and asymptotes of $g$). Figure~\ref{fig:g_b_a_fig} shows values of $g$ for different $b$ and $a$. As $b$ increases above $1$, the inequality (\ref{eq:strict_increas_ineq}) no longer holds.

\subsection{Repulsion Shape:}
The repulsive force, using the general form of the low-dimensional affinity, is given by
\begin{align}
    \nabla_{y_i} \log{(1-q_{ij})} &= \nabla_{y_i} \log{\left[ 1 - \frac{1}{\gamma+a (||y_i-y_j||_2^2)^b} \right]} \nonumber \\
    &= \frac{\gamma+a (||y_i-y_j||_2^2)^b}{(\gamma-1)+a (||y_i-y_j||_2^2)^b} \nabla_{y_i} \left[ 1 - \frac{1}{\gamma+a (||y_i-y_j||_2^2)^b} \right] \nonumber \\
    &= \frac{1}{\gamma-1+a (||y_i-y_j||_2^2)^b} \frac{ab (||y_i-y_j||_2^2)^{b-1}}{\gamma+a (||y_i-y_j||_2^2)^b} \nabla_{y_i} ||y_i-y_j||_2^2 \nonumber \\
    &= \frac{2ab (||y_i-y_j||_2^2)^{b-1}}{(\gamma-1+a (||y_i-y_j||_2^2)^b)(\gamma+a (||y_i-y_j||_2^2)^b)} (y_i-y_j).
\end{align}
The first term gives the repulsion shape as:
\begin{align}
    f_r(\zeta=||y_i-y_j||_2) = \frac{2ab \zeta^{2(b-1)}}{(\gamma-1+a \zeta^{2b})(\gamma+a \zeta^{2b})}.
\end{align}
Generally, $f_r>0$.

\subsection{Loss functions due to modified attraction and repulsion shape:}

The cost function of the attractive term with the modification in Eq.~(\ref{eq:fa_modified}) is given by
\begin{align}
    -\int (f_a^U(||y_i-y_j||_2) - \beta ||y_i-y_j||_2) (y_i-y_j) dy_i = -\log(q_{ij}) + \frac{\beta}{3} ||y_i-y_j||_2^3,
\end{align}
whereas the repulsive term due to Eq.~(\ref{eq:fr_modified}) is given by
\begin{align}
    -\int (f_r^U(||y_i-y_j||_2) + \epsilon) (y_i-y_j) dy_i = -\log(1-q_{ij}) - \frac{\epsilon}{2} ||y_i-y_j||_2^2.
\end{align}
In both cases, the additional term acts as a regularizer, simply using norms. However, when we directly add this term to attraction and repulsion shapes, we can easily explain what each term is doing. For the attractive term, it is increasing far-sightedness, whereas for the repulsive term, it adds a constant repulsive coefficient.

\clearpage 
\section{More on UMAP's Learning Rate}\label{sec:more_lr}\label{sec:const_lr}

\begin{figure}[h]
    \centering
    \includegraphics[width=0.7\linewidth]{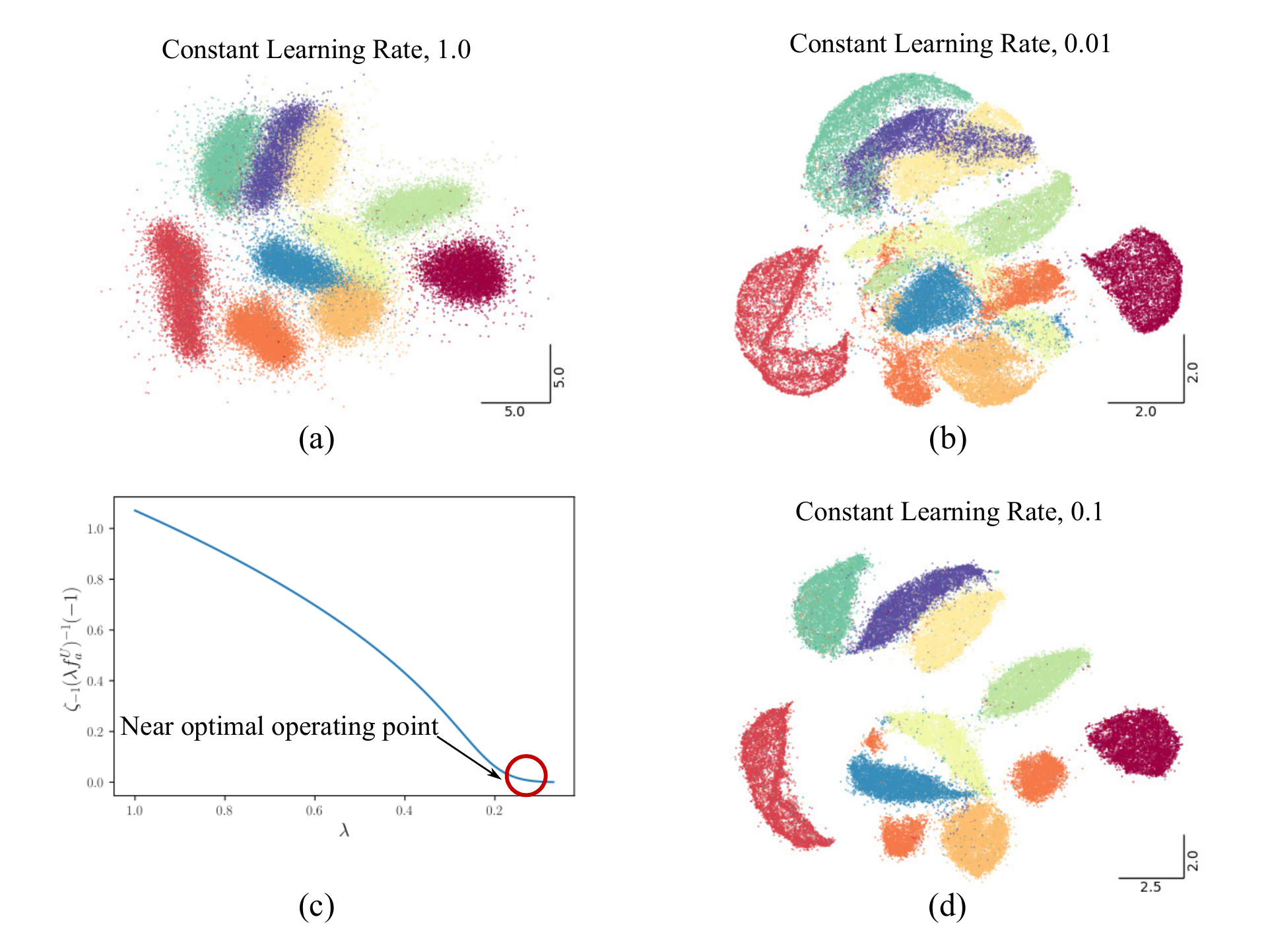}
    \caption{Effect of constant learning rate in embeddings. (a) When the learning rate is too high ($\lambda=1.0$), the embeddings are diffuse (because of the high value of $\zeta_{-1}$). (b) When the learning rate is too low ($\lambda=0.01$), clusters don't form (the strengths of attraction and repulsion are too low). (c) $\zeta_{-1}$ decreases nonlinearly as the learning rate decreases. The goal of the algorithm is to reduce $\zeta_{-1}$ while keeping effective levels of attraction and repulsion. (d) Distinct and compact clusters form at a constant, near-optimal learning rate $\lambda=0.1$.}
    \label{fig:constant_lr}
\end{figure}

As we discussed in Sections~\ref{sec:umap_anal} and~\ref{sec:neg} of the main text, it is believed that UMAP requires learning rate annealing  (Fig~\ref{fig:constant_lr}). 
To explain this, in Section~\ref{sec:umap_anal}, we defined the concept of minimum distance for contraction ($\zeta_{-1}$) and established that reducing this value through learning rate annealing results in compact clusters. 
Later, in Section~\ref{sec:neg}, we compared attraction shapes of UMAP (for $a=1.0$ and $b=1.0$) and Neg-$t$-SNE and explained that Neg-$t$-SNE can withstand a constant learning rate of $1.0$ better than UMAP because it's attraction shape resides within $[-1,0]$ while UMAP's is within $[-2,0]$. 
Following the same logic, we showed that UMAP can withstand a constant learning of $0.5$ by making its attraction shape stay within $[-1,0]$ (Figs.~\ref{fig:neg}(f,h)). 

However, the embeddings are still better if the learning rate anneals (for a wide range of parameters of $a$ and $b$). 
This is because the goal of the algorithm is to eventually reduce $\zeta_{-1}$ to zero and it occurs when the learning rate reduces close to zero. 
Otherwise, the embedding becomes diffused (Fig~\ref{fig:constant_lr}(a)). 
On the other hand, if the learning rate is too low, to begin with, the strength of attraction and repulsion is too low, and thus no clear clusters can form (Fig~\ref{fig:constant_lr}(b)). 
By analyzing $\zeta_{-1}$ vs $\lambda$ curve (Fig~\ref{fig:constant_lr}(c)), we see that a near optimal point is $\lambda=0.1$ where the value of $\zeta_{-1}$ is low with considerable attraction and repulsion strength. 
Setting the constant learning rate to $0.1$, we obtain compact clusters with clear boundaries (Fig~\ref{fig:constant_lr}(d)).




\section{Metrics}\label{sec:metrics_all}
\subsection{Procrustes Distance and Procrustes Matrix}\label{sec:procrustes_analysis}

The Procrustes distance~\citep{gower1975generalized} measures similarity between two point clouds $\{x\}$ and $\{y\}$ under linear transformations, viz. translation, scaling, and rotation. 
Operationally, we hold the former fixed and vary the latter until the two sets are in maximum alignment. 
Let $\{y'\}$ be the transformation of $\{y\}$ that achieves this objective. Then the Procrustes distance is given by
\begin{align}
    p_d(\{x\},\{y\}) = \sqrt{\sum_k (x_k-y_k')}.
\end{align}
The Procrustes distance is a linear measure that has proven useful in a variety of settings~\citep{mcinnes2018umap,islam2022manifold,kotlov2024procrustes}.

Here, we use the Procrustes distance to measure the consistency of embedding under random initialization. Let $\{x\}_p$ be a reference embedding (using PCA initialization in our experiments), and $X_r=\{\{x\}_i|i=1,2,3,\dots,N\}$ be a set of $N$ embeddings obtained from random initialization. The similarity of the embeddings can be quantified by taking a mean and standard deviation of the strictly lower triangular values of the matrix $P$ (reported as mean$\pm$std in Figs.~\ref{fig:randomness}, ~\ref{fig:randomness_fmnist}, ~and~\ref{fig:randomness_macosko}), with
\begin{align}
    \text{mean} = \frac{2}{N(N-1)} \sum_{i,j (i>j)} P_{i,j}, \text{~~and~~}  \text{std} = \sqrt{\sum_{i,j (i>j)}\frac{2(P_{i,j}-\text{mean})^2}{N(N-1)}}
\end{align}

The indexes of $X_r$ can be sorted such that $p_d(\{x\}_i,\{x\}_p)\leq p_d(\{x\}_{i+1},\{x\}_p)$, so that the diagonal values of the Procrustes matrix are given by
\begin{align}
    P_{i,i} = p_d(\{x\}_i,\{x\}_p)
\end{align}
and the off-diagonal values are given by
\begin{align}
    P_{i,j} = p_d(\{x\}_i,\{x\}_j).
\end{align}

\begin{figure}[t]
    \centering
    \includegraphics[width=\linewidth]{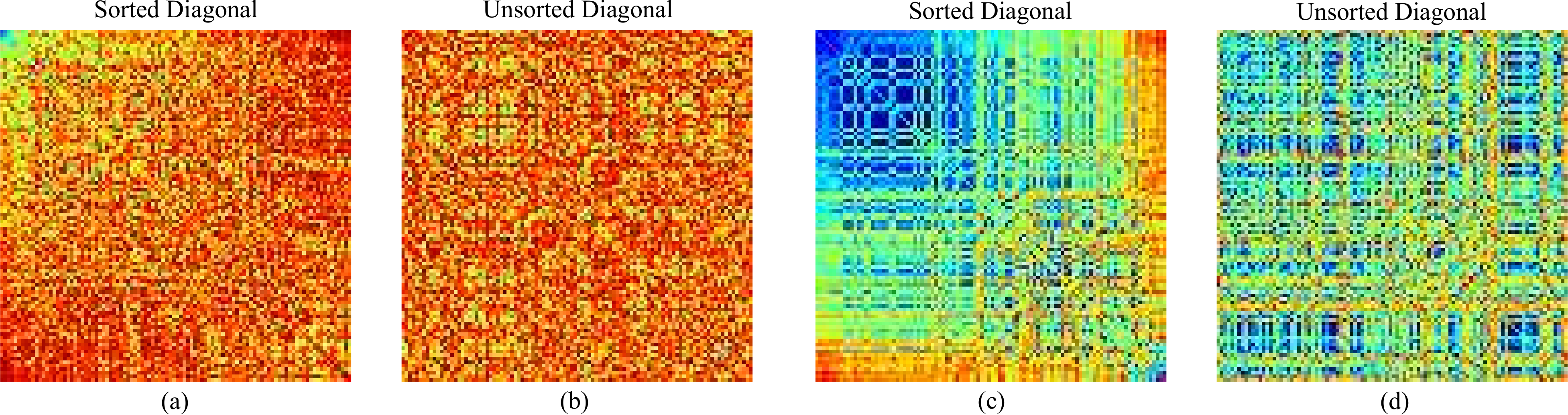}
     \caption{Effect of sorting the diagonal of Procrustes matrix on its visualization. (a) Procrustes matrix reproduced from~\ref{fig:randomness}(f) for the default UMAP attraction shape. The diagonal of the matrix is sorted by $p_d$ of a sample embedding with PCA initialization. (b) The same data as in (a), but the diagonal is unsorted (or randomly sorted). (c) Procrustes matrix reproduced from~\ref{fig:randomness}(g) for modified attraction shape, where the diagonal of the matrix is sorted by $p_d$ of a sample embedding with PCA initialization. (d) The same data as in (c), but the diagonal is unsorted.}
    \label{fig:sort_or_not_sort}
\end{figure}

Numerically, the sorting of the diagonal adds little value. Visually, however, the ordering reveals the underlying self-similarity of the embeddings. For example, in Fig.~\ref{fig:sort_or_not_sort}(a), the sorted diagonal shows that similar embeddings clump in the upper left region of the matrix. 
When we use a modified attraction shape, the number of points that are similar to each other increases (as shown by the larger blue region in Fig.~\ref{fig:sort_or_not_sort}(c)), indicating the presence of a metastable point in the embedding algorithm. 
On the other hand, when the diagonal is unsorted, this region disappears, and any sense of similar embeddings is lost (Fig.~\ref{fig:sort_or_not_sort}(b,d)).

\subsection{Trustworthiness}

The trustworthiness metric~\citep{venna2001neighborhood} quantifies how well local neighborhoods are preserved after dimensionality reduction:
\begin{align}
    T = 1 - 
\frac{2}{nk(2n-3k-1)} \sum_{i=1}^n \sum_{y_j\in \text{KNN}(y_i,k)} \max(0,r(i,j)-k)
\end{align}
where $KNN(y_i,k)$ is the $k$-NN graph in the embedding space and $r(i,j)$ is the rank of $x_j$ in the high-dimensional $k$-NN graph. In practice, $k$ is often set to 5, assessing preservation of each point’s five nearest neighbors. For computational efficiency, when we report trustworthiness, we randomly sampled 10,000 indices. When comparing different embeddings, we used the same indices.

\subsection{Silhouette Score}
While the silhouette score~\citep{rousseeuw1987silhouettes} aims to assess clustering algorithms, we use it to evaluate label separation in the embeddings, i.e, how well the ground truth labels have been clustered. The idea is that the embedding algorithms naturally produce clusters and should separate the labels as much as possible. For a point $y_i$ in a point cloud $\{y\}$, let $a_i$ be the mean distance from $y_i$ to other points in its own label, and let $b_i$ be the minimal mean distance from $y_i$ to points in any other label. The pointwise silhouette is thus given by
\begin{align}
    s_i = \frac{b_i-a_i}{\max{\{a_i,b_i\}}}
\end{align}
This value lies within $[-1,1]$. A value close to $1$ means that $y_i$ fits within its own label cluster, near $0$ suggests a boundary point, and close to $-1$ indicates failed label separation. The overall silhouette score is
\begin{align}
    \mathrm{SIL} = \frac{1}{N} \sum s_i.
\end{align}
We computed the silhouette score for the whole embedding (no random sampling) using Euclidean distances.

\section{Effect of Random Initialization in Additional Datasets}\label{sec:additional_datasets}

In the main text (Section~\ref{sec:randomness}), we showed results only on the MNIST dataset. Here we perform the same experiment on the Fashion-MNIST (FMNIST), both single-cell transcriptomes set and 20NewsGroup data (Fig.~\ref{fig:randomness_fmnist}-~\ref{fig:randomness_20newsgroup}, respectively). 
The main conclusion remains unchanged: modified and composite attraction shapes, such as those that increase attraction at large distances, significantly improve the consistency of reconstruction.

\newpage
\begin{figure}[t]
    \centering
    \includegraphics[width=\linewidth]{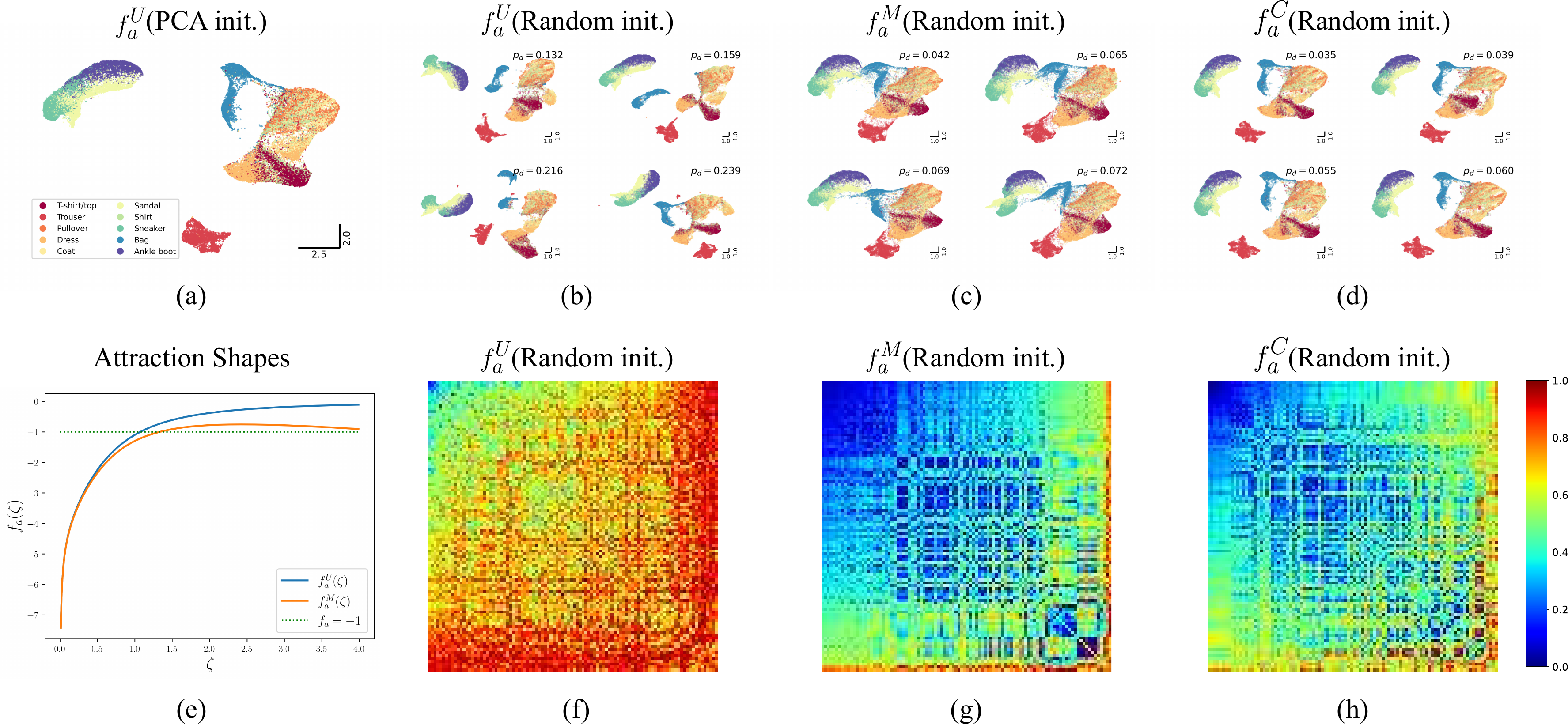}
     \caption{Effect of random UMAP initialization on different attraction shapes on FMNIST data. (a) Mapping using PCA as a standard. (b-d) Four mappings with the lowest Procrustes distance ($p_d$) from the embedding in (a) for (b) default, (c) modified, and (d) composite attraction shapes. (e) Default UMAP and modified attraction shapes. (f-h) Procrustes matrix obtained from 100 runs of (f) default ($0.72\pm0.15$), (g) modified ($0.38\pm0.19$), and (h) composite ($0.43\pm0.18$) attraction shapes.}
    \label{fig:randomness_fmnist}
\end{figure}
\begin{figure}[h]
    \centering
    \includegraphics[width=\linewidth]{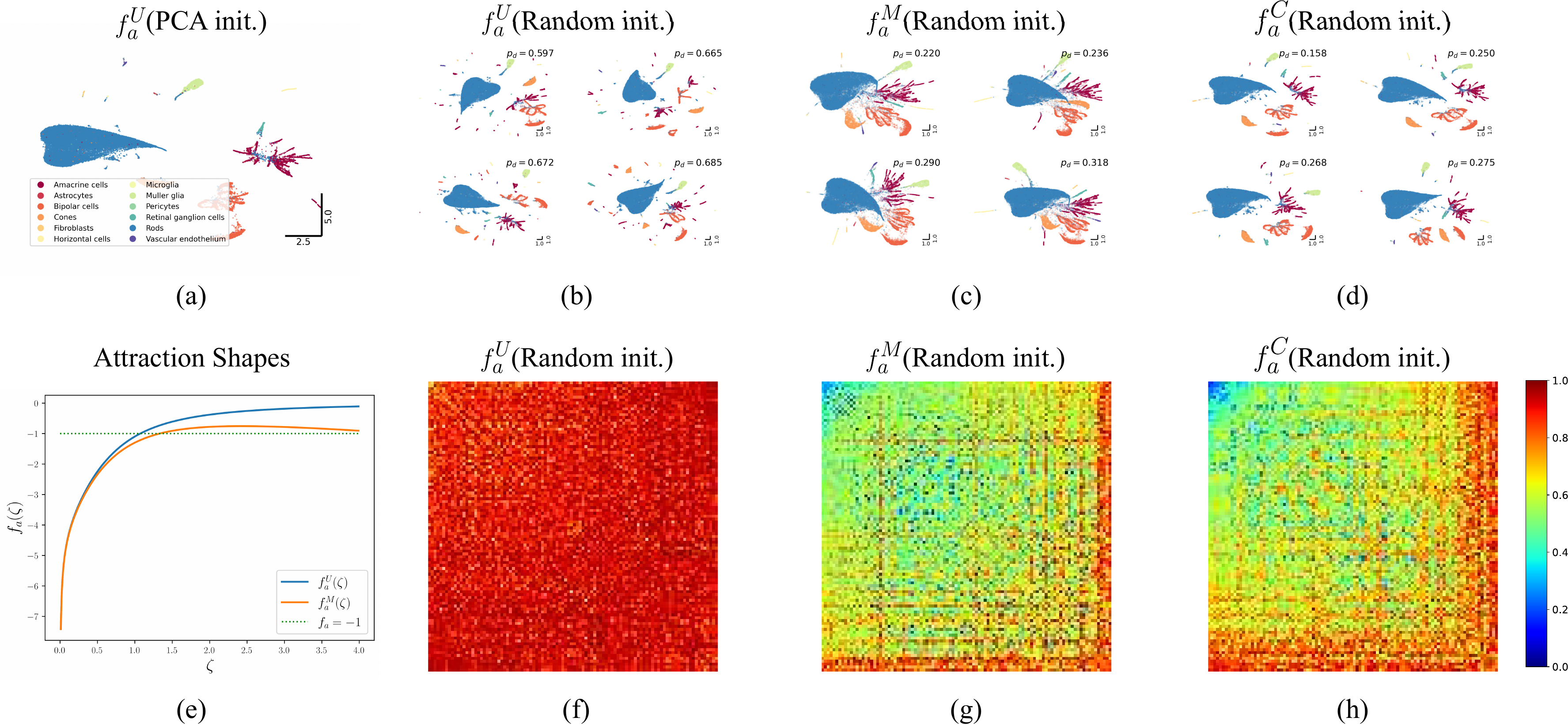}
     \caption{Effect of random UMAP initialization on different attraction shapes on single-cell transcriptomes data. (a) Mapping using PCA as a standard. (b-d) Four mappings with the lowest Procrustes distance ($p_d$) from the embedding in (a) for (b) default, (c) modified, and (d) composite attraction shapes. (e) Default UMAP and modified attraction shapes. (f-h) Procrustes matrix obtained from 100 runs of (f) default ($0.91\pm0.06$), (g) modified ($0.61\pm0.13$), and (h) composite ($0.64\pm0.15$) attraction shapes.}
    \label{fig:randomness_macosko}
\end{figure}

\newpage
\begin{figure}[t]
    \centering
    \includegraphics[width=\linewidth]{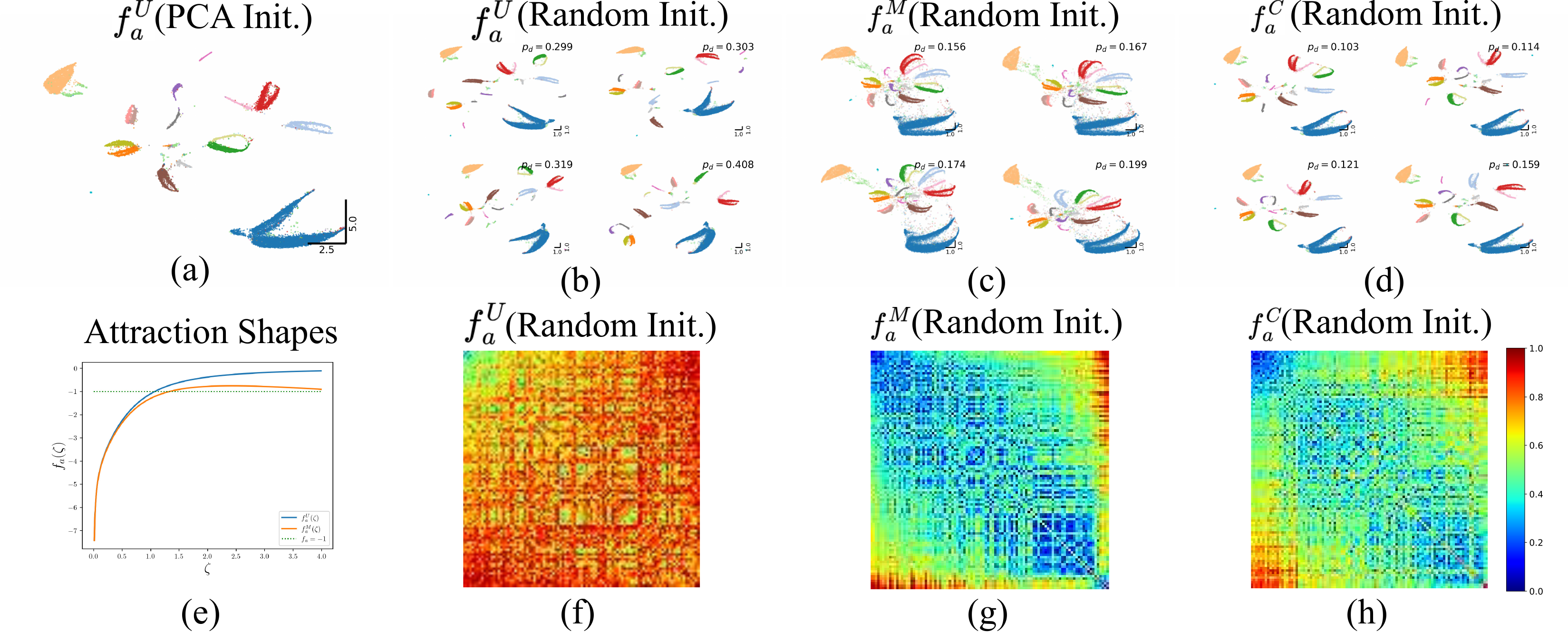}
     \caption{Effect of random UMAP initialization on different attraction shapes on Transcriptomes data from~\citep{shekhar2016comprehensive} (a) Mapping using PCA as a standard. (b-d) Four mappings with the lowest Procrustes distance ($p_d$) from the embedding in (a) for (b) default, (c) modified, and (d) composite attraction shapes. (e) Default UMAP and modified attraction shapes. (f-h) Procrustes matrix obtained from 100 runs of (f) default ($0.77\pm0.13$), (g) modified ($0.42\pm0.17$), and (h) composite ($0.48\pm0.17$) attraction shapes.}
    \label{fig:randomness_shekhar}
\end{figure}
\begin{figure}[h]
    \centering
    \includegraphics[width=\linewidth]{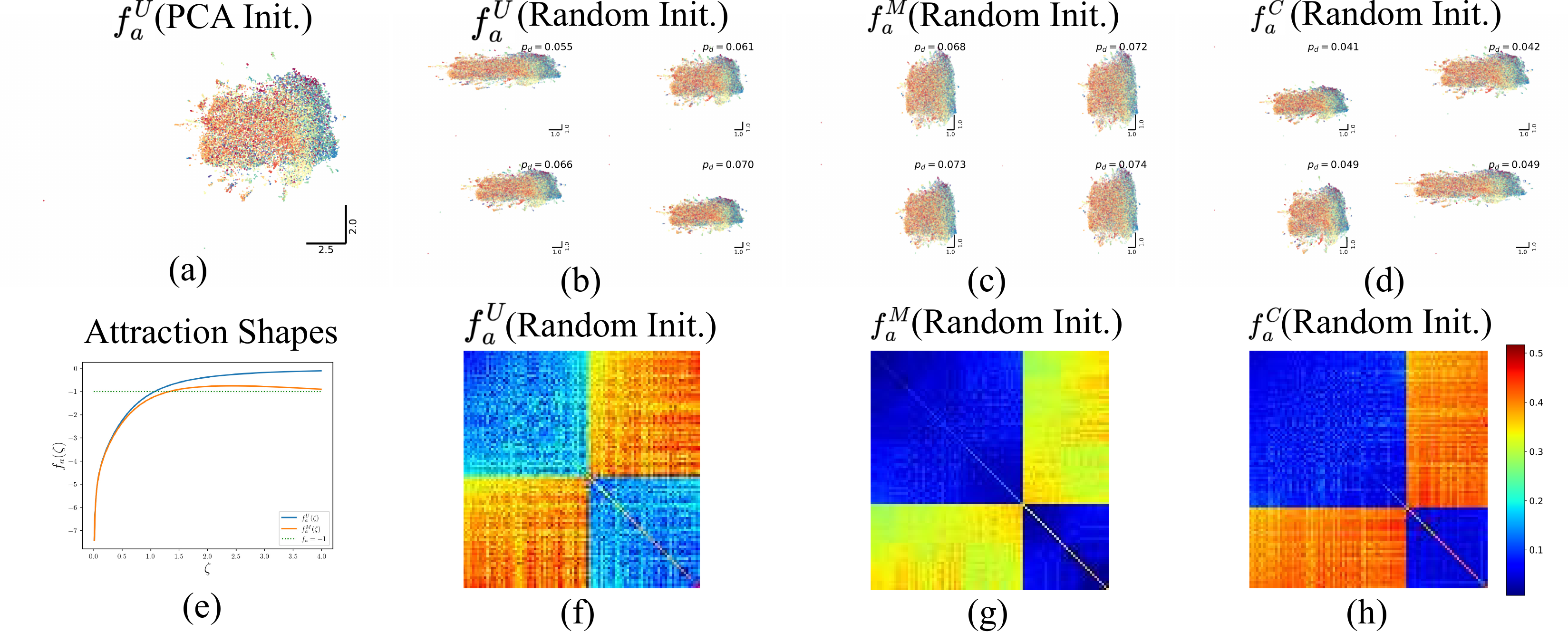}
     \caption{Effect of random UMAP initialization on different attraction shapes on 20NewsGroup (20NG) data. (a) Mapping using PCA as a standard. (b-d) Four mappings with the lowest Procrustes distance ($p_d$) from the embedding in (a) for (b) default, (c) modified, and (d) composite attraction shapes. (e) Default UMAP and modified attraction shapes. (f-h) Procrustes matrix obtained from 100 runs of (f) default ($0.27\pm0.13$), (g) modified ($0.18\pm0.14$), and (h) composite ($0.22\pm0.17$) attraction shapes. In 20NG, the embedding consistently settles into one of two dominant modes. The modified and composite shapes show a clear bias toward the mode that lies nearer to the PCA-initialized configuration.}
    \label{fig:randomness_20newsgroup}
\end{figure}

\clearpage

\section{Discussion regarding $b>1$ in UMAP}\label{sec:b_geq_1}

\begin{figure}[h]
    \centering
    \includegraphics[width=1.0\linewidth]{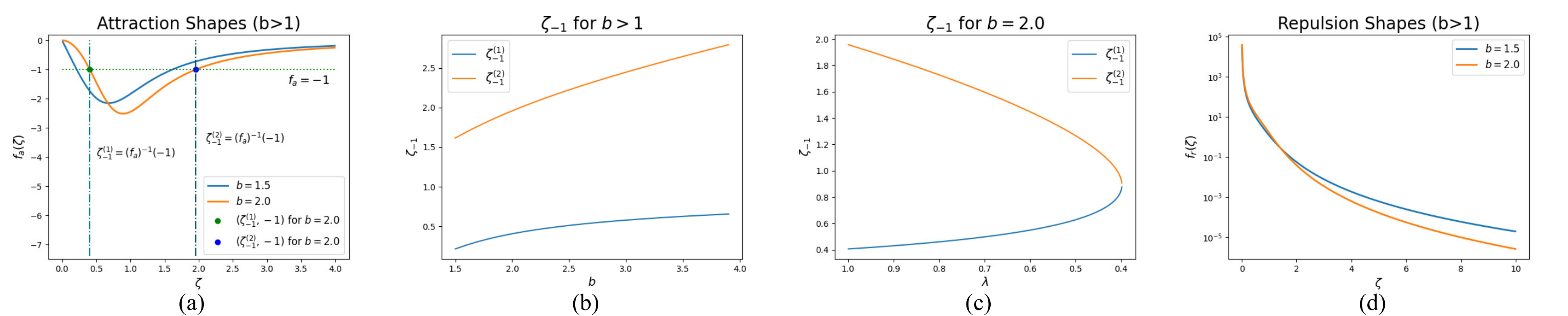}
    \caption{Attraction and repulsion shapes for $b>1$. (a) Attraction shapes. (b) $\zeta_{-1}$ as $b$ varies. (c) $\zeta_{-1}$ as $\lambda$ varies for $b=2.0$. (d) Repulsion shapes.}
    \label{fig:bgeq1_shapes}
\end{figure}

\begin{figure}[h]
    \centering
    \includegraphics[width=0.6\linewidth]{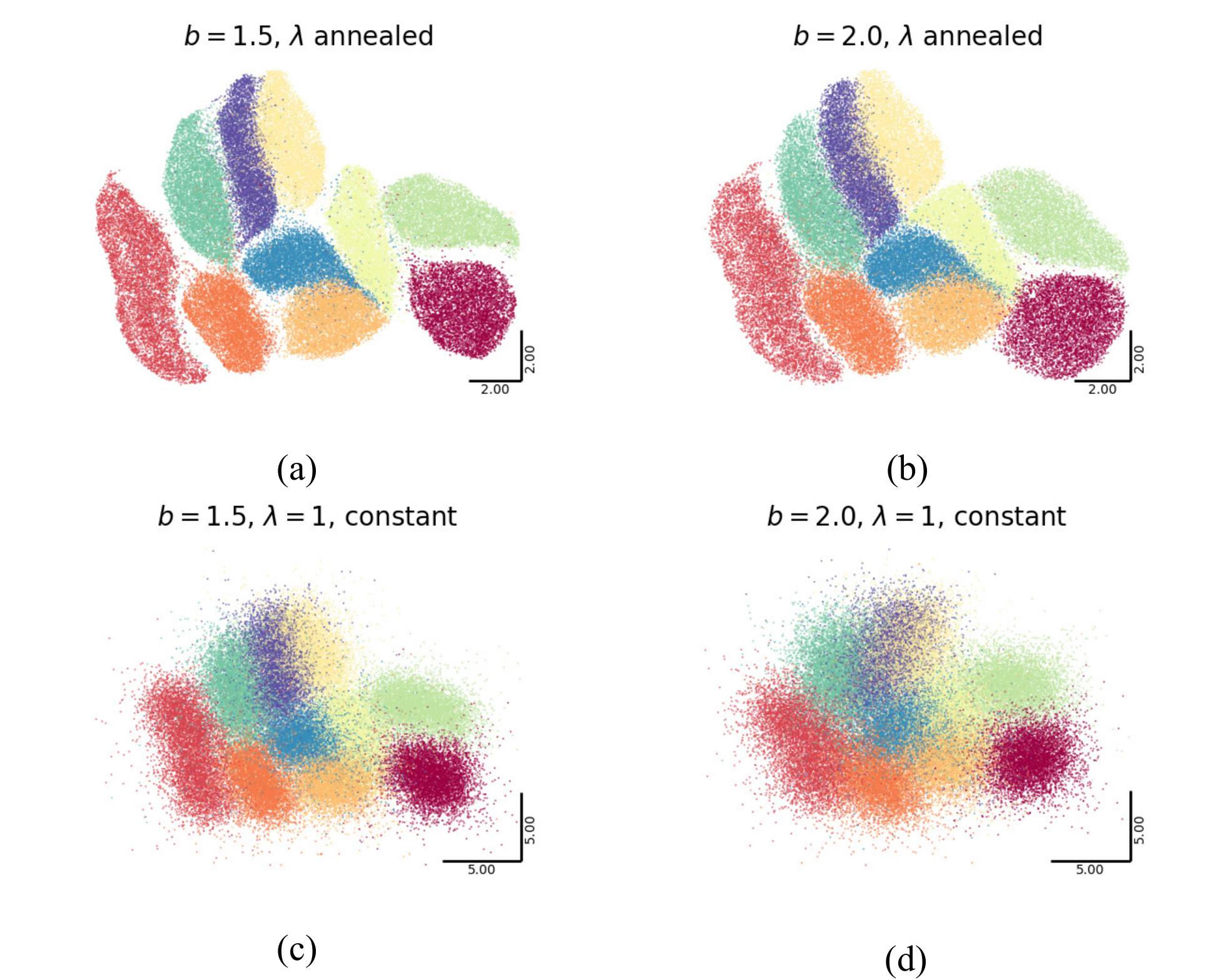}
    \caption{UMAP embeddings when $b>1$. Left column: $b=1.5$, right column: $b=2.0$. Top row: $\lambda$ annealed, bottom row: $\lambda=1$, constant. For $b=2.0$, $\zeta^{(2)}_{-1}$ is larger than that of $b=1.5$ and thus, the clusters are more diffused (i.e., they overlap more).}
    \label{fig:bgeq1_embeds}
\end{figure}

In the main text, we focused solely on $b\leq1$. Here we provide a brief note on $b>1$. The attraction shape changes significantly for $b>1$ (Fig.~\ref{fig:bgeq1_shapes}~(a)). The shape crosses $f_a=-1$ line at two points, denoted as $\zeta^{(1)}_{-1}$ and $\zeta^{(2)}_{-1}$ (with $\zeta^{(2)}_{-1}>\zeta^{(1)}_{-1}$). The embedding contracts as long as $f_a<\zeta^{(1)}_{-1}$ and $f_a>\zeta^{(2)}_{-1}$. The in between region, $\zeta^{(1)}_{-1}<f_a<\zeta^{(2)}_{-1}$, causes expansion. Thus, $\zeta^{(2)}_{-1}$ is the primary point around which a point contracting from a larger distance will oscillate. 

As $b$ increases, the gap between $\zeta^{(1)}_{-1}$ and $\zeta^{(2)}_{-1}$ increases (Fig.~\ref{fig:bgeq1_shapes}~(b)). On the other hand, decreasing learning rate, $\lambda$, decreases this gap, eventually the attraction shape confines to $[-1,0]$ (Fig.~\ref{fig:bgeq1_shapes}~(c)). As $b$ increases, repulsion strength at larger distances decreases (Fig.~\ref{fig:bgeq1_shapes}~(d)). 

Overall, based on our analysis, increasing $b$, diffuses the embedding more and reduces inter-cluster distance (Fig.~\ref{fig:bgeq1_embeds}~(a$\to$b) and~(c$\to$d)). While diffused clusters are easily visible in the embeddings, the reduction of inter-cluster distance isn't that clear. We quantify it using average inter-label distance (for MNIST, inter-cluster and inter-label distances are highly correlated). In $b=1.5$ (Fig.~\ref{fig:bgeq1_embeds}~(a)), the inter-label distance is $6.00$ and for $b=2.0$ (Fig.~\ref{fig:bgeq1_embeds}~(b)), it is $5.45$ (if the embeddings are rescaled to unit variance, the inter-label distances are $1.79$ and $1.77$, respectively). This follows our arguments regarding attraction and repulsion.

\clearpage

\section{Varying $a$ in UMAP}\label{sec:varying_a}

\begin{figure}[h]
    \centering
    \includegraphics[width=1.0\linewidth]{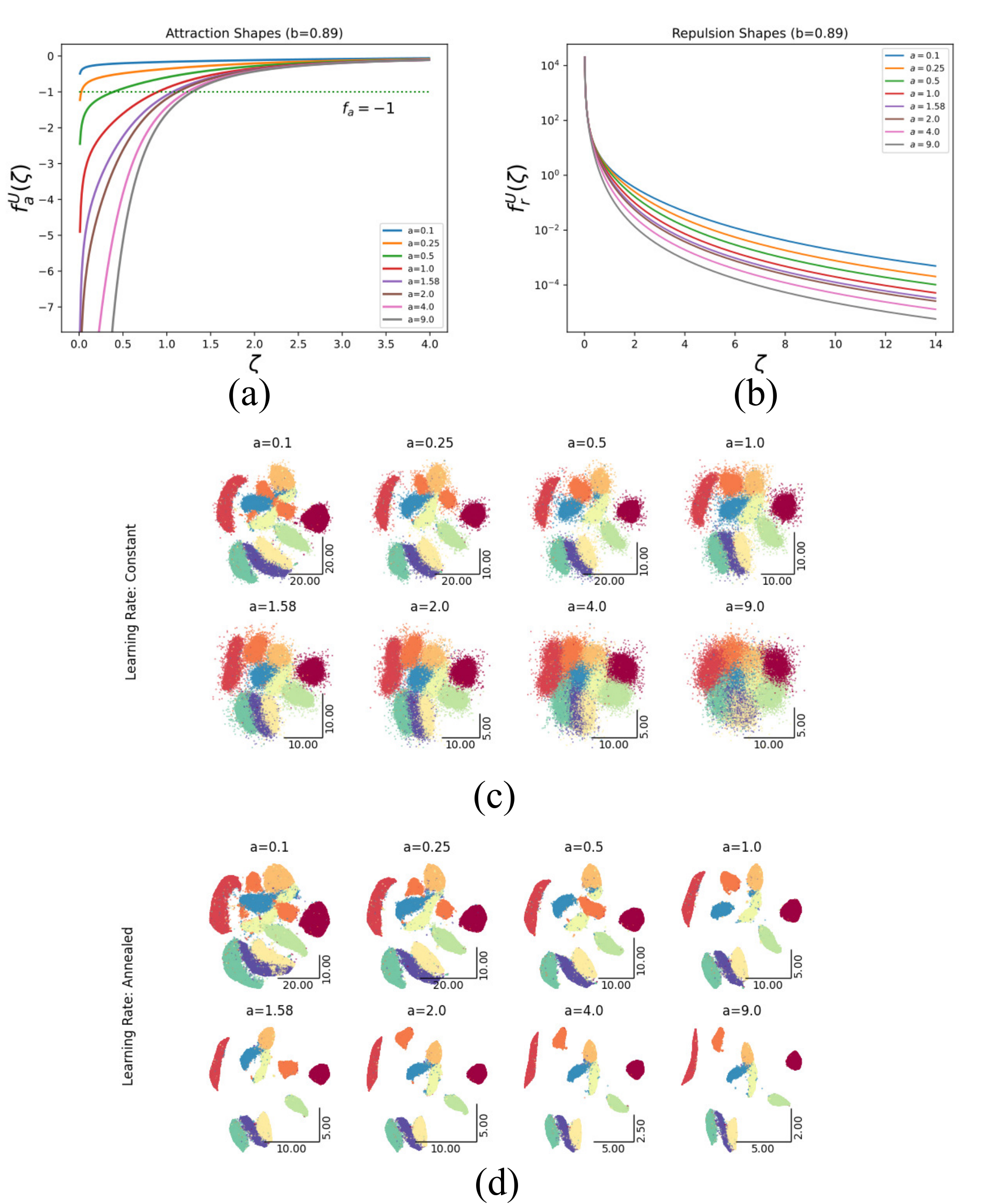}
    \caption{Effect of varying $a$ in UMAP. (a) Attraction and (b) repulsion shapes as $a$ varies ($b=0.89)$. MNIST embeddings for varying $a$ when the learning rate is (c) constant and (d) annealed.}
    \label{fig:varying_a}
\end{figure}

In section~\ref{sec:rep_anal} of the main text, we varied $b$ to explore the effect of attraction and repulsion shapes, and discussed compactness and structure creation within the embeddings. The second controllable parameter is $a$. A simple analysis of the kernel shows that $a$ effectively rescales the distances ($1/\log{(1+ad_{ij}^{2b})} = 1/\log{(1+ (a^{1/2b}d_{ij})^{2b})} = 1/\log{(1+\tilde{d}_{ij}^{2b}})$, and thus, $\tilde{d}_{ij}=a^{1/2b}d_{ij}$)). In principle, an optimum for one value of $a$ can therefore be obtained by rescaling an optimum for another value of $ a$. However, the optimization procedure is not scale-equivariant in practice: using the same initialization scale, learning-rate schedule, and finite optimization budget across different values of a can lead to trajectories that are not simple rescalings of one another. As a result, the final embeddings can differ in practice, even though the underlying objective changes primarily in scale.

Varying $a$, varies the attraction and repulsion shapes (Fig.~\ref{fig:varying_a}(a,b)). For small values of $a$, the attraction strength decreases quickly as $\zeta$ increases, whereas for higher values of $a$, this decrease is gradual. As $a$ increases, the attraction shape starts falling below $-1$ with increasing $\zeta_{-1}$ and thus, when the learning rate remains constant, the embeddings become diffused and overlapping with clusters (Fig.~\ref{fig:varying_a}(c)). This effect is remedied when the learning rate is annealed (Fig.~\ref{fig:varying_a} (d)). 

On the contrary, the repulsion strength remains higher for small values of $s$ (as $\zeta$ increases). As a result, following our exploration in Section~\ref{sec:rep_anal}, the inter-cluster distances are higher for smaller values of $a$ and lower for large values of $a$.

Overall, two broad dynamics are observed here:
\begin{itemize}
    \item For smaller $a$, long-range attraction is lower, but repulsion is higher. Repulsion causes larger inter-cluster distances (qualitatively, centroid to centroid), but attraction fails to build structures/clusters (as attraction shape approaches 0 faster).
    \item For large $a$, long-range attraction is higher ($\zeta_{-1}$ is also higher), and repulsion is lower. Repulsion causes small inter-cluster distances. Since $\zeta_{-1}$ is high, we can obtain a good structure/cluster if $\zeta_{-1}$ is reduced toward zero by learning rate annealing.
\end{itemize}

Due to the first dynamic, even though we can increase repulsion by decreasing $a$, it causes a worse case of near-sightedness, and thus we fail to obtain compact clusters.

\clearpage

\section{Additional Discussion on Fig.~\ref{fig:cross_shapes}}\label{supp:cross_shapes}

\begin{figure}[h]
    \centering
    \includegraphics[width=0.7\linewidth]{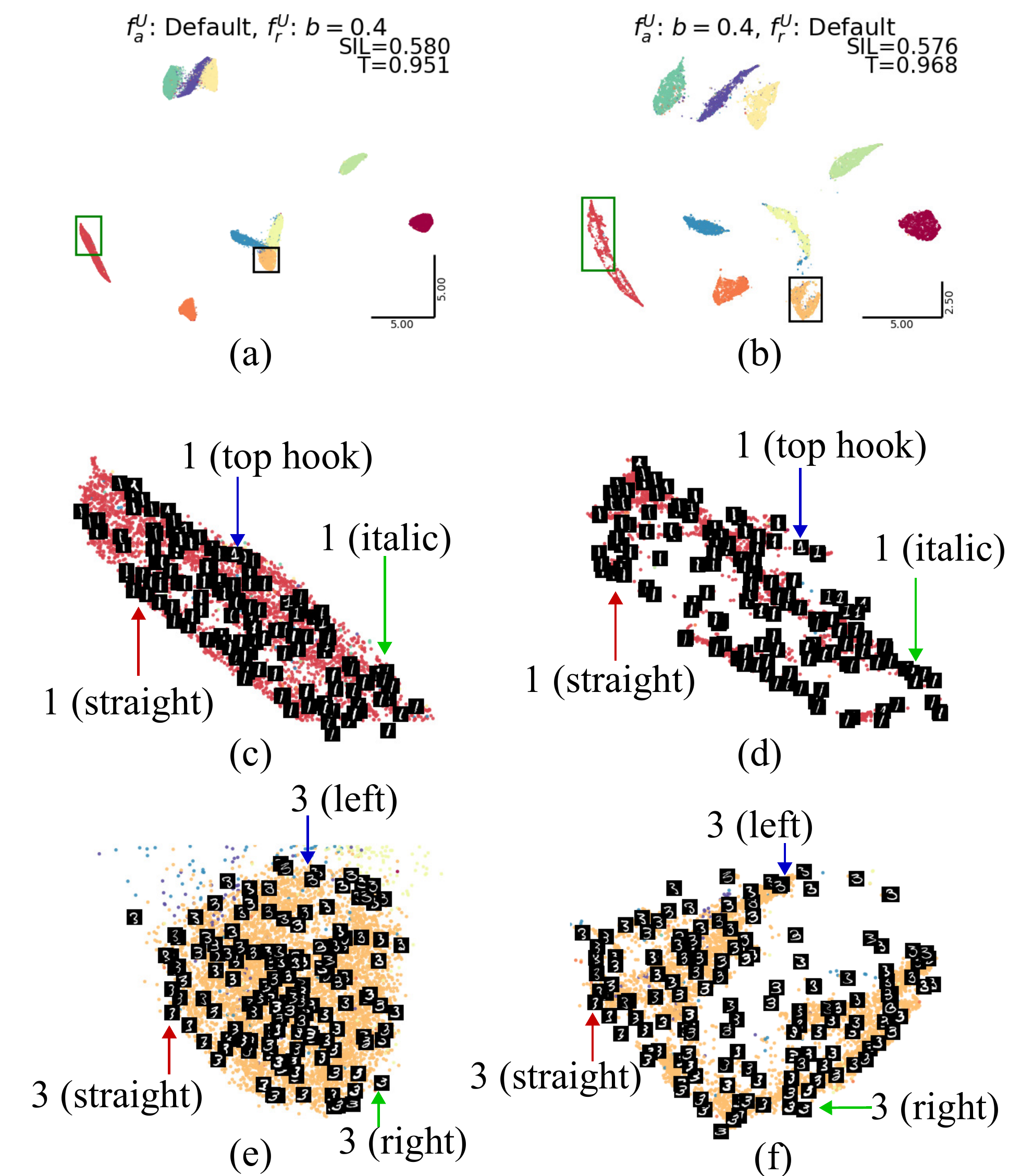}
    \caption{Embedding of the MNIST dataset with (a) default attraction shape but repulsion shape with $b=0.4$ and (b) default repulsion shape but attraction shape with $b=0.4$. (c) Cluster of label 1 from (a, green rectangle). (d) Cluster of label 1 from (b, green rectangle). (e) Cluster of label 3 from (a, black rectangle). (f) cluster of label 3 from (b, black rectangle).}
    \label{fig:more_cross_shapes}
\end{figure}

In this section, we examine additional structures from Figs.~\ref{fig:cross_shapes}~(a,b), reproduced in Figs.~\ref{fig:more_cross_shapes}~(a,b). First, we examine the cluster of labels 1. Default attraction and stronger repulsion ($b=0.4$) (Fig.~\ref{fig:more_cross_shapes}~(c)) exhibit a gradient of writing variation, but the individual writing styles are not well separated. When we use $b=0.4$ for attraction and default repulsion (Fig.~\ref{fig:more_cross_shapes}~(d)), additional structures and branches emerge, that separate different writing styles (1 with top hat, written in italic vs. upright strokes). 

We observe the same pattern for label 3. Under default attraction and stronger repulsion (Fig.~\ref{fig:more_cross_shapes}(e)), the cluster forms a smooth continuum of styles, from right-slanted italic 3s, through more upright forms, to left-slanted italic 3s, without a clear separating boundary. With stronger attraction and default repulsion (Fig.~\ref{fig:more_cross_shapes}(f)), the cluster organizes along a trajectory that separates the right- and left-slanted variants, producing a visible ``tear'' (hypothetically, a loop-like structure cycling through styles: straight $\to$ right-slanted $\to$ straight $\to$ left-slanted).

\clearpage

\section{More on NEG-$t$-SNE}\label{sec:paraUMAP}

\subsection{Comparison to UMAP}
\begin{figure*}[h]
    \centering
    \includegraphics[width=\linewidth]{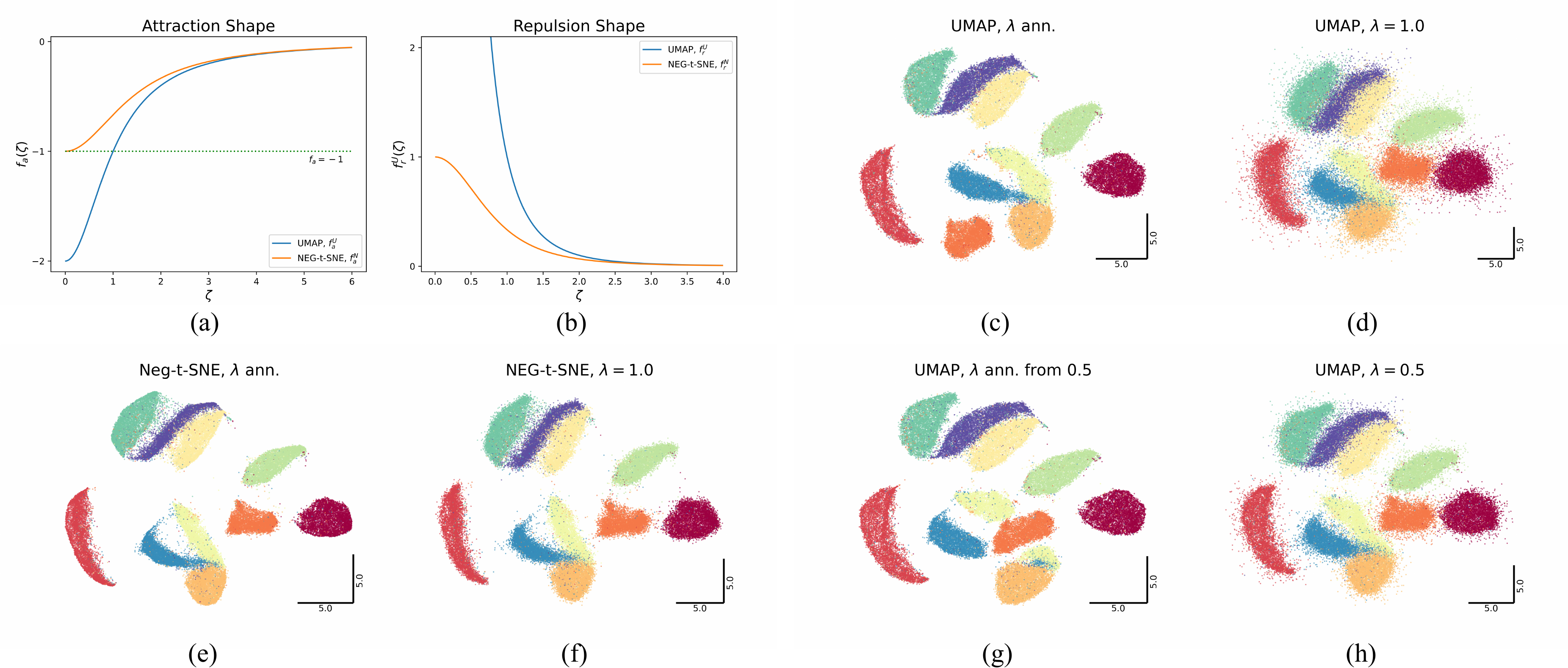}
    \caption{Sensitivity of UMAP and NEG-$t$-SNE to learning rate on the MNIST dataset. (a) Attraction and (b) repulsion shapes for UMAP ($a=1$, $b=1$) and NEG-$t$-SNE. (c,d) UMAP is very sensitive to the learning rate $\lambda$, as $f_a^U<-1$ as the separation distance $\zeta$ decreases. Thus, without annealing, the clusters become fuzzy. (e,f) NEG-$t$-SNE is less sensitive to $\lambda$ as $f_a^N\in[-1,0]$ always, and the clusters are thus less fuzzy even when not annealed. (g,h) Confining $f_a^U$ to $[-1,0]$ by setting $\lambda=0.5$ shows less sensitivity to $\lambda$.}
    \label{fig:neg}
\end{figure*}

Figure~\ref{fig:neg} shows the shapes of UMAP and NEG-$t$-SNE, along with various MNIST embeddings. When the learning rate is annealed, both UMAP and NEG-$t$-SNE show similar output (Figs.~\ref{fig:neg}(c,e)). However, when the learning rate is a constant value of $1$, the UMAP shows a fuzzy structure, while NEG-$t$-SNE shows a structure with much cleaner boundaries (Fig.~\ref{fig:neg}(d,f)). 
The discussion in Section~\ref{sec:neg} suggests that constraining $f_a^U$ within $[-1,0]$ can potentially result in less fuzzy clusters for fixed $\lambda$. We have seen this previously in Fig.~\ref{fig:atr_repl_shape_2}(c,d) as well, where UMAP provided better embedding and clustering when the learning rate for attraction ($\lambda_a$) was $\lesssim0.5$. A straightforward way to achieve this is to initialize $\lambda$ to $0.5$, which satisfies Proposition~\ref{thm:attr} for all $\zeta$. 
The resulting embeddings, shown in Figs.~\ref{fig:neg}~(g) and~(h), confirm that clusters are similar to those of NEG-$t$-SNE's, and for a constant $\lambda=0.5$, the clusters are less fuzzy than before as predicted with sharper boundaries (Fig.~\ref{fig:neg}(d,h)). There are still a few points outside the clusters due to the characteristics of UMAP's repulsion shape, which NEG-$t$-SNE solves.

Next, we can introduce the parameters $a$ and $b$ into NEG-$t$-SNE (essentially the formulation of Parametric UMAP; see Proposition~\ref{thm:equivalence_neg_pumap}). The affinity function becomes $q_{ij}^N=1/(2+a\zeta^{2b})$, and the attraction and repulsion shapes become 
\begin{align}
    f_a^N = - \frac{2ab\zeta^{2(b-1)}}{2+a\zeta^{2b}},
\end{align}
and
\begin{align}
    f_r^N = \frac{2ab\zeta^{2(b-1)}}{(1+a\zeta^{2b})(2+a\zeta^{2b})},
\end{align}
respectively. For $0<b<1$, both shapes become unbounded as $\zeta\to0$. Thus, NEG-$t$-SNE will face similar numerical challenges to UMAP if $a$ and $b$ vary, and corresponding limitations carry over. One notable distinction is that, compared to UMAP, the attraction shape attains a lower minimum distance ($\zeta_{-1}$) for the attraction. While this may enhance cluster formation, it approaches zero faster (increased near-sightedness as distance increases), potentially diminishing its effectiveness for attraction over longer distances.

\subsection{Comparison to Parametric UMAP}
Parametric UMAP was initially trained with the original UMAP objective~\citep{sainburg2020parametric}, but later work adopted a numerically stable, log-sigmoid–based modified cross-entropy loss~\citep{shi2023deep}. This modification makes Parametric UMAP and Neg-t-SNE~\citep{damrich2022t} equivalent (Proposition~\ref{thm:equivalence_neg_pumap}). We show the equivalence below.

\subsection*{Proof of Proposition~\ref{thm:equivalence_neg_pumap}}
\begin{proof}
The kernel function under this modification becomes
\begin{align}
    q_{ij}^P = - \log{(1+a||y_i-y_j||_2^{2b})}.
\end{align}

The attractive term is
\begin{align}
    -\text{logsigmoid}(q_{ij}^P) &= -\log\left( \frac{1}{1 + \exp(-q_{ij}^P)} \right) \\
    &= -\log\left(\frac{1}{2+a||y_i-y_j||_2^{2b}}\right) \\
    &= -\log(q_{ij}^{N}).
\end{align}
And the repulsive term is
\begin{align}
    \text{logsidmoid}(q_{ij}^P) - q_{ij}^P &= \log\left(\frac{1}{2+a||y_i-y_j||_2^{2b}}\right) + \log\left(1+a||y_i-y_j||_2^{2b}\right) \\
    &= \log \left( \frac{1+a||y_i-y_j||_2^{2b}}{2 + a||y_i-y_j||_2^{2b}} \right) \\
    &= \log \left( 1 - \frac{1}{2 + a||y_i-y_j||_2^{2b}}\right) \\
    &= \log (1 - q_{ij}^N).
\end{align}
Both these are NEG-t-SNE with explicit parameters $a$ and $b$ (while in Neg-$t$-SNE these are set to $1$).
\end{proof}

\section{Alternate Dimensionality Reduction Algorithms}\label{sec:alternate_algorithms}
The alternative algorithms we consider use the same kernel function as UMAP (with $a=1$ and $b=1$ in their low-dimensional weight):
\begin{align}
    q_{ij} = \frac{1}{1+||y_i-y_j||_2^2}. \label{eq:cauchy_kernel}
\end{align}
In this section, we first discuss the TriMap~\citep{amid2019trimap} algorithm. Even though, this algorithm relies on triplets (and not pairwise interactions), this works as a primer for analyzing attraction and repulsion that are a bit more involved than UMAP. This discussion is followed by Pairwise Controlled Manifold Approximation (PaCMAP)~\citep{wang2021understanding} and it's extension Pairwise Controlled Manifold Approximation with Local Adjusted Graph (LocalMAP)~\citep{wang2024dimension} that modify TriMAP's loss function that works for pairwise interactions. We then tackle $t$-SNE~\citep{van2008visualizing}. Then, we provide a short note on SNE~\citep{hinton2002stochastic} that uses an alternate kernel function and finally end the section by briefly discussing multidimensional scaling~\citep{borg2007modern}.

\subsection{TriMap}\label{sec:trimap}
\begin{figure}
    \centering
    \includegraphics[width=0.7\linewidth]{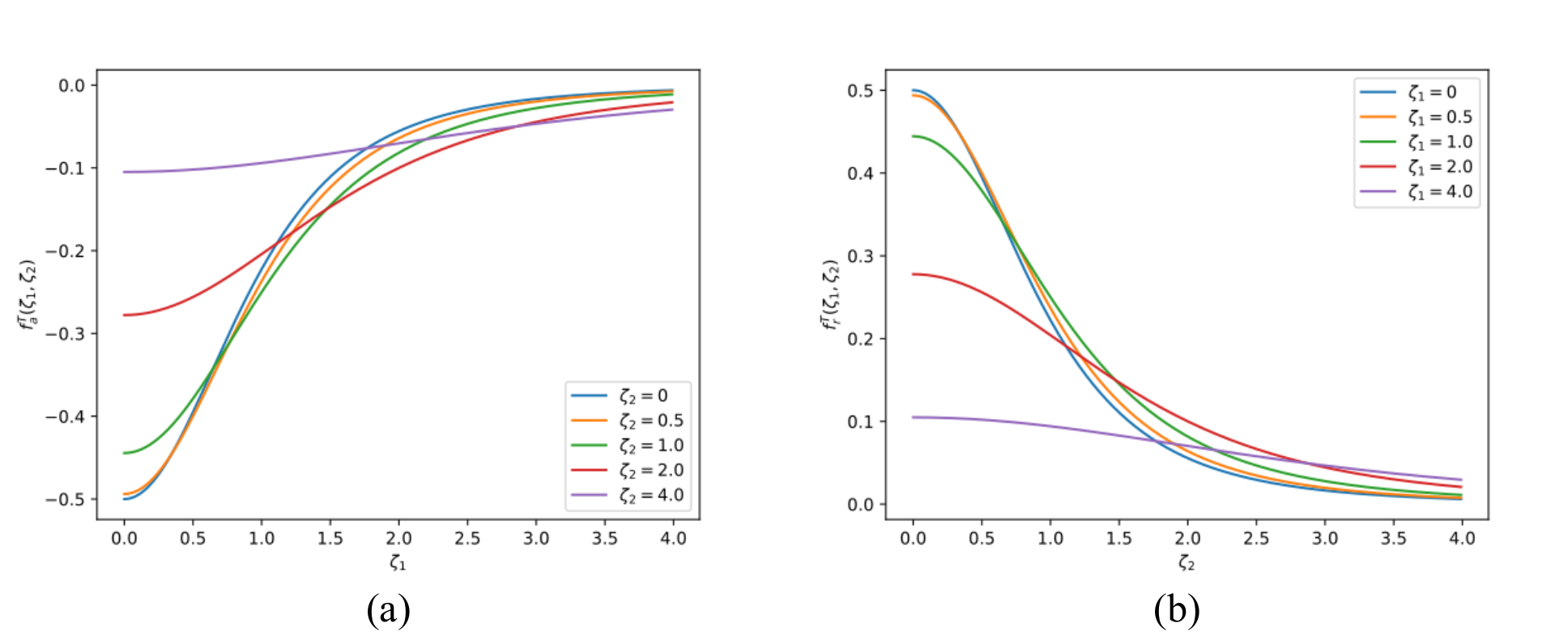}
    \caption{(a) Attraction shapes of TriMap for different $\zeta_2$ and (b) repulsion shapes of TriMap for different $\zeta_1$.}
    \label{fig:trimap_atr_rep}
    
    \centering
    \includegraphics[width=0.9\linewidth]{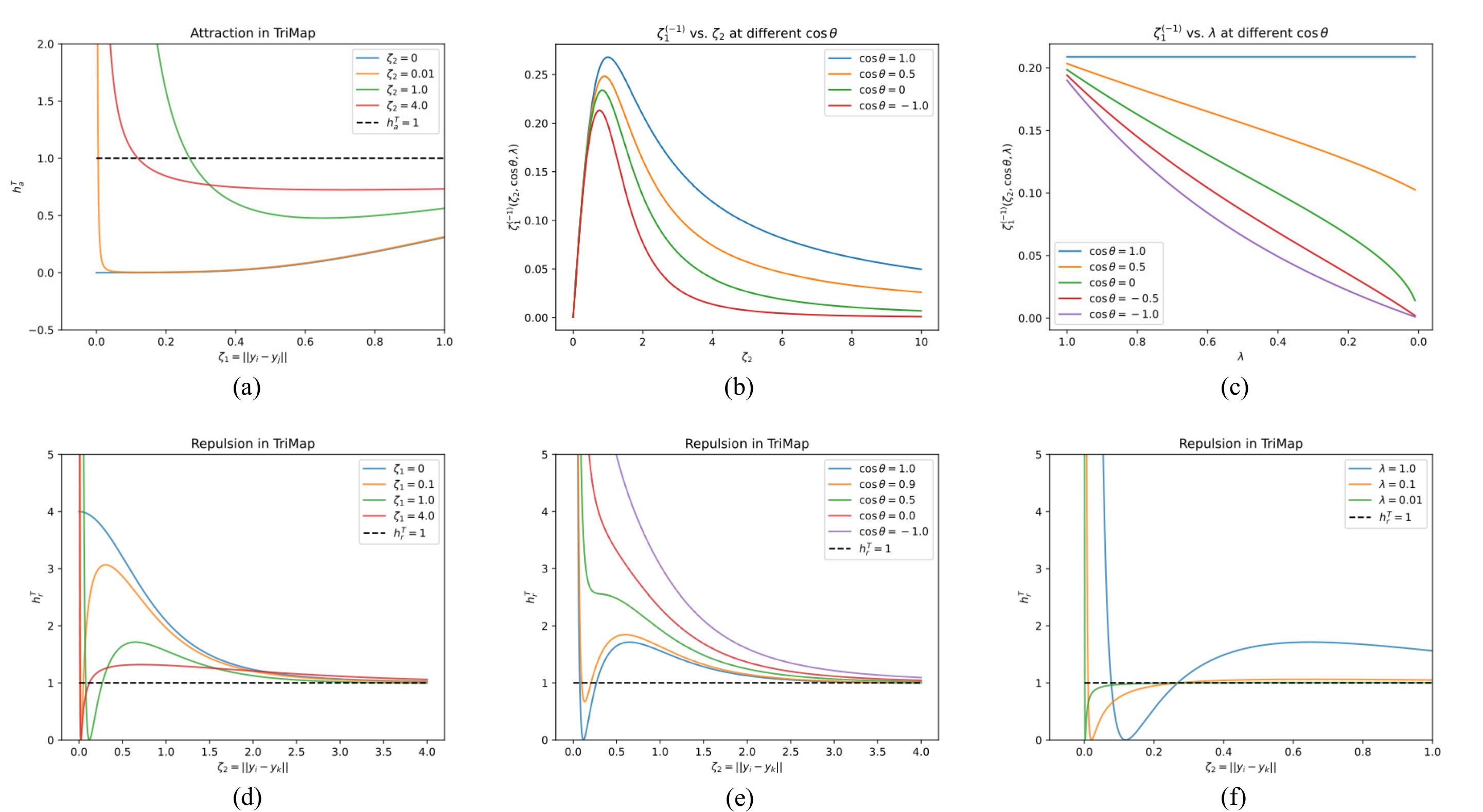}
    \caption{Attraction and repulsion behavior in TriMap. (a) $h_a^T$ vs $\zeta_1$ for different $\zeta_2$. Any values below the dotted line indicate attraction. Like UMAP, TriMAP shows attraction and repulsion for nearest neighbors ($y_i,y_j$). (b,c) Unlike UMAP, the minimum distance for contraction ($\zeta_1^{(-1)}$) varies due to dependence on (b) $\zeta_2$ and (c) $\lambda$; the function $\cos\theta$ regulates the range of these values. (d-f) Repulsion in TriMap by varying (d) $\zeta_1$, (e) $\cos\theta$, and (e) $\lambda$. Values above (below) the dotted line indicate repulsion (attraction). While the repulsion force of UMAP shows only repulsion, that of TriMap can provide both attraction and repulsion. Unless otherwise labeled, $\zeta_1=1.0$, $\zeta_2=0.5$, $\cos\theta=1.0$, and $\lambda=1.0$.}
    \label{fig:trimap_fig}
\end{figure}

TriMap~\citep{amid2019trimap} optimizes low-dimensional using a triplet loss
\begin{align}
    \mathcal{L}^{T} = \sum_{(i,j,k)} w_{ijk} \frac{1}{1+\frac{q_{ij}}{q_{ik}}},
\end{align}
where $w_{ijk}$ is the weight of the triplet $(y_i,y_j,y_k)$, $y_j$ is in the $k$-nearest neighbor set of $y_i$ in the high dimension, and $y_k$ is a far-away point. When minimized, we expect that $y_i$ and $y_j$ attract each other, while $y_i$ and $y_k$ repel each other. The update equations are
\begin{align}
    y_i^{t+1} &= y_i^{t} + \lambda f_a^T(\zeta_1^{t},\zeta_2^{t}) (y_i^{t}-y_j^{t}) + \lambda f_r^T(\zeta_1,\zeta_2) (y_i^{t}-y_k^{t}), \label{eq:trimap_it_1} \\
    y_j^{t+1} &= y_j^{t} - \lambda f_a^T(\zeta_1^{t},\zeta_2^{t}) (y_i^{t} - y_j^{t}), \label{eq:trimap_it_2} \\
    y_k^{t+1} &= y_k^{t} - \lambda f_r^t(\zeta_1^{t},\zeta_2^{t}) (y_i^{t} - y_k^{t}), \label{eq:trimap_it_3}
\end{align}
where $\zeta_1=||y_i-y_j||_2$ is the distance between nearest neighbors, $\zeta_2=||y_i-y_k||_2$ is the distance from the faraway point, and $f_a^T$ and $f_r^T$ are attraction and repulsion shapes of TriMap, respectively. Unlike UMAP, the attractive and repulsive components are non-separable and the shapes depend on two distance measures (making them 2D). The functional form of the attraction shape is 
\begin{align}
    f_a^T(\zeta_1,\zeta_2) = -\frac{2(1+\zeta_2^2)}{(2+\zeta_1^2+\zeta_2^2)^2},
\end{align}
and the repulsion shape is
\begin{align}
    f_r^T(\zeta_1,\zeta_2) = \frac{2(1+\zeta_1^2)}{(2+\zeta_1^2+\zeta_2^2)^2}.
\end{align}
The attraction and repulsion shapes (Fig.~\ref{fig:trimap_atr_rep}) of TriMap shows similar trends of that of UMAP. However, the minium value of repulsion shape is $-0.5$; thus, unlike UMAP there is no position flipping in TriMAP due to attraction alone. However, since Eqs.~(\ref{eq:trimap_it_1}-\ref{eq:trimap_it_3}) are not decoupled between attractive and repulsive terms, Propositions~\ref{thm:attr} and~\ref{thm:repl} do not apply. Focusing on attraction first, we show

\begin{proposition}
    Update equations (\ref{eq:trimap_it_1})-(\ref{eq:trimap_it_3}) provide a contraction if
    \begin{align}
        h_a^T(\zeta_1,\zeta_2,\theta,\lambda) < 1, \label{eq:trimap_attr_ineq}
    \end{align}
    where $h_a^T(\zeta_1,\zeta_2,\theta,\lambda) = (1+2\lambda f_a^T)^2 + 2 (1+2\lambda f_a^T) \lambda f_r^t \frac{\zeta_2}{\zeta_1} \cos\theta + (\lambda f_r^T)^2 \frac{\zeta_2^2}{\zeta_1^2}$ and $\theta$ is the angle between the vectors $(y_i-y_j)$ and $(y_i-y_k)$, i.e., $cos\theta = \frac{(y_i^{t}-y_j^{t})^T(y_i^{t}-y_k^{t})}{||y_i^{t}-y_j^{t}||_2||y_i^{t}-y_k^{t}||_2}$.
\end{proposition}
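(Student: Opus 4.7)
The plan is to mirror the template of Proposition~\ref{thm:attr}, now adapted to the triplet coupling of TriMap. The key new feature is that the update for $y_i^{t+1}$ contains both an attractive contribution along $(y_i^t-y_j^t)$ and a repulsive contribution along $(y_i^t-y_k^t)$, while the update for $y_j^{t+1}$ sees only the former. So subtracting Eq.~(\ref{eq:trimap_it_2}) from Eq.~(\ref{eq:trimap_it_1}) no longer gives a scalar multiple of $(y_i^t-y_j^t)$ alone. First I would carry out this subtraction to obtain
\begin{equation*}
y_i^{t+1}-y_j^{t+1} = (1+2\lambda f_a^T)\,(y_i^t-y_j^t) + \lambda f_r^T\,(y_i^t-y_k^t),
\end{equation*}
a linear combination of two non-parallel displacement vectors.

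Next I would square the Euclidean norm of this expression. Two of the three terms are scalar multiples of $\zeta_1^2$ and $\zeta_2^2$, while the cross term is $2(1+2\lambda f_a^T)\lambda f_r^T (y_i^t-y_j^t)^\top(y_i^t-y_k^t)$. Using the definition of $\theta$ given in the statement, the inner product evaluates to $\zeta_1\zeta_2\cos\theta$. Dividing through by $\zeta_1^2$ collects the resulting expression into precisely $h_a^T(\zeta_1,\zeta_2,\theta,\lambda)$. The contraction condition $||y_i^{t+1}-y_j^{t+1}||_2<\zeta_1$ is then equivalent to $h_a^T<1$, which is the claim.

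There is no serious analytical obstacle; the routine is a direct quadratic expansion. The genuine conceptual difficulty, and the reason $h_a^T$ looks more involved than its UMAP counterpart, is interpretive: unlike Proposition~\ref{thm:attr}, one cannot reduce the certificate to a one-dimensional bound on $\lambda f_a^T$, because the repulsive term from $y_k$ enters both through the magnitude ratio $\zeta_2/\zeta_1$ and through the geometric factor $\cos\theta$. Consequently, the per-iteration contraction certificate now depends on the full momentary configuration of the triplet, which is consistent with the 2D nature of the TriMap shapes illustrated in Figs.~\ref{fig:trimap_atr_rep} and~\ref{fig:trimap_fig} and explains why those figures sweep $\zeta_2$, $\cos\theta$, and $\lambda$ independently.
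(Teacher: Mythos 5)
Your proposal is correct and follows exactly the paper's own route: subtract the update for $y_j^{t+1}$ from that of $y_i^{t+1}$ to get $y_i^{t+1}-y_j^{t+1}=(1+2\lambda f_a^T)(y_i^t-y_j^t)+\lambda f_r^T(y_i^t-y_k^t)$, expand the squared norm, and divide by $\zeta_1^2$ to recognize $h_a^T$. The paper's proof is just a terser version of the same quadratic expansion, so there is nothing to add.
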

\begin{proof}
    We require 
    \begin{align}
        ||y_i^{t+1}-y_j^{t+1}||_2^2 < ||y_i^{t}-y_j^{t}||_2^2. \label{eq:trimap_proof_ineq}
    \end{align} 
    From Eq.~(\ref{eq:trimap_it_1}) and~(\ref{eq:trimap_it_2}): $y_i^{t+1}-y_j^{t+1} = (1+2\lambda f_a^T) (y_i^{t}-y_j^{t}) + \lambda f_r^T (y_i^{t}-y_k^{t})$. Putting this value in Eq.~(\ref{eq:trimap_proof_ineq}), we obtain the desired inequality.
\end{proof}
Inequality~(\ref{eq:trimap_attr_ineq}) depends on $\zeta_1$, $\zeta_2$, $\cos\theta$ and $\lambda$. In particular, the value of $\zeta_1=||y_i-y_j||_2$, where we want a contraction, is coupled with additional variables. Figure~\ref{fig:trimap_fig}(a) shows the attraction behavior for various values of $\zeta_2$, while $\cos\theta=1$ and $\lambda=1$. The values below the dotted line indicate attraction (and thus contraction of distance $\zeta_1$), whereas the values above indicate repulsion (and therefore expansion of distance $\zeta_1$). The value where the dotted line and $h_a^T$ meet gives the minimum distance for contraction ($\zeta_1^{(-1)}$) [analogous to $\zeta_{-1}$ of UMAP], which we define as
\begin{align}
    \zeta_1^{(-1)}(\zeta_2,\theta,\lambda)  & = \arg\min_{\zeta_1} |h_a^T(\zeta_1,\zeta_2,\theta,\lambda) -1|, \\
    &\text{s.t.~~} \zeta_1 \geq 0.
\end{align}
$\zeta_1^{(-1)}$ has a finite value and is $>0$ for most cases (Fig.~\ref{fig:trimap_fig}(b,c)). As a result, TriMap can show behavior similar to UMAP and thus require learning rate annealing or a similar approach (in the TriMap implementation, the authors use the delta-bar-delta~\citep{jacobs1988increased} method under appropriate initialization). 

\begin{proposition}
    Update equations (\ref{eq:trimap_it_1})-(\ref{eq:trimap_it_3}) provide expansion if
    \begin{align}
        h_r^T(\zeta_1, \zeta_2, \theta, \lambda) > 1, \label{eq:trimap_rep_ineq}
    \end{align}
    where $h_r^T(\zeta_1, \zeta_2, \theta, \lambda) = (1+2\lambda f_r^T)^2 + 2\lambda f_a^T (1+\lambda f_r^T) \frac{\zeta_1}{\zeta_2} \cos\theta + (\lambda f_a^T)^2 \frac{\zeta_1^2}{\zeta_2^2}$.
\end{proposition}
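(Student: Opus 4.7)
The plan is to mirror the preceding attraction proof, now tracking the distance across the repulsive edge $(y_i, y_k)$ rather than the attractive edge $(y_i, y_j)$. Concretely, I would subtract Eq.~(\ref{eq:trimap_it_3}) from Eq.~(\ref{eq:trimap_it_1}): the repulsive term $\lambda f_r^T(y_i^t - y_k^t)$ enters constructively from both endpoints (with opposite signs on the two updates, hence reinforcing after the subtraction), while the attractive contribution survives only through $y_i$. The result is a linear combination
\begin{align*}
y_i^{t+1} - y_k^{t+1} = (1 + 2\lambda f_r^T)(y_i^{t} - y_k^{t}) + \lambda f_a^T (y_i^{t} - y_j^{t}),
\end{align*}
which is structurally identical to the expression obtained in the attraction proof under the swap $f_a^T \leftrightarrow f_r^T$ and $(y_j, \zeta_1) \leftrightarrow (y_k, \zeta_2)$.

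From here I would compute the squared norm using the identity $\|\alpha u + \beta v\|_2^2 = \alpha^2 \|u\|_2^2 + 2\alpha\beta\,\langle u,v\rangle + \beta^2 \|v\|_2^2$, substituting $\|y_i^t - y_k^t\|_2 = \zeta_2$, $\|y_i^t - y_j^t\|_2 = \zeta_1$, and $\langle y_i^t - y_k^t,\, y_i^t - y_j^t\rangle = \zeta_1 \zeta_2 \cos\theta$ with $\theta$ as defined in the preceding proposition. The expansion criterion $\|y_i^{t+1} - y_k^{t+1}\|_2^2 > \zeta_2^2$ then becomes, upon dividing both sides by $\zeta_2^2 > 0$, precisely the advertised inequality $h_r^T(\zeta_1, \zeta_2, \theta, \lambda) > 1$.

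The derivation is essentially bookkeeping rather than a deep argument, so the main obstacle is not mathematical but conceptual. First, one must track signs carefully when combining the two repulsive contributions from Eqs.~(\ref{eq:trimap_it_1}) and~(\ref{eq:trimap_it_3}) so that the factor multiplying $(y_i^t - y_k^t)$ comes out correctly and propagates faithfully into the cross term of $h_r^T$. Second, and more substantively, the proposition exposes the feature that distinguishes TriMap from UMAP: because Eq.~(\ref{eq:trimap_it_1}) subjects $y_i$ to both forces simultaneously, the condition for expansion on the repulsive edge cannot be reduced to a statement about $f_r^T$ alone, as in the cleanly decoupled Proposition~\ref{thm:repl}; the attractive partner $y_j$ re-enters through $\zeta_1$ and $\cos\theta$. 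This irreducible coupling is the direct repulsive analogue of the dependence already seen in the attraction proposition, and the proposition serves primarily to record it symbolically.
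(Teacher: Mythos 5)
Your proof is correct and follows exactly the paper's own argument: subtract the update for $y_k$ from that for $y_i$ to get $y_i^{t+1}-y_k^{t+1}=(1+2\lambda f_r^T)(y_i^{t}-y_k^{t})+\lambda f_a^T(y_i^{t}-y_j^{t})$, expand the squared norm, and divide by $\zeta_2^2$. Note only that your (correct) cross term carries the factor $(1+2\lambda f_r^T)$, whereas the proposition as printed writes $(1+\lambda f_r^T)$ --- evidently a typo in the statement, since the analogous attraction proposition has the factor $(1+2\lambda f_a^T)$ in its cross term.
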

\begin{proof}
    We require 
    \begin{align}
        ||y_i^{t+1}-y_k^{t+1}||_2^2 > ||y_j{t}-y_k^{t}||_2^2. \label{eq:trimap_proof_ineq2}
    \end{align}
    From Eq.~(\ref{eq:trimap_it_1}) and~(\ref{eq:trimap_it_3}): $y_j^{t+1}-y_k^{t+1}=(1+2\lambda f_r^T) (y_i^{t}-y_k^{t}) + \lambda f_a^T (y_i^{t}-y_j^{t})$. Putting this value in Eq.~(\ref{eq:trimap_proof_ineq2}) we obtain inequality~(\ref{eq:trimap_rep_ineq}).
\end{proof}
Inequality~(\ref{eq:trimap_rep_ineq}) also depends on the set $\zeta_1$, $\zeta_2$, $\cos\theta$ and $\lambda$. Here, we are interested in the expansion of $\zeta_2=||y_i-y_k||_2$. Figures~\ref{fig:trimap_fig}(d-f) show the repulsion behavior by varying the other quantities. Any values above the dotted line indicate repulsion (and thus expansion of distance), while the values below indicate attraction. The striking difference compared to UMAP is that repulsion in TriMap can cause contraction instead of expansion. Since this anomaly occurs for small distances, it can be avoided by an appropriate initialization and choice of triplets.

\subsection{Pairwise Controlled Manifold Approximation (PaCMAP)}\label{sec:pacmap}
\begin{figure}[t]
    \centering
    \includegraphics[width=\linewidth]{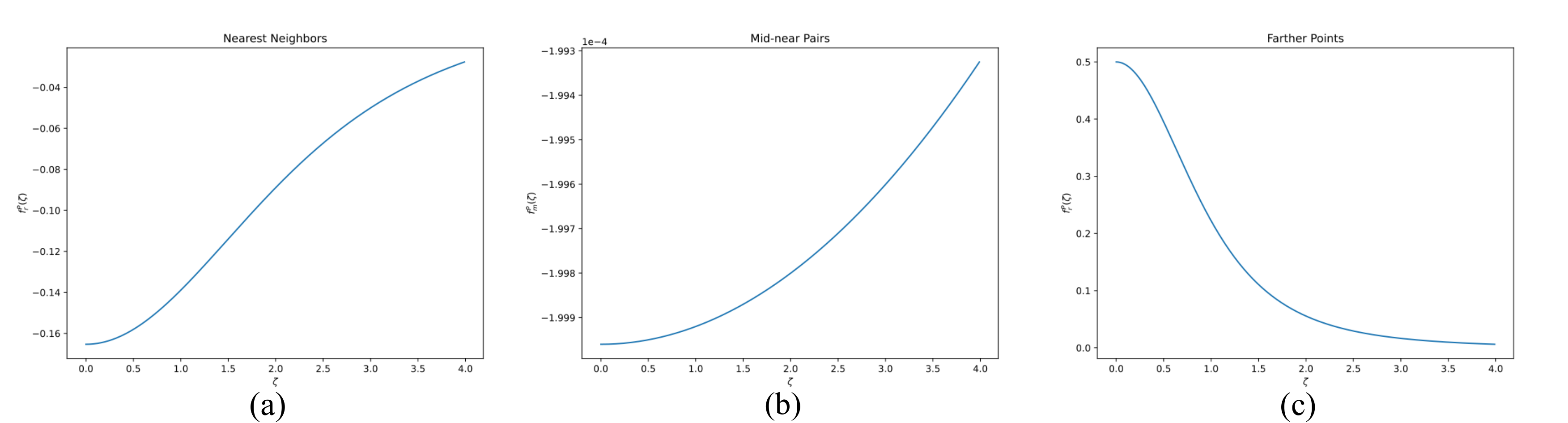}
    \caption{Attraction and repulsion shapes of PaCMAP. (a,b) Attraction shapes for (a) nearest-neighbor and (b) mid-near points (note that the values are on the order of $10^{-4}$). (c) Repulsion shape for farther pairs. $\lambda=1$ for all figures.}
    \label{fig:atr_rep_pacmap}
\end{figure}
\begin{figure}[t]
    \centering
    \includegraphics[width=\linewidth]{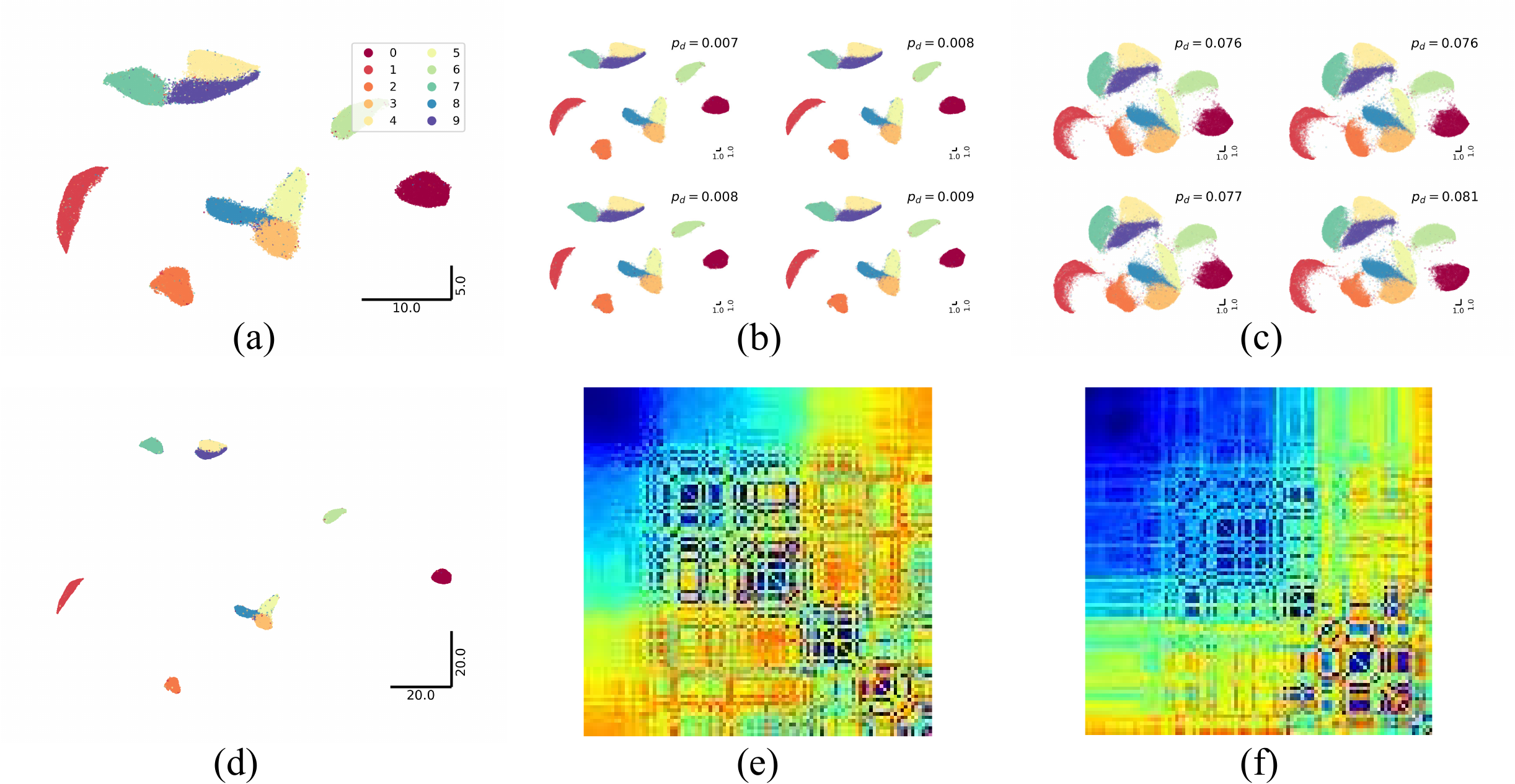}
    \caption{PaCMAP behavior for different conditions. (a) PaCMAP of MNIST data with PCA initialization. (b,c) Four samples that best match with (a) by (b) original PaCMAP and (c) PaCMAP with modified attraction shape, $f_a^M(\zeta)=f_a^P(\zeta)-0.001\zeta$, when randomly initialized. (d) PaCMAP of MNIST with a modified repulsion shape $f_r^M=f_r^P+0.00005$. (e,f) Procrustes matrix for (e) original PaCMAP ($0.51\pm0.22$) and (f) PaCMAP with modified attraction shape ($0.43\pm0.22$). Similar to UMAP, increased attraction at farther distances show improved consistency, while increased repulsion shows smaller clusters and larger inter-cluster distances.}
    \label{fig:pacmap_fig}
\end{figure}

PaCMAP~\citep{wang2021understanding} optimizes low-dimensional embedding at different scales. The loss function is
\begin{align}
    \mathcal{L}^{P} = w_{NB} \sum_{(i,j)\in NN} \frac{1}{1+10 q_{ij}} + w_{MN} \sum_{(i,k)\in MN} \frac{1}{1+10000q_{ik}} + w_{FP} \sum_{(i,l)\in FP} \frac{1}{1+\frac{1}{q_{il}}},
\end{align}
where $w_{NB}$, $w_{MN}$, and $w_{FP}$ are weights of the nearest neighbor (NN) pairs, mid-near (MN) pairs, and further pairs (FP), respectively (details in Appendix~\ref{supp:umap_graph}). The first two terms provide attraction, whereas the last term provides repulsion. A closer look at the loss function reveals that the function is a modified form of TriMap's triplet loss. For the attractive terms, it replaces TriMap's affinity for distant points with constant terms ($1/10$ for nearest neighbors, $1/10000$ for mid-near pairs, and $1$ for farther points). This loss function is thus separable into three terms and decouples the update equations. The update equations of the nearest neighbor term are
\begin{align}
    y_i^{t+1} &= y_i^t + \lambda f_a^P(\zeta_1^{t}) (y_i^t-y_j^t), \\
    y_j^{y+1} &= y_j^t - \lambda f_a^P(\zeta_1^{t}) (y_i^t-y_j^t),
\end{align}
of the mid-near pairs are
\begin{align}
    y_i^{t+1} &= y_i^t + \lambda f_m^P(\zeta_2^{t}) (y_i^t-y_k^t), \\
    y_k^{y+1} &= y_k^t - \lambda f_m^P(\zeta_2^{t}) (y_i^t-y_k^t),
\end{align}
and of the farthest pairs are
\begin{align}
    y_i^{t+1} &= y_i^t + \lambda f_r^P(\zeta_3^{t}) (y_i^t-y_l^t), \\
    y_l^{t+1} &= y_l^t - \lambda f_r^P(\zeta_3^{t}) (y_i^t-y_l^t),
\end{align}
where $\zeta_1=||y_i-y_j||_2$, $\zeta_2=||y_i-y_k||_2$, $\zeta_3=||y_i-y_l||_2$ are distances, $f_a^P$ and $f_m^P$ are attraction shapes for nearest neighbors and mid-near pairs, respectively, and $f_r^P$ is the repulsion shape for the farthest pairs. Correspondingly, the functional forms of the shapes are
\begin{align}
    f_a^{P}(\zeta) &= -\frac{20}{(11+\zeta^2)^2}, \\
    f_m^{P}(\zeta) &= -\frac{20000}{(10001+\zeta^2)^2}, \\
    f_r^{P}(\zeta) &= \frac{2}{(2+\zeta^2)^2}.
\end{align} 
$f_a^P$ and $f_m^P$ follow Proposition~\ref{thm:attr}, and $f_r$ follows Proposition~\ref{thm:repl} (Fig.~\ref{fig:atr_rep_pacmap}). 
The attraction is quite low compared to UMAP, but it is good enough for a wide range of learning rates (modulated by the Adam algorithm~\citep{kingma2014adam}); with $w_{NB}=3$ the maximum attraction is always below $0.5$ preventing any flips during attractin update.
Typically, PaCMAP initializes the embedding within a small sphere in the low dimension (e.g., in~\citep{wang2021understanding}, the initialization is often on the order of $~10^{-3}$) and relies on repulsion to separate the individual clusters. 
Overall, it mostly recovers UMAP's clustering properties (especially for MNIST) with improved ordering. The consistency under random initialization is better than UMAP (Fig.~\ref{fig:pacmap_fig}(b,e)) which can be improved further using a modified attraction (Fig.~\ref{fig:pacmap_fig}(c,f)). 
Increasing repulsion by adding a small value to the repulsion shape increases compactness of the embedding (by increasing inter-cluster distance).

\subsection{Pairwise Controlled Manifold Approximation with Local Adjusted Graph (LocalMAP)}\label{sec:localmap}
\begin{figure}
    \centering
    \includegraphics[width=\linewidth]{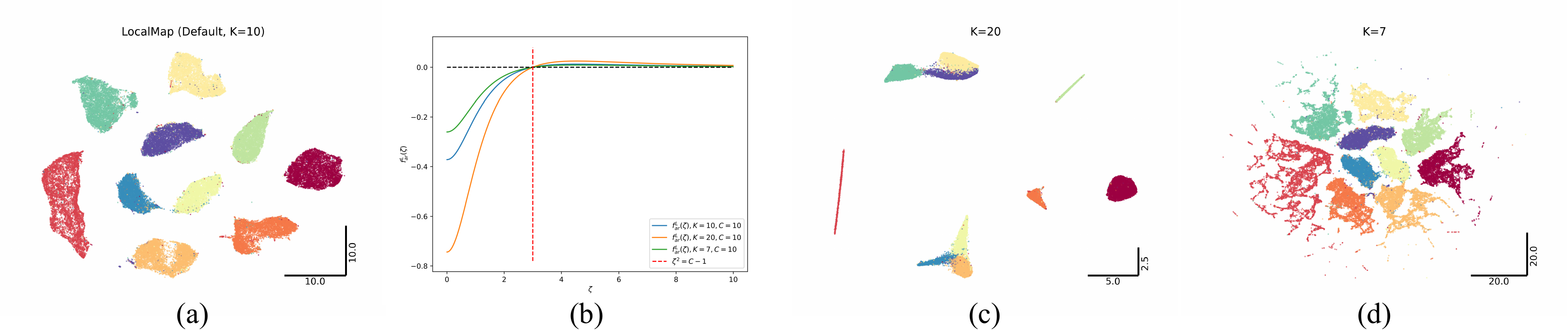}
    \caption{Behavior of LocalMAP on MNIST data. (a) Default embedding. (b) Attraction-repulsion shape of nearest neighbors. A value below (above) the dotted line indicates attraction (repulsion). Transition from attraction to repulsion occurs as $\zeta$ increases and crosses $\sqrt{C-1}$. Following implementation of LocalMAP, we used $C=10$. (c) When $K$ is large ($=20$), repulsion dominates and clusters become compact. (d) When $K$ is small ($=7$), attraction dominates and clusters break up.}
    \label{fig:localmap_fig}
\end{figure}

LocalMAP~\citep{wang2024dimension} is an iteration of the PaCMAP algorithm. One of the defining features of LocalMAP is the separation of all $10$ clusters of the MNIST data (Fig.~\ref{fig:localmap_fig}(a)), with behavior similar to the ones in Figs.~\ref{fig:repulsion}(g,h,j,k) and Figs.~\ref{fig:cross_shapes}(a,b). Here, we explore the interplay of attractive and repulsive forces on the compactness and connectedness of clusters.

LocalMAP performs PaCMAP and then does additional optimization on the attraction to decouple some clusters. To this end, it minimizes the following loss function
\begin{align}
    \mathcal{L}^{L} = \sum_{(i,j)\in NN} \frac{K}{\frac{1}{\sqrt{q_{ij}}}+C\sqrt{q_{ij}}} + \sum_{(i,l)\in FP} \frac{1}{1+\frac{1}{q_{il}}}.
\end{align}
The first term amalgamates the attractive and repulsive nature of the triplet loss function that works on the same pair. 
In one regime, this function causes attraction, while in the other, it causes repulsion. 
The second term is identical to the ones in PaCMAP; the only difference is that the algorithm resamples further pairs every few iterations. 
Thus, we analyze only the first term involving nearest neighbors. The update equations are
\begin{align}
    y_i^{t+1} &= y_i^t + \lambda f_{ar}^L(\zeta_1^{t}) (y_i^t-y_j^t), \\
    y_j^{y+1} &= y_j^t - \lambda f_{ar}^L(\zeta_1^{t}) (y_i^t-y_j^t),
\end{align}
where $f_{ar}^{L}$ is the attraction-repulsion shape given by
\begin{align}
    f_{ar}^{L}(\zeta) = - \frac{K(C-1-\zeta^2)}{2 \sqrt{1+\zeta^2}(1+C+\zeta^2)^2}. \label{eq:atr_rep_localmap}
\end{align}
The update provides contraction as long as $\zeta^2<C-1$ and $-1<\lambda f_{ar}^{L}<0$ (from Proposition~\ref{thm:attr}). 
When $\zeta^2>C-1$, $f_{ar}>0$, and by Proposition~\ref{thm:repl} the update equations provide expansion. 
The values of $K$ and $C$ determine whether attraction or repulsion dominates the dynamics.
In LocalMAP implementation, both $C$ and $K$ are set to $10$ (Fig.~\ref{fig:localmap_fig}(b), and the strength of attraction is higher than PaCMAP (Fig.~\ref{fig:atr_rep_pacmap}(a)).
As a result, when $\zeta>3$, the nearest neighbors face repulsion, causing pairs bridging two clusters to separate. 
The value of $K$, working as a scaling parameter for the forces, regulates this separation.

Using Proposition~\ref{thm:attr}, $\lambda f_{ar}^L(0)\geq-\frac{1}{2}$ gives the maximum values of $K$, and the maximum attraction possible by $f_{ar}^L$, without flipping the placements of the pairs. (We would want to avoid flipping the pairs at the LocalMAP optimization to preserve the ordering from PaCMAP; otherwise, it may inhibit cluster separation.)
This simplifies (\ref{eq:atr_rep_localmap}) to $K\leq\frac{\lambda (1+C)^2}{C-1}$, which for $C=10$ and $\lambda=1$ gives $K\lesssim 13.44$.  
Moreover, at a higher value of $K\gtrsim 13.44$, the repulsive forces dominate, and the clusters become more compact, but objective of LocalMAP fails ( the bridge between clusters persists in Fig.~\ref{fig:localmap_fig}(c) for $K=20$). 
On the other hand, as $K$ decreases, attractive forces dominate (because repulsive forces are too low), and the embedding shows the breaking up of existing clusters (Fig.~\ref{fig:localmap_fig}(d) for $K=7$, which mimics the one in Fig.~\ref{fig:cross_shapes}(b)). 

\subsection{$t$-distributed Stochastic Neighbor Embedding ($t$-SNE)}\label{sec:tsne}
$t$-SNE~\citep{van2008visualizing} optimizes pairwise distances. The loss function is
\begin{align}
    \mathcal{L} = - \sum_{i,j} w_{i,j} \log{\left(\frac{q_{i,j}}{\sum_{k\neq l} q_{k,l}}\right)},
\end{align}
where $w_{i,j}$ is the weight of the pair. The original implementation of $t$-SNE considers all the pairs (not just nearest neighbors). This loss function decomposes into attraction and repulsion forces by
\begin{align}
    \mathcal{L} = \sum_{i,j} \left[ -w_{i,j} \log{\left(q_{i,j}\right)} + w_{i,j} \log{\left(\sum_{k\neq l} q_{k,l}\right)} \right]. \label{eq:tsne_loss}
\end{align}
As previously, the first term provides the attractive forces and the second term provides the repulsive forces. 
While the attractive term is identical to that of UMAP (with $a=1$ and $b=1$) and is easy to compute, the repulsive term is coupled among every pair and thus, is very costly. 
Using the same principles we applied for UMAP, we can write the update equations of t-SNE. 
Since the original $t$-SNE didn't rely on the nearest neighbor graph, the weight $w_{i,j}$, computed for all the pairs, is important in the update equations.
The attractive update equations are 
\begin{align}
    y_i^{t+1} = y_i^{t} + \lambda w_{i,j} f_a^{t-SNE}(\zeta_{i,j}^{t}) (y_i^{t}-y_j^{t}), \label{eq:tsne_attr1}\\
    y_j^{t+1} = y_i^{t} - \lambda w_{i,j} f_a^{t-SNE}(\zeta_{i,j}^{t}) (y_i^{t}-y_j^{t}), \label{eq:tsne_attr2}
\end{align}
where $f_a^{t-SNE}$ is the attraction shape of $t$-SNE and $\zeta_{i,j}=||y_i-y_j||_2$. The update equation for the repulsive parts are
\begin{align}
    y_i^{t+1} &= y_i^{t} + \lambda \frac{w_{i,j}}{Z} \sum_{k} f_r^{t-SNE}(\zeta_{i,k}^{t}) (y_i^{t}-y_k^{t}), \label{eq:tsne_rep1}\\
    y_j^{t+1} &= y_j^{t} - \lambda \frac{w_{i,j}}{Z} \sum_{l} f_r^{t-SNE}(\zeta_{l,j}^{t}) (y_l^{t}-y_j^{t}), \forall j, j\neq i, \label{eq:tsne_rep2}
\end{align}
where $f_r^{t-SNE}$ is the repulsion shape of $t$-SNE and $Z=\sum_{k\neq l} q_{k,l}$. The functional forms of these shapes are
\begin{align}
    f_a^{t-SNE}(\zeta) &= -\frac{2}{1+\zeta^2}, \text{and} \\
    f_r^{t-SNE}(\zeta) &= \frac{2}{(1+\zeta^2)^2}.
\end{align}
From the attractive update Eqs.~(\ref{eq:tsne_attr1})-(\ref{eq:tsne_attr2}), $f_a^{t-SNE}$ follows Proposition~\ref{thm:attr} (with $0<\lambda w_{i,j} f_a^{t-SNE}<-1$) and gives the minimum distance for contraction, $\zeta_{-1}$). The repulsive update is coupled with all the pairs and thus does not have a simple relation to the repulsion shape. Rather, enabled by our experience from TriMap's analysis, we can derive the following:
\begin{proposition}\label{thm:tsne_prop}
    The update Eqs.~(\ref{eq:tsne_rep1})-(\ref{eq:tsne_rep2}) provide an expansion if
    \begin{align}
        h_r^{t-SNE}(\zeta_{i,j}, v, \theta, \lambda, w_{i,j}) > 1, \label{eq:tsne_attr_cond} 
    \end{align}
    where \\
    $h_r^{t-SNE}(\zeta_{i,j}, v, \theta, \lambda, w_{i,j}) = (1+2\lambda \frac{w_{i,j}}{Z} f_r^{t-SNE}(\zeta_{i,j}))^2+\frac{||v||_2^2}{\zeta_{i,j}^2} + 2\lambda \frac{w_{i,j}}{Z} f_r^{t-SNE}(\zeta_{i,j}) \frac{||v||_2}{\zeta_{i,j}} \cos{\theta} $,\\ 
    $v=\lambda \frac{w_{i,j}}{Z}\left( \sum_{k,k\neq j} f_r^{t-SNE}(\zeta_{i,k}) (y_i-y_k) + \sum_{l,l\neq i} f_r^{t-SNE}(\zeta_{l,j}) (y_l-y_j) \right)$, \\
    and $\theta$ is the angle between $(y_i-y_j)$ and $v$.
\end{proposition}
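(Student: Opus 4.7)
The proof will follow the same template as Proposition~\ref{thm:repl} and the TriMap expansion proposition in Section~\ref{sec:trimap}: form $y_i^{t+1}-y_j^{t+1}$ directly from (\ref{eq:tsne_rep1})--(\ref{eq:tsne_rep2}), rewrite it as a scalar multiple of $(y_i^{t}-y_j^{t})$ plus a transverse residual vector, then expand the squared norm via the cosine rule. Dividing the expansion inequality $\|y_i^{t+1}-y_j^{t+1}\|_2^2 > \zeta_{i,j}^2$ by $\zeta_{i,j}^2$ should yield precisely $h_r^{t-SNE}>1$.

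\textbf{Key steps.} First I would subtract (\ref{eq:tsne_rep2}) from (\ref{eq:tsne_rep1}) and collect terms proportional to $(y_i^{t}-y_j^{t})$. The $k=j$ summand inside the $y_i$ update and the $l=i$ summand inside the $y_j$ update are each equal to $f_r^{t-SNE}(\zeta_{i,j}^{t})(y_i^{t}-y_j^{t})$, so together they fold into the combined scalar prefactor $A := 1+2\lambda\tfrac{w_{i,j}}{Z}f_r^{t-SNE}(\zeta_{i,j}^{t})$; everything left over is, by construction, the vector $v$ given in the statement. This gives the clean decomposition $y_i^{t+1}-y_j^{t+1} = A(y_i^{t}-y_j^{t}) + v$. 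Second, I would square the norm of this expression, replace the inner product $v^{\top}(y_i^{t}-y_j^{t})$ by $\|v\|_2\,\zeta_{i,j}\cos\theta$ using the definition of $\theta$, divide through by $\zeta_{i,j}^2$, and compare with $1$. The resulting threshold is $A^2 + \|v\|_2^2/\zeta_{i,j}^2 + 2A\,\|v\|_2/\zeta_{i,j}\,\cos\theta > 1$, which is $h_r^{t-SNE} > 1$ after identifying the cosine coefficient.

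\textbf{Main obstacle.} The only substantive work is the bookkeeping in the first step. Unlike UMAP and TriMap, where the update affects just the two or three points in question, the $t$-SNE repulsive update for the pair $(y_i,y_j)$ aggregates contributions from every other point through the sums over $k$ and $l$; I must verify carefully that the pair $(i,j)$ itself shows up exactly twice (once from each sum, producing the factor of $2$ in $A$) and that every remaining off-pair term reassembles verbatim into the vector $v$ as written. Once that decomposition is correctly done, the rest is a straightforward application of the law of cosines and requires no additional analytic input; in particular, unlike the attractive case, I do not need to invoke any monotonicity or invertibility property of the shape function.
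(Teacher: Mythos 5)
Your proposal is correct and follows the paper's own (very terse) proof exactly: subtract the two updates, fold the $k=j$ and $l=i$ summands into the scalar prefactor $A=1+2\lambda\frac{w_{i,j}}{Z}f_r^{t-SNE}(\zeta_{i,j})$, collect the remaining terms into $v$, and expand the squared norm of $A(y_i-y_j)+v$ via the law of cosines. One caution: the cross term you correctly derive is $2\bigl(1+2\lambda\frac{w_{i,j}}{Z}f_r^{t-SNE}(\zeta_{i,j})\bigr)\frac{\|v\|_2}{\zeta_{i,j}}\cos\theta$, which does \emph{not} reduce to the coefficient $2\lambda\frac{w_{i,j}}{Z}f_r^{t-SNE}(\zeta_{i,j})$ printed in the statement of $h_r^{t-SNE}$; the analogous TriMap expression $h_a^T$ (where the cross term carries the full $(1+2\lambda f_a^T)$ prefactor) confirms your version is the right one and the statement contains a typo, so you should flag that mismatch explicitly rather than absorb it into the phrase ``after identifying the cosine coefficient.''
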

\begin{proof}
    We require,
    \begin{align}
        ||y_i^{t+1}-y_j^{t+1}||_2^2 > ||y_i^{t}-y_j^{t}||_2^2. \label{eq:tsne_require}
    \end{align}
    From Eqs.~(\ref{eq:tsne_rep1}) and~(\ref{eq:tsne_rep2}), $y_i-y_j = (1+2\lambda \frac{w_{i,j}}{Z} f_r^{t-SNE}(\zeta_{i,j})) (y_i-y_j) + v$. Putting this value in Eq.~(\ref{eq:tsne_require}), we obtain the desired inequality.
\end{proof}
Since $t$-SNE's condition for repulsion (Eq.~\ref{eq:tsne_attr_cond}) resembles that of TriMap, the repulsion behavior will be the same. Thus, $t$-SNE's repulsive forces can give both attraction and repulsion.

Since, $t$-SNE's forces are scaled (by $w_{i,j}$ for attraction and $\frac{w_{i,j}}{Z}$ for repulsion), the attractive and repulsive forces are typically lower than that of UMAP. As a result, the algorithm generally uses large values for learning rate (e.g., $\approx10^3$ in the Open-$t$-SNE package~\citep{policar2019opentsne}).
Moreover, most $t$-SNE implementations require an `early exaggeration' step where the attractive forces are multiplied by a constant value for the first few iterations. 
This causes points that are supposed to be closer but currently placed far apart to approach each other (inducing far-sightedness). 
On the other hand, if some points are very close ($\zeta_{i,j}\approx\zeta_{-1}$) but require separation, this early exaggeration trick achieves that as well. 
Thus, `early exaggeration' plays a vital role in finding a consistent embedding in $t$-SNE and is an indispensable feature of the algorithm; especially when initialized randomly. 
(This trick, when applied throughout the optimization, makes $t$-SNE embeddings look closer to UMAP embeddings~\citep{bohm2020unifying}.)

\subsection{Stochastic Neighbor Embedding (SNE)}

SNE~\citep{hinton2002stochastic} is one of the oldest algorithms in this class. This follows the same formula of $t$-SNE (Eq.~\ref{eq:tsne_loss}), but with $q_{i,j}=\exp(-||y_i-y_j||_2^2)$. This results in the attraction shape 
\begin{align}
    f_a^{SNE}(\zeta) = -2,
\end{align}
that follows Proposition~\ref{thm:attr} (with $0<\lambda w_{i,j} f_a^{SNE}<-1$) and the repulsion shape
\begin{align}
    f_r^{SNE}(\zeta) = 2 \exp(-\zeta^2), 
\end{align}
that follows proposition~\ref{thm:tsne_prop} (by replacing the $t$-SNE symbols with SNE counterparts). The attraction shape is ill-posed and thus mainly relies on the values of learning rate ($\lambda$) and the weights ($w_{i,j}$) for contraction resulting in clusters overlapping each other even for small number of samples (called crowding problem~\citep{van2008visualizing}), which $t$-SNE and the subsequent algorithms improve by moving away from the Gaussian kernel to a heavy-tailed one, i.e., Eq.~(\ref{eq:cauchy_kernel}).

\subsection{Multidimensional Scaling (MDS)}

\begin{figure}
    \centering
    \includegraphics[width=1.0\linewidth]{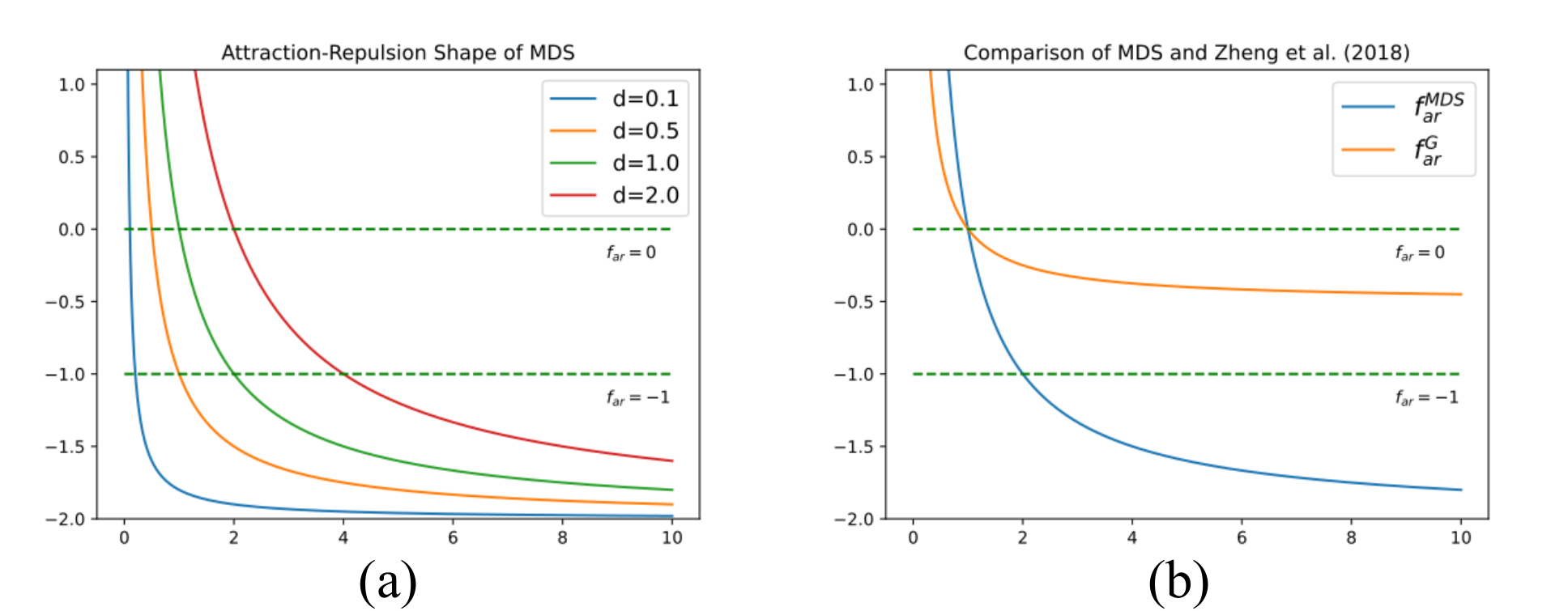}
    \caption{(a) Attraction-repulsion characteristic of MDS using $f_{ar}$. (b) Comparison of attraction-repulsion characteristic of MDS and the method in~\citep{zheng2018graph} for $d=1.0$.}
    \label{fig:mds_fig}
\end{figure}

Multidimensional scaling (MDS)~\citep{borg2007modern} typically does not use gradient methods, as they often fail to converge to good mappings; instead, it employs stress majorization. Nevertheless, few works discuss gradient methods~\citep{kruskal1964nonmetric,zheng2018graph}. We offer a brief treatment for this below. Particularly,~\cite{zheng2018graph} formulates a successful gradient descent-based MDS algorithm for graph drawing. We start with the cost function (we can ignore the weight $w_{i,j}$ without loss of generality to their approach, and for discussion, we can lump it into the learning rate):
\begin{align}
    \mathcal{L}^{MDS} = \sum_{i,j, i<j} (||y_i-y_j||-d_{ij})^2.
\end{align}
The loss has a singleton term for each pair and does not have explicit attractive and repulsive terms. The term itself will provide both attraction and repulsion. 
Fortunately, the loss is separable into individual $(i,j)$ components, allowing us to analyze a single pair. The derivative is given by
\begin{align}
    \frac{\partial}{\partial y_i} (||y_i-y_j||-d_{ij})^2 &= 2 \frac{||y_i-y_j||-d_{ij}}{||y_i-y_j||} (y_i-y_j).
\end{align}
Thus, the attraction-repulsion shape of MDS becomes
\begin{align}
    f_{ar}^{MDS} (\zeta) = - 2\frac{\zeta-d}{\zeta},
\end{align}
where $\zeta=||y_i-y_j||_2$. Here, the shape is attractive when $\zeta>d$. In this range, the shape is bounded within $[-2,0]$ (Fig.~\ref{fig:mds_fig}(a)). However, this is not suitable for convergence as values only within $[-1,0]$ contract, while the others expand. To make things worse, values lower than $-2$ cause the points to flip and expand. To work around this,~\cite{zheng2018graph} uses an ad-hoc formulation inspired by the force-directed graphs, given by
\begin{align}
    f_{ar}^G(\zeta) = -\frac{1}{2}\frac{\zeta-d}{\zeta},
\end{align}
which works for an effective learning rate $\lambda\leq1$. $f_{ar}^G$ is bounded within $[-0.5,0]$ and thus, $\lambda\leq1$ works. On the other hand, for $\zeta<d$, $f_{ar}^{MDS} (>0)$ shows repulsive behavior. Overall, this attraction-repulsion interaction works best when the initialization is already close to a desired output. If one keeps optimizing for a pair, it will oscillate around the distance ($\zeta=d$) where $f_{ar}^G=0$ (and consequently, we can have a notion of distance similar to $\zeta_{-1}$ of UMAP). No choice of learning rate reduces this distance to zero (in fact, this achieves the objective of MDS). Thus, the clusters are often fuzzy compared to methods like UMAP and PaCMAP (for relevant illustrations, see~\cite{lambert2022squadmds}, \cite{de2025low}, and~\cite{kury2025dreams}). Note that~\cite{lambert2022squadmds} converts the MDS loss function to a quartet stress (loss involving four samples) and a relative distance (distance divided by the sum of six distances in the quartet), enabling global regularization and reduced computation than the majorization approach.

\section{Construction of the High-Dimensional Graph}\label{supp:umap_graph}

Stochastic Neighbor Embedding (SNE)~\cite{hinton2002stochastic} underpins modern dimensionality reduction algorithms. It constructs a high-dimensional graph of the dataset $X=\{x_i\in R^n|i=1,\dots,N\}$ by the following system of equations:
\begin{align}
    w_{ij} & = \frac{w_{j|i}+w_{i|j}}{2N},\label{eq:tsne_high_dim} 
\end{align}
\begin{align}
    w_{j|i} &= \frac{\exp(-||x_i-x_j||_2^2/2\sigma_i^2)}
                {\sum_{t\neq v} \exp(-||x_t-x_v||_2^2/2\sigma_t^2)}, \label{eq:tsne_high_dim_eq}
\end{align}
where $\sigma_i^2$ is chosen to match a user-defined value \textit{perplexity}, $P$, defined as 
\begin{align}
    P &= 2^{H_i}\\
    H_i &= \sum_{j\neq i} w_{j|i} \log_2 w_{j|i}.
\end{align}

On the other hand, UMAP constructs its high-dimensional graph by the following system of equations relying on the k-nearest neighbor (k-NN) algorithm:
\begin{align}
    p_{i|j} &=
        \begin{cases}
            \exp{\left(-\frac{d(x_i,x_j)-\rho_i}{\sigma_i}\right)} & \text{if } x_j\in \text{KNN}(x_i,k) \\
            0 & \text{otherwise}
        \end{cases}, \label{eq:UMAP_HIGH_DIM} \\
    \rho_i &= \min_{x_j\in \text{KNN}(x_i,k)} d(x_i,x_j),
\end{align}
where $\text{KNN}(x_i,k)$ is the set of $k$-nearest neighbors of the point $x_i$ and $\sigma_i$ is a scaling parameter such that $\sum_j  p_{i|j} = \log_2(k)$. The graph is then symmetrized by a t-conorm:
\begin{align}
    p_{i,j} &= p_{i|j} + p_{j|i} - p_{i|j} p_{j|i}.
\end{align}

PaCMAP uses just the k-NN graph for the affinities (all equal to 1) with a self-tuning distance measure~\citep{zelnik2004self},
\begin{align}
    d_{i,j}^2=\frac{||x_i-x_j||^2}{\sigma_i\sigma_j},
\end{align}
where $\sigma_i$ is the average distance between $x_i$ and its Euclidean nearest fourth to sixth neighbors. The purpose of this is the same as the corresponding $\sigma_i$ parameters in Eq.~(\ref{eq:tsne_high_dim_eq}) and~(\ref{eq:UMAP_HIGH_DIM}), despite all three being defined differently.

Regardless of these choices, the optimization, and by our analysis, the attraction and repulsion shapes are the primary drivers of the low-dimensional embedding.

\clearpage

\section{Details of Estimating Flip and Expansion, and Average Distance}\label{sec:details_flips_avg_distance}

\subsection{Translation-invariance based Flip and Expansion}

For each neighboring pair in $d$ dimensions, let $y_i^{e}\in\mathbb{R}^d$ and $y_j^{e}\in\mathbb{R}^d$ denote the embeddings at epoch $e$. We define the relative (pairwise) vector at epoch $e$ as
\begin{align}
    u^{e}_{ij} = y^{e}_j - y^{e}_i.
\end{align}
Similarly, at the next epoch we define
\begin{align}
    u^{e+1}_{ij} = y^{e+1}_j - y^{e+1}_i.
\end{align}
Because $u^{e}_{ij}$ and $u^{e+1}_{ij}$ depend only on differences, they are invariant to any global translation applied to all points.

We consider that a \emph{flip} has occurred from epoch $e$ to $e+1$ if the relative ordering of the pair reverses along the original direction, which is detected by a sign change of the inner product:
\begin{align}
    \text{flip}(i,j,e) = \sgn\left(\left\langle u^{e}_{ij},\,u^{e+1}_{ij}\right\rangle\right).
\end{align}
Intuitively, $\langle u^{e}_{ij},u^{e+1}_{ij}\rangle<0$ indicates that the new relative displacement points opposite to the previous relative displacement, i.e., the two points have swapped their order along the axis defined by $u^e_{ij}$. We compute whether an expansion has occurred by simply evaluating the statement $d(y^{e+1}_i,y^{e+1}_j)>d(y^{e}_i,y^{e}_j)$.

In practice, we ignore degenerate pairs with $\|u^{e}_{ij}\|_2 \approx 0$ (nearly identical points), since the flip direction is ill-defined in that case.

\subsection{Perpendicular Bisector-Based Flip and Expansion}

In addition to the above, we use a second method to compute flip and expansion using the perpendicular bisector. For each neighboring pair in 2D, $y^{e}_i=(y^{e}_{i1},y^{e}_{i2})$ and $y^{e}_j=(y^{e}_{j1},y^{e}_{j2})$, where $e$ is the epoch number, we define a reference normal line (perpendicular bisector) going through the midpoint of connecting line by
\begin{align}
    \lne^{e}_{ij}(y_1,y_2) = Ax+By+C
\end{align}
where, $A$, $B$, and $C$ are constants computed as:
\begin{align}
    A &= y^{e}_{i1} - y^{e}_{j1}, \\
    B &= y^{e}_{i2} - y^{e}_{j2}, ~~~~~\text{and} \\
    C &= - \left(A \times \frac{y^{e}_{i1}+y^{e}_{j1}}{2} + B \times \frac{(y^{e}_{i2}+y^{e}_{j2})}{2} \right),
\end{align}
respectively. Then we compute the initial position using the signum function by
\begin{align}
    S^{e}_{ij1} &= \sgn(\lne^{e}_{ij}(y^{e}_{i1},y^{e}_{i2})) ~~~\text{and}\\
    S^{e}_{ij2} &= \sgn(\lne^{e}_{ij}(y^{e}_{j1},y^{e}_{j2})).
\end{align}

In the next epoch, the new positions of the points are $y^{e+1}_i=(y^{e+1}_{i1},y^{e+1}_{i2})$ and $y^{e+1}_j=(y^{e+1}_{j1},y^{e+1}_{j2})$ respectively. We compute their position from the reference line by
\begin{align}
    S^{e+1}_{ij1} &= \sgn(\lne^{e}_{ij}(y^{e+1}_{i1},y^{e+1}_{i2})) ~~~\text{and}\\
    S^{e+1}_{ij2} &= \sgn(\lne^{e}_{ij}(y^{e+1}_{j1},y^{e+1}_{j2})).
\end{align}

We consider, a flip has occurred if $S^{e}_{ij1} \neq S^{e+1}_{ij1}$ and $S^{e}_{ij2}\neq S^{e+1}_{ij2}$. We ignore degenerate cases where a point lies exactly on the bisector, i.e., $\lne^{e}_{ij}(y_1,y_2)=0$. Like before, the expansion is given by evaluating the statement: $d(y^{e+1}_i,y^{e+1}_j)>d(y^{e}_i,y^{e}_j)$.

A major detriment of this method is that if both pairs have translated to one side of the reference line, it will miss that point. On the other hand, both methods miss the points that have flipped during attractive update, but due to other interactions, either flipped back, remained unchanged, or shrunk. Notably, PaCMAP prevents flips during attractive update. Thus, we can use PaCMAP as a baseline and compare other methods accordingly.

\begin{figure}[t]
    \centering
    \includegraphics[width=1.0\linewidth]{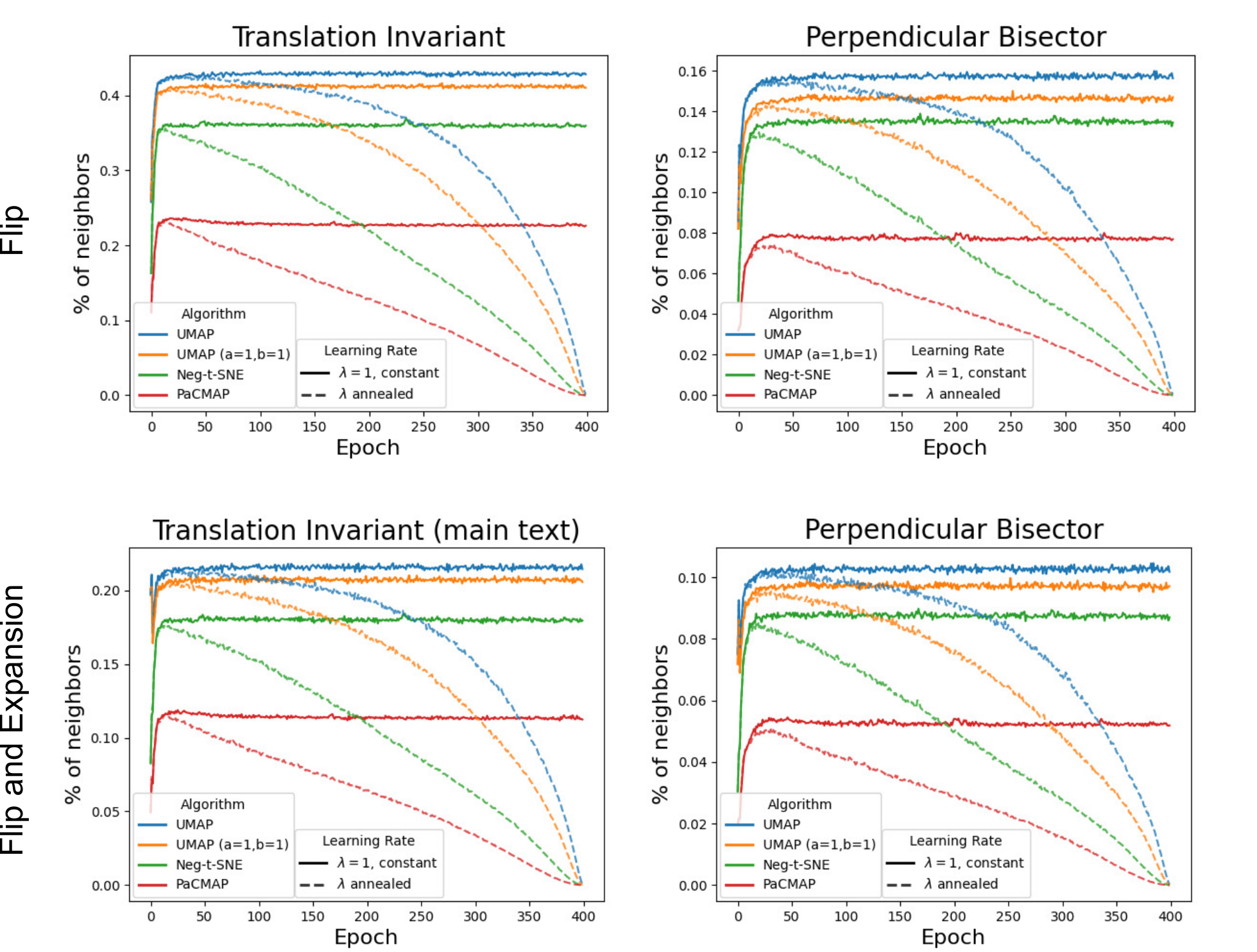}
    \caption{Effect of different methods of computing flip and expansion on the MNIST dataset. Percentage of neighbors that flip computed using (a) translation invariant and (b) perpendicular bisector method. Percentage of neighbors that flip and expand using (a) translation invariant (reproduced form main text) and (d) perpendicular bisector method.}
    \label{fig:flip_methods}
\end{figure}

Figure~\ref{fig:flip_methods} shows results on counting the number of points that go through flip and expansion. Translation invariant method shows that roughly $42.86\%$ (mean of last 200 epochs) points experience flips in UMAP per epoch when the learning rate is constant, while it is relatively less for other methods (Fig.~\ref{fig:flip_methods}~(a)). For PaCMAP, the value is $22.68\%$, nearly half of that of UMAP. On the other hand, nearly half of the flipped points expand (\ref{fig:flip_methods}~(b)). For UMAP, only $21.54\%$ points flip and expand, whereas it is $11.34\%$ for PaCMAP (again, nearly half of UMAP).

The perpendicular bisector method (Fig.~\ref{fig:flip_methods}~(c,d)) can identify a lot fewer flips and expansions than the translation invariant method. This is expected as the reference normal line may change from epoch to epoch. However, the trend is similar to that of the translation-invariant method. As we guide attraction shape toward $-1$ and then to $-0.5$, the number of flips and expansion decreases.

In all cases, reducing the learning rate reduces the number of flips and expansions.

\subsection{Measuring Distance}
To compute the average distance, we rescaled each dimension of the embeddings to unit variance. After that, we computed the average distances of the nearest neighbor pairs by
\begin{align}
    d^{e}_{avg} = \operatorname{mean}\left(\sum_{\substack{i,j; i<j;\\ x_j\in \mathrm{KNN}(x_i,k)}} d(y^{e}_i,y^{e}_j) \right).
\end{align}

\begin{figure}[t]
    \centering
    \includegraphics[width=1.0\linewidth]{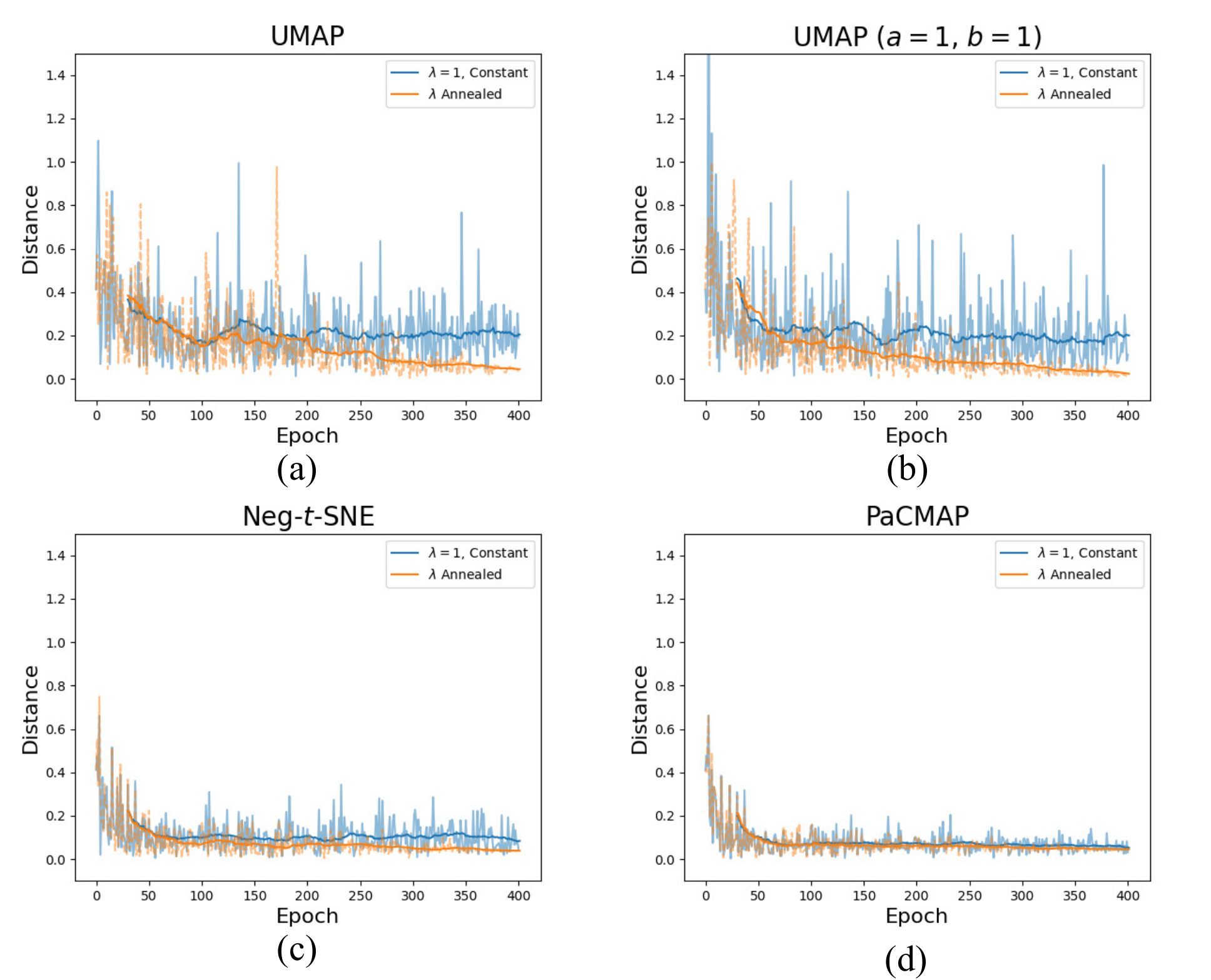}
    \caption{Tracking a neighboring pair across epochs. (a) UMAP, (b) UMAP ($a=1$, $b=1$), (c) Neg-$t$-SNE, and (d) PacMAP.}
    \label{fig:tracker_fig}
\end{figure}

In addition, we tracked one neighboring pair across epochs (Fig.~\ref{fig:tracker_fig}). Transparent curves show the distances after each epoch, while the dark curves show the moving average estimate (30 points). UMAP shows the largest gap between the constant and the annealed curve at the end of optimization, whereas Neg-$t$-SNE shows a much lower gap. PaCMAP shows very little gap throughout, and the curves nearly coincide.

\clearpage

\section{Effect of Controlled Flips on Embedding Quality}\label{sec:controlled_flips}

To directly test whether flip events can degrade embedding quality, we introduce a controlled flip intervention during optimization. Specifically, whenever the attraction value $f_a$ lies in $[0,0.5]$, we replace it by $1-f_a$ with probability $p$, while keeping learning-rate annealing enabled. We consider $p \in \{0, 0.25, 0.5, 0.75, 1.0\}$, where $p=0$ corresponds to default UMAP. In practice, for each eligible attractive update, we draw $u \sim \mathrm{Uniform}(0,1)$ and apply the flip only when $u<p$. This creates approximately $0\%$, $25\%$, $50\%$, $75\%$, and $100\%$ additional flips on top of the natural ones that already occur during optimization.

\begin{table}[h]
\caption{Trustworthiness and silhouette score under controlled flip injection on MNIST.}
\label{tab:controlled_flips}
\centering
\begin{tabular}{lccccc}
\hline
$p$ & 0 & 0.25 & 0.5 & 0.75 & 1.0 \\
\hline
Trustworthiness & 0.9588 & 0.9471 & 0.9382 & 0.9346 & 0.9306 \\
Silhouette Score & 0.52 & 0.43 & 0.41 & 0.40 & 0.39 \\
\hline
\end{tabular}

\end{table}

We evaluate this intervention on MNIST using trustworthiness and silhouette score (Table~\ref{tab:controlled_flips}). As the flip probability increases, both metrics decrease monotonically: trustworthiness drops from $0.9588$ at $p=0$ to $0.9306$ at $p=1.0$, while the silhouette score drops from $0.52$ to $0.39$. Since trustworthiness measures preservation of local neighborhoods and silhouette score measures cluster separation, this experiment shows that artificially increasing the number of flips leads to systematically worse embeddings.

This controlled experiment complements our observational analysis of flips during standard optimization. There, flips are measured as a consequence of the attraction shape and learning-rate schedule; here, they are directly injected as an intervention. The resulting monotonic degradation in both neighborhood preservation and clustering quality provides direct evidence that excessive flips are not merely incidental, but can actively harm the learned representation.

\clearpage

\section{A Possible Pathological Example: Separated-Neighbor Dataset}\label{sec:possible_path}

\begin{figure}[h]
    \centering
    \includegraphics[width=0.49\linewidth]{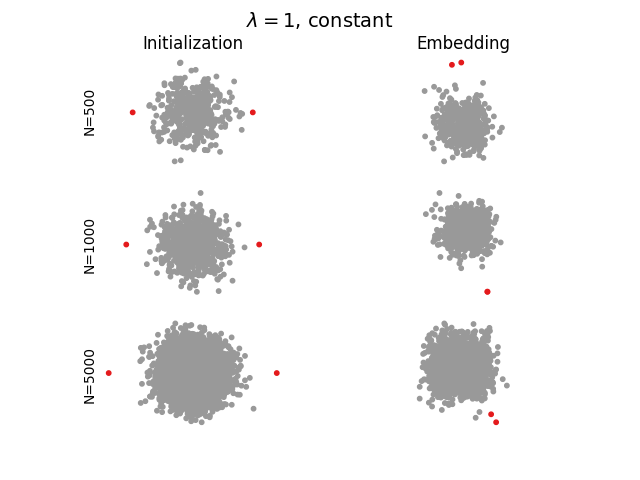}
    \includegraphics[width=0.49\linewidth]{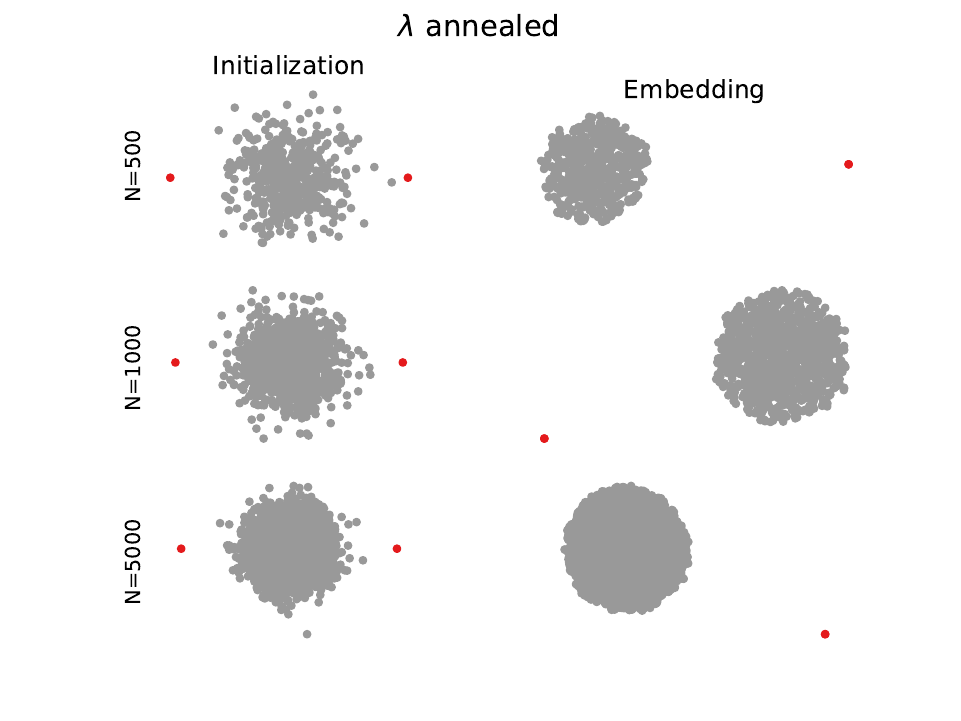} \\
    (a) \hspace{0.45\linewidth} (b)
    \caption{Embeddings of the Separated-Neighbor dataset under different optimization scenarios. (a) Constant learning rate ($\lambda=1$); (b) annealed learning rate. In each panel, the left column shows the initialization and the right column shows the final embedding. The three rows correspond to $N=500$, $1000$, and $5000$, respectively. Red dots denote the free pair, and gray dots denote the Gaussian samples.}
    \label{fig:hypo_embeddings}
\end{figure}

One can hypothesize pathological counterexamples in which two neighboring points struggle to contract. One such scenario is when two points are neighbors in the high-dimensional graph but, at initialization, are placed far apart with many repelling points lying between them. One may then argue that, under such a configuration, the neighboring pair might fail to contract. In this section, we investigate precisely this scenario in a 2D embedding setting and examine how UMAP, and more broadly related methods of the same class, behave.

To construct the dataset, we sample $N$ points from a $400$-dimensional normal distribution and build the high-dimensional graph according to Eq.~(\ref{eq:UMAP_HIGH_DIM}) with $k=15$. We then add two \emph{free} vertices that are connected only to each other and to no other vertices in the graph. The embedding is initialized as follows: the original $N$ points are initialized from a 2D normal distribution, while the two free points are placed at $(-1.5,0)$ and $(1.5,0)$. This setup simulates a situation in which roughly $N$ repelling points lie between two neighboring points that should contract. We then run the default UMAP optimization for $400$ epochs, allowing both the annealed and non-annealed learning-rate settings to reach stable behavior. We refer to this synthetic construction as the \emph{Separated-Neighbor} dataset, since the special neighboring pair must contract across a crowd of repelling points. We perform this experiment for $N=500$, $1000$, and $5000$.

Examining the embeddings shows that the free pair indeed contracts, regardless of whether the learning rate is annealed (Fig.~\ref{fig:hypo_embeddings}). Tracking flips and expansions (Fig.~\ref{fig:hypo_flips}) reveals a trend similar to that observed on MNIST. With a constant learning rate, the fraction of neighbors that flip is about $28.5\%$, while the fraction that both flip and expand is about $14.9\%$. Importantly, these values remain broadly consistent as the dataset size $N$ increases. When the learning rate is annealed, both quantities decrease toward zero. To probe the dynamics more directly, we track the free pair together with a representative pair from the Gaussian cloud throughout optimization (Fig.~\ref{fig:hypo_trackers}). By around $60$ epochs, the free pair contracts and effectively decouples from the Gaussian cloud under both constant and annealed learning rates.

Thus, even in this deliberately adverse configuration, the local contraction behavior identified by our analysis remains visible in the full many-point optimization and does not disappear as $N$ increases over the tested range.

\begin{figure}[ht]
    \centering
    \includegraphics[width=0.8\linewidth]{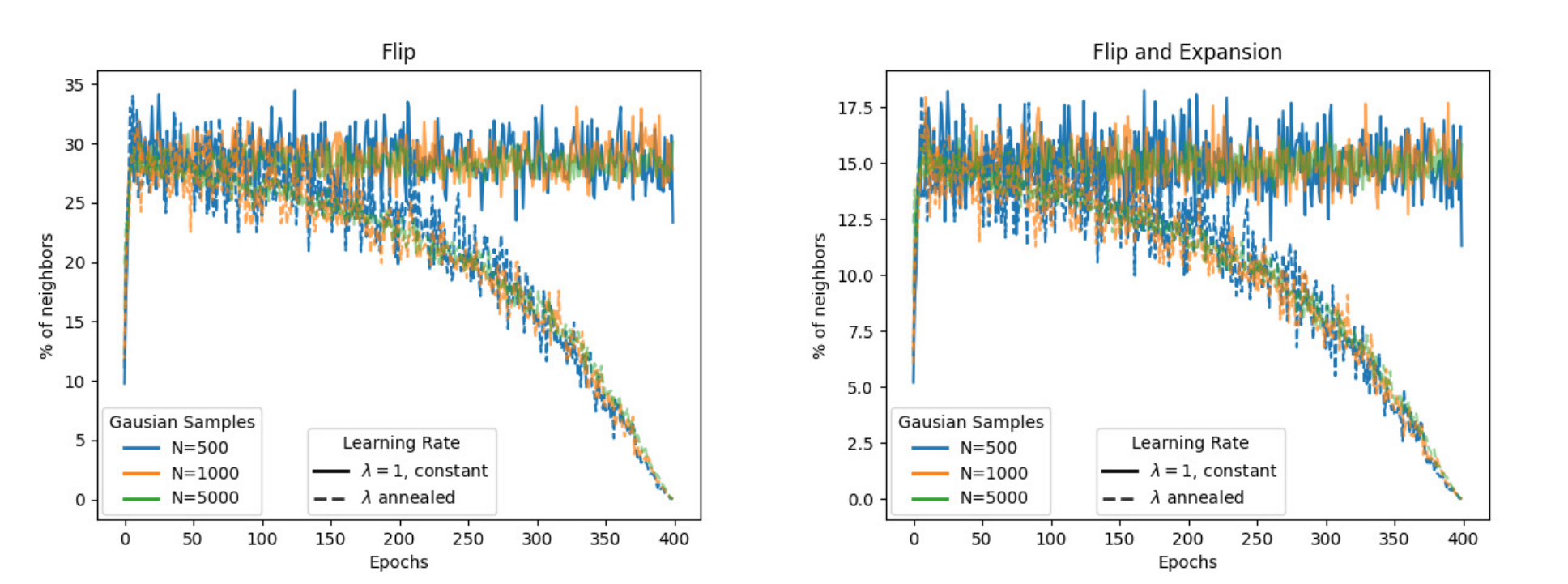} \\
    (a) \hspace{0.35\linewidth} (b)
    \caption{Flips and expansions in the Separated-Neighbor dataset for $N=500$, $1000$, and $5000$. (a) Flips, and (b) combined flip and expansion events. Here, we employed the translation-invariant method.}
    \label{fig:hypo_flips}
\end{figure}

\begin{figure}[ht]
    \centering
    \includegraphics[width=0.7\linewidth]{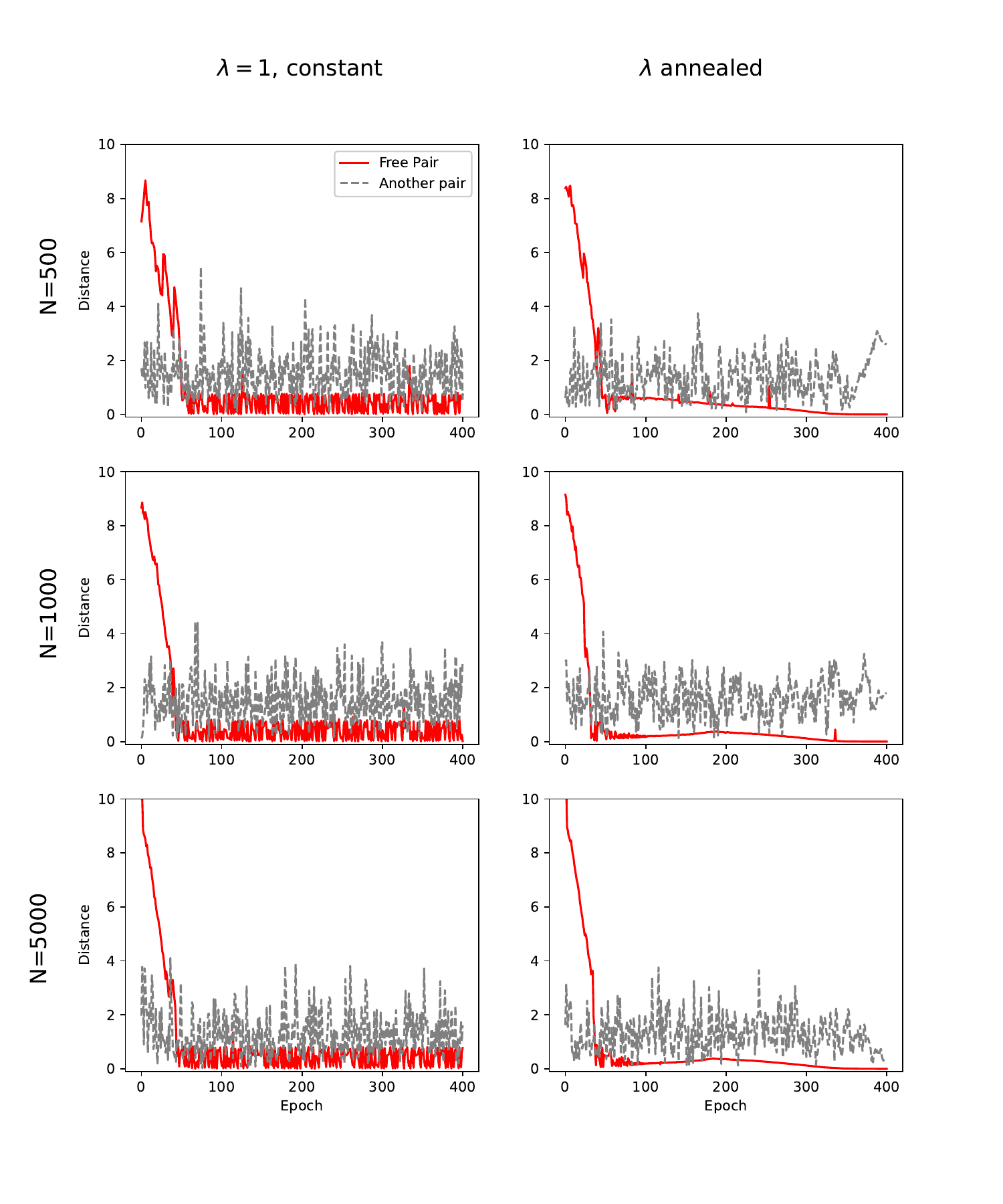}
    \caption{Tracking the embedding-space distances during optimization on the Separated-Neighbor dataset. We show the distance between the free pair and, for comparison, the distance between a representative pair of Gaussian points over the course of optimization.}
    \label{fig:hypo_trackers}
\end{figure}

\clearpage

\section{Detailed Results for Varying $\lambda_a$ and $\lambda_b$}\label{supp:more_on_la_lb}

In this section, we provide detailed results for the experiments in Fig.~\ref{fig:atr_repl_shape}((h,i). 
We obtained these results, by changing the attraction and repulsion shapes of UMAP to that of the other methods. Figure~\ref{fig:mnist_comparison} reproduces the results given in the main text. 
Figures~\ref{fig:mnist_lrsweep}-\ref{fig:pacmap_mnist_lrsweep} show the individual embeddings for each of the choices of $\lambda_a$ and $\lambda_r$ for each methods along with their initialization, a reference embedding when learning rate, $\lambda$, is annealed, and when either the $\lambda_a$ or $\lambda_r$ set to $0$. 
For PaCMAP, which uses the concept of mid-near pairs, we show additional reference of mid-near pairs included as well. 
Figures~\ref{fig:fmnist_comparison}-~\ref{fig:pacmap_fmnist_lrsweep} and ~Figures.~\ref{fig:macosko_comparison}-~\ref{fig:pacmap_macosko_lrsweep} provide results on FMNIST and single-cell transcriptomes dataset, respectively.

\begin{figure}[h]
    \centering
    \includegraphics[width=0.5\linewidth]{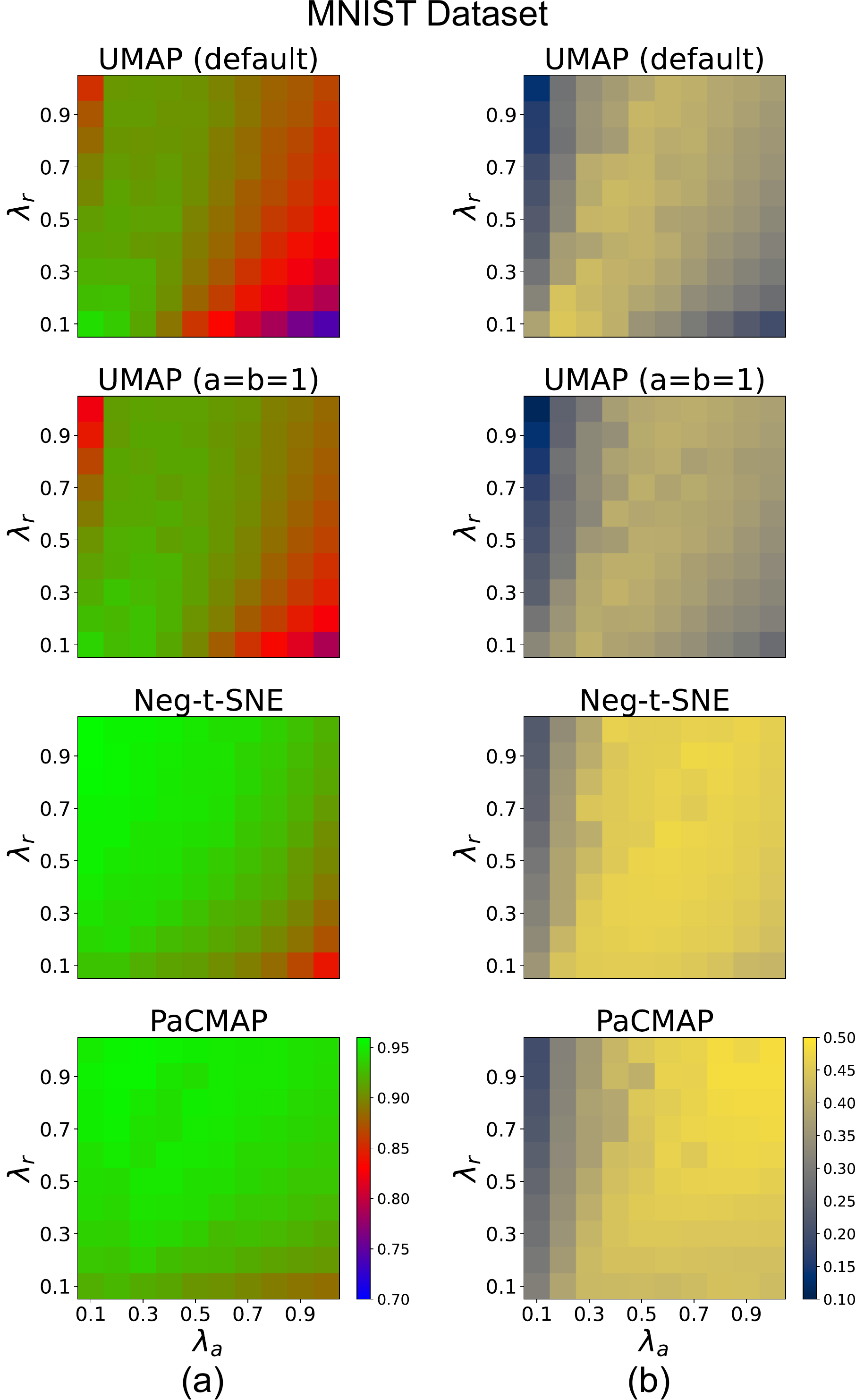}
    \caption{(a) Trustworthiness and (b) Silhouette score of different methods for the MNIST dataset.}
    \label{fig:mnist_comparison}
\end{figure}

\clearpage

\begin{figure}
    \centering
    \includegraphics[width=0.9\linewidth]{figures_attr_shape/mnist_k1k2.eps}
    \caption{Varying $\lambda_a$ and $\lambda_r$ for the MNIST dataset using UMAP's attraction and repulsion shapes. (a) Initialization for the embeddings. (b) Baseline when $\lambda$ is annealed from $1$. (c) The embeddings when $\lambda_a$ and $\lambda_r$ vary (without any annealing). (d) When $\lambda_r$ is set to 0 (no repulsion), the attractive force alone cannot produce any cluster. (e) Similarly, when $\lambda_a$ is set to 0 (no attraction), the repulsive force alone cannot produce any clusters.}
    \label{fig:mnist_lrsweep}
\end{figure}

\clearpage

\begin{figure}
    \centering
    \includegraphics[width=0.9\linewidth]{figures_attr_shape/umap_ab1_mnist_k1k2.eps}
    \caption{Varying $\lambda_a$ and $\lambda_r$ for the MNIST dataset using UMAP's attraction and repulsion shapes (by setting $a=1$ and $b=1$). (a) Initialization for the embeddings. (b) Baseline when $\lambda$ is annealed from $1$. (c) The embeddings when $\lambda_a$ and $\lambda_r$ vary (without any annealing). (d) When $\lambda_r$ is set to 0 (no repulsion), the attractive force alone cannot produce any cluster. (e) Similarly, when $\lambda_a$ is set to 0 (no attraction), the repulsive force alone cannot produce any clusters.}
    \label{fig:mnist_ab1_lrsweep}
\end{figure}

\clearpage

\begin{figure}
    \centering
    \includegraphics[width=0.9\linewidth]{figures_attr_shape/neg_mnist_k1k2.eps}
    \caption{Varying $\lambda_a$ and $\lambda_r$ for the MNIST dataset using NEG-$t$-SNE's attraction and repulsion shapes. (a) Initialization for the embeddings. (b) Baseline when $\lambda$ is annealed from $1$. (c) The embeddings when $\lambda_a$ and $\lambda_r$ vary (without any annealing). (d) When $\lambda_r$ is set to 0 (no repulsion), the attractive force alone cannot produce any cluster. (e) Similarly, when $\lambda_a$ is set to 0 (no attraction), the repulsive force alone cannot produce any clusters.}
    \label{fig:neg_mnist_lrsweep}
\end{figure}

\clearpage

\begin{figure}
    \centering
    \includegraphics[width=0.9\linewidth]{figures_attr_shape/pacmap_mnist_k1k2.eps}
    \caption{Varying $\lambda_a$ and $\lambda_r$ for the MNIST dataset using PaCMAP's attraction and repulsion shapes. (a) Initialization for the embeddings. (b) Left: Baseline when $\lambda$ is annealed from $1$ (mid-near points are excluded to observe the interaction of attraction-repulsion alone), right: when mid-near points are considered. (c) The embeddings when $\lambda_a$ and $\lambda_r$ vary (without any annealing). (d) When $\lambda_r$ is set to 0 (no repulsion), the attractive force alone cannot produce any cluster. (e) Similarly, when $\lambda_a$ is set to 0 (no attraction), the repulsive force alone cannot produce any clusters.}
    \label{fig:pacmap_mnist_lrsweep}
\end{figure}

\clearpage

\begin{figure}
    \centering
    \includegraphics[width=0.5\linewidth]{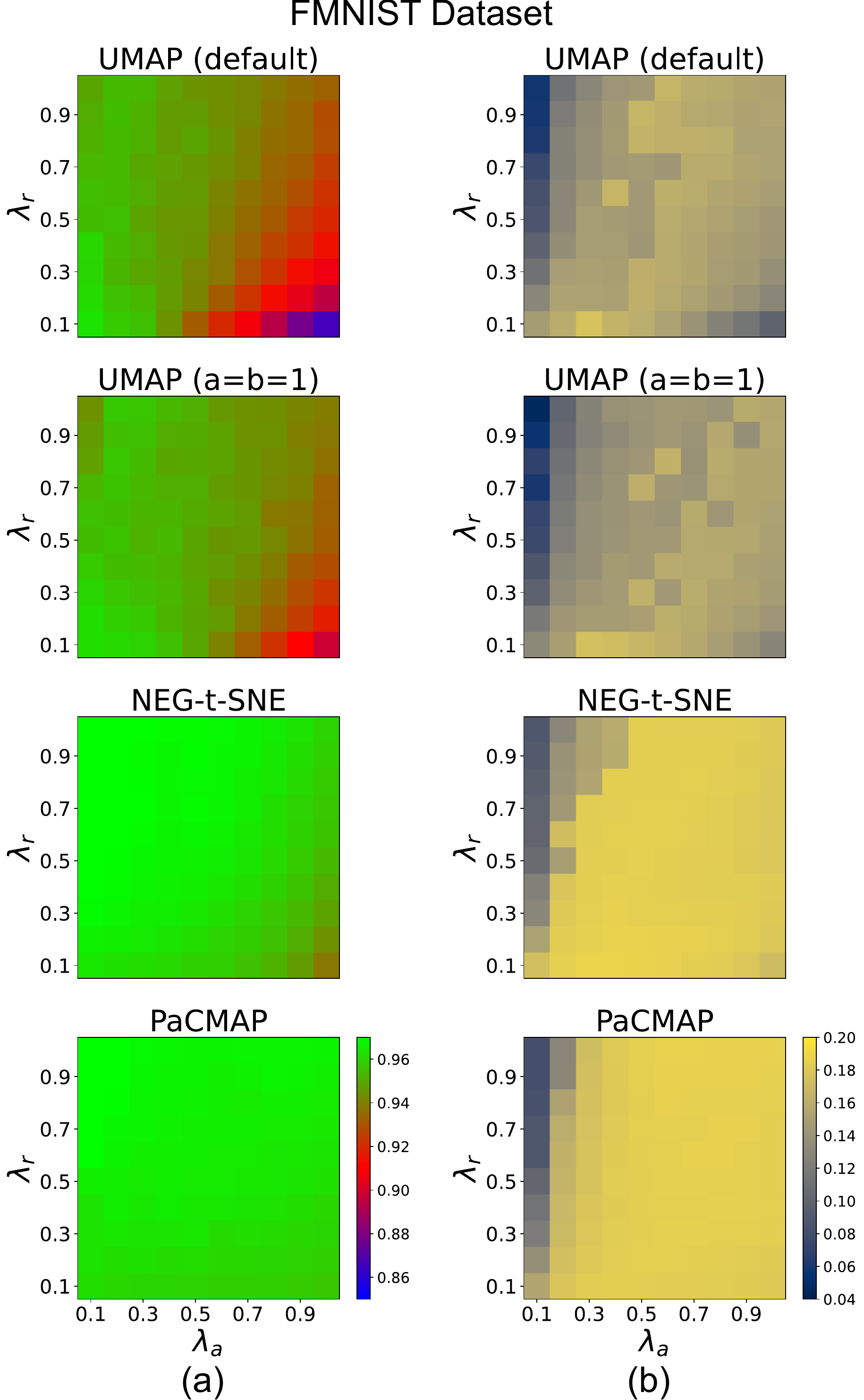}
    \caption{(a) Trustworthiness and (b) Silhouette score of different methods for the FMNIST dataset.}
    \label{fig:fmnist_comparison}
\end{figure}

\clearpage

\begin{figure}
    \centering
    \includegraphics[width=0.9\linewidth]{figures_attr_shape/fmnist_k1k2.eps}
    \caption{Varying $\lambda_a$ and $\lambda_r$ for the FMNIST dataset using UMAP's attraction and repulsion shapes. (a) Initialization for the embeddings. (b) Baseline when $\lambda$ is annealed from $1$. (c) The embeddings when $\lambda_a$ and $\lambda_r$ vary (without any annealing). (d) When $\lambda_r$ is set to 0 (no repulsion), the attractive force alone cannot produce any cluster. (e) Similarly, when $\lambda_a$ is set to 0 (no attraction), the repulsive force alone cannot produce any clusters.}
    \label{fig:fmnist_lrsweep}
\end{figure}

\clearpage

\begin{figure}
    \centering
    \includegraphics[width=0.9\linewidth]{figures_attr_shape/umap_ab1_fmnist_k1k2.eps}
    \caption{Varying $\lambda_a$ and $\lambda_r$ for the FMNIST dataset using UMAP's attraction and repulsion shapes (by setting $a=1$ and $b=1$). (a) Initialization for the embeddings. (b) Baseline when $\lambda$ is annealed from $1$. (c) The embeddings when $\lambda_a$ and $\lambda_r$ vary (without any annealing). (d) When $\lambda_r$ is set to 0 (no repulsion), the attractive force alone cannot produce any cluster. (e) Similarly, when $\lambda_a$ is set to 0 (no attraction), the repulsive force alone cannot produce any clusters.}
    \label{fig:umapab1_fmnist_lrsweep}
\end{figure}

\clearpage

\begin{figure}
    \centering
    \includegraphics[width=0.9\linewidth]{figures_attr_shape/neg_fmnist_k1k2.eps}
    \caption{Varying $\lambda_a$ and $\lambda_r$ for the FMNIST dataset using NEG-$t$-SNE's attraction and repulsion shapes. (a) Initialization for the embeddings. (b) Baseline when $\lambda$ is annealed from $1$. (c) The embeddings when $\lambda_a$ and $\lambda_r$ vary (without any annealing). (d) When $\lambda_r$ is set to 0 (no repulsion), the attractive force alone cannot produce any cluster. (e) Similarly, when $\lambda_a$ is set to 0 (no attraction), the repulsive force alone cannot produce any clusters.}
    \label{fig:neg_fmnist_lrsweep}
\end{figure}

\clearpage

\begin{figure}
    \centering
    \includegraphics[width=0.9\linewidth]{figures_attr_shape/pacmap_fmnist_k1k2.eps}
    \caption{Varying $\lambda_a$ and $\lambda_r$ for the FMNIST dataset using PaCMAP's attraction and repulsion shapes. (a) Initialization for the embeddings. (b) Left: Baseline when $\lambda$ is annealed from $1$ (mid-near points are excluded to observe the interaction of attraction-repulsion alone), right: when mid-near points are considered. (c) The embeddings when $\lambda_a$ and $\lambda_r$ vary (without any annealing). (d) When $\lambda_r$ is set to 0 (no repulsion), the attractive force alone cannot produce any cluster. (e) Similarly, when $\lambda_a$ is set to 0 (no attraction), the repulsive force alone cannot produce any clusters.}
    \label{fig:pacmap_fmnist_lrsweep}
\end{figure}

\clearpage

\begin{figure}
    \centering
    \includegraphics[width=0.5\linewidth]{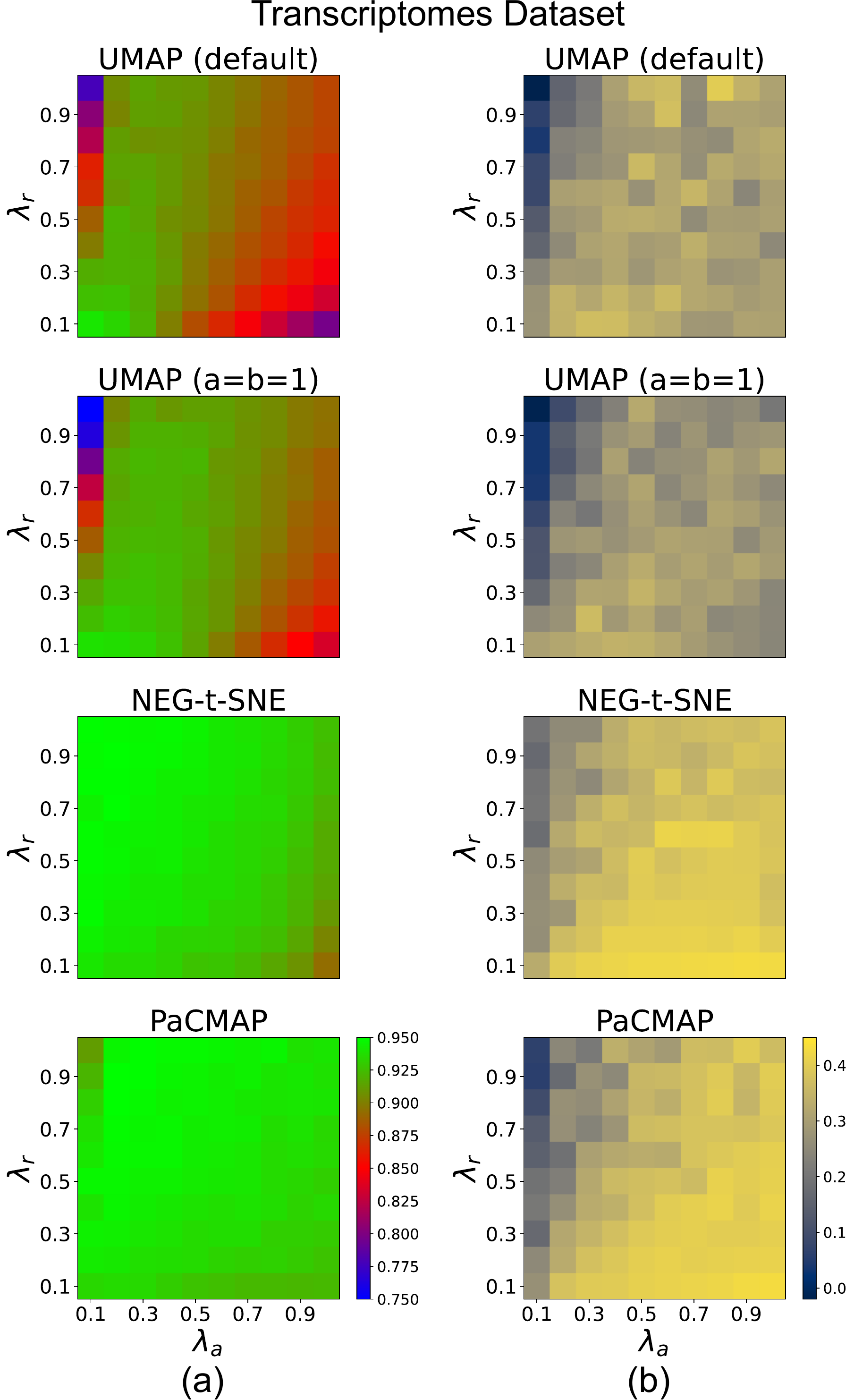}
    \caption{(a) Trustworthiness and (b) Silhouette score of different methods for the Single-cell transcriptomes dataset.}
    \label{fig:macosko_comparison}
\end{figure}

\clearpage

\begin{figure}
    \centering
    \includegraphics[width=0.9\linewidth]{figures_attr_shape/macosko_k1k2.eps}
    \caption{Varying $\lambda_a$ and $\lambda_r$ for the single-cell transcriptomes dataset using UMAP's attraction and repulsion shapes. (a) Initialization for the embeddings. (b) Baseline when $\lambda$ is annealed from $1$. (c) The embeddings when $\lambda_a$ and $\lambda_r$ vary (without any annealing). (d) When $\lambda_r$ is set to 0 (no repulsion), the attractive force alone cannot produce any cluster. (e) Similarly, when $\lambda_a$ is set to 0 (no attraction), the repulsive force alone cannot produce any clusters.}
    \label{fig:macosko_lrsweep}
\end{figure}

\clearpage

\begin{figure}
    \centering
    \includegraphics[width=0.9\linewidth]{figures_attr_shape/umap_ab_1_macosko_k1k2.eps}
    \caption{Varying $\lambda_a$ and $\lambda_r$ for the single-cell transcriptomes dataset using UMAP's attraction and repulsion shapes (by setting $a=1$ and $b=1$). (a) Initialization for the embeddings. (b) Baseline when $\lambda$ is annealed from $1$. (c) The embeddings when $\lambda_a$ and $\lambda_r$ vary (without any annealing). (d) When $\lambda_r$ is set to 0 (no repulsion), attractive force alone cannot produce any cluster. (e) Similarly, when $\lambda_a$ is set to 0 (no attraction), repulsive force alone cannot produce any clusters.}
    \label{fig:fmnist_ab1_macosko_lrsweep}
\end{figure}

\begin{figure}
    \centering
    \includegraphics[width=0.9\linewidth]{figures_attr_shape/neg_macosko_k1k2.eps}
    \caption{Varying $\lambda_a$ and $\lambda_r$ for the single-cell transcriptomes dataset using NEG-$t$-SNE's attraction and repulsion shapes. (a) Initialization for the embeddings. (b) Baseline when $\lambda$ is annealed from $1$. (c) The embeddings when $\lambda_a$ and $\lambda_r$ vary (without any annealing). (d) When $\lambda_r$ is set to 0 (no repulsion), the attractive force alone cannot produce distinct clusters. (e) Similarly, when $\lambda_a$ is set to 0 (no attraction), the repulsive force alone cannot produce any clusters.}
    \label{fig:neg_transcriptomes_lrsweep}
\end{figure}

\clearpage

\begin{figure}
    \centering
    \includegraphics[width=0.9\linewidth]{figures_attr_shape/pacmap_macosko_k1k2.eps}
    \caption{Varying $\lambda_a$ and $\lambda_r$ for the single-cell transcriptomes dataset using PaCMAP's attraction and repulsion shapes. (a) Initialization for the embeddings. (b) Left: Baseline when $\lambda$ is annealed from $1$ (mid-near points are excluded to observe the interaction of attraction-repulsion alone), right: when mid-near points are considered. (c) The embeddings when $\lambda_a$ and $\lambda_r$ vary (without any annealing). (d) When $\lambda_r$ is set to 0 (no repulsion), the attractive force alone cannot produce any cluster. (e) Similarly, when $\lambda_a$ is set to 0 (no attraction), the repulsive force alone cannot produce any clusters.}
    \label{fig:pacmap_macosko_lrsweep}
\end{figure}

\clearpage

\clearpage

\section{Implementation Details}\label{sec:implementation_details}

For analysis, we implemented our own UMAP algorithm. We used \texttt{numba}~\citep{lam2015numba} to compute an exact nearest neighbor graph (instead of an approximate one) with $k=15$ and \texttt{scikit-learn}'s~\citep{scikit-learn} implementation of the PCA algorithm for PCA initialization. We used this to produce and quantify the embeddings in Figs.~\ref{fig:atr_repl_shape},~\ref{fig:randomness},~\ref{fig:repulsion},~\ref{fig:cross_shapes},~\ref{fig:constant_lr},~\ref{fig:sort_or_not_sort},~\ref{fig:randomness_fmnist},~\ref{fig:randomness_macosko},~\ref{fig:randomness_shekhar}, and~\ref{fig:randomness_20newsgroup}. We also used the same implementation when we changed the attraction and the repulsion shapes to those of the alternative methods (Figs.~\ref{fig:mnist_comparison}-~\ref{fig:pacmap_macosko_lrsweep}, unless otherwise stated). The trustworthiness and silhouette scores were computed using the corresponding function from the \texttt{scikit-learn} package.

The mappings shown in Figs.~\ref{fig:randomness},~\ref{fig:repulsion},~\ref{fig:randomness_fmnist},~\ref{fig:randomness_macosko},~\ref{fig:randomness_shekhar},~\ref{fig:randomness_20newsgroup}, and~\ref{fig:pacmap_fig} are rotated to a reference embedding ((a) for each respective Figures). To achieve this, we performed Procrustes alignment of the embeddings by normalizing them (zero mean and unit norm) and then using SciPy's~\citep{2020SciPy-NMeth} \texttt{orthogonal\_procrustes} method to extract rotation and scaling parameters. 

To compare with Neg-$t$-SNE in Fig.~\ref{fig:neg}, we used the original implementation of the contrastive embedding framework for both the UMAP and the Neg-$t$-SNE algorithms (available at \href{https://github.com/berenslab/contrastive-ne}{https://github.com/berenslab/contrastive-ne}).

PaCMAP and LocalMAP embeddings in Figs.~\ref{fig:pacmap_fig},~\ref{fig:localmap_fig},~\ref{fig:pacmap_mnist_lrsweep}~(b)~(right), \ref{fig:pacmap_fmnist_lrsweep}~(b)~(right), and~\ref{fig:pacmap_macosko_lrsweep}~(b)~(right) were obtained using the official PaCMAP package (available at \href{https://github.com/YingfanWang/PaCMAP}{https://github.com/YingfanWang/PaCMAP}).


\end{document}